\documentclass{article}


\usepackage{algorithm}
\usepackage{algorithmic}

\usepackage[T1]{fontenc}
\usepackage[hidelinks]{hyperref}
\usepackage{appendix}
\usepackage{blindtext}
\usepackage{graphicx}
\usepackage{xcolor}
\usepackage{booktabs}
\usepackage{microtype}
\usepackage{xspace}
\usepackage{multirow}
\usepackage{subcaption}
\usepackage{wrapfig}
\usepackage{tabularx}
\usepackage{pifont}

\usepackage{amsmath}
\usepackage{amsthm}
\usepackage{amssymb}
\usepackage{mathtools}
\usepackage{bm}

\usepackage[nonatbib, preprint]{neurips_2022}
\newcommand{\condition}{\ensuremath{\,|\,}}


\newcommand\x{\bm{x}}
\newcommand\s{\bm{s}}
\newcommand\z{\bm{z}}
\newcommand\expparam{\bm{\theta}}
\newcommand\priorparam{\bm{\chi}}
\newcommand\evidence{n}
\newcommand\y{y}
\newcommand{\inputdim}{D}

\newcommand\idata{i\xspace}

\newcommand\tdata{t\xspace}

\newcommand\dataix{^{(\idata)}}
\newcommand\datatx{^{(\tdata)}}

\newtheorem{theorem}{Theorem}
\newtheorem*{theorem*}{Theorem}
\newtheorem{lemma}[theorem]{Lemma}

\DeclareMathOperator{\DNIG}{\mathcal{N}\Gamma^{-1}}
\DeclareMathOperator{\DNormal}{\mathcal{N}}

\DeclareMathOperator{\real}{\mathbb{R}}

\DeclareMathOperator{\prob}{\mathbb{P}}
\DeclareMathOperator{\prior}{\mathbb{Q}}
\DeclareMathOperator{\entropy}{\mathbb{H}}
\DeclareMathOperator{\expectation}{\mathbb{E}}
\DeclareMathOperator{\variance}{Var}

\DeclareMathOperator{\policy}{\pi}

\newtheorem{desiderata}{Desideratum}[section]
\newcommand{\cmark}{\textcolor{green}{\ding{51}}}%
\newcommand{\xmark}{\textcolor{red}{\ding{55}}}%




\usepackage[utf8]{inputenc} 
\usepackage[T1]{fontenc}    
\usepackage{hyperref}       
\usepackage{url}            
\usepackage{booktabs}       
\usepackage{amsfonts}       
\usepackage{nicefrac}       
\usepackage{microtype}      
\usepackage{xcolor}         

\usepackage[backend=bibtex,natbib=true,style=ieee,mincitenames=1,maxcitenames=2,maxbibnames=20,url=false,doi=false]{biblatex}
\AtEveryBibitem{
    \ifentrytype{inproceedings}{
         \clearlist{address}
         \clearlist{publisher}
         \clearname{editor}
         \clearlist{organization}
         \clearfield{url}  
         \clearfield{doi}  
         \clearfield{pages}  
         \clearlist{location}
         \clearlist{month}
     }{}
     \ifentrytype{article}{
         \clearlist{address}
         \clearlist{publisher}
         \clearname{editor}
         \clearlist{organization}
         \clearfield{url}  
         \clearfield{doi}  
         \clearlist{location}
     }{}
 }
 \addbibresource{bibliography.bib}

\title{Disentangling Epistemic and Aleatoric Uncertainty in Reinforcement Learning}

%

\author{Bertrand Charpentier$^{1}$, Ransalu Senanayake$^{2}$, Mykel Kochenderfer$^{2}$, Stephan G\"unnemann$^{1}$\\
   Department of Informatics \& Munich Data Science Institute, Technical University of Munich$^{1}$\\
   Stanford Intelligent Systems Laboratory, Stanford University$^{2}$\\
   \texttt{\{charpent, guennemann\}@in.tum.de}, \texttt{\{ransalu, mykel\}@stanford.edu} \\
}

\begin{document}

\maketitle

\begin{abstract}
    
    \looseness=-1
    Characterizing \emph{aleatoric} and \emph{epistemic} uncertainty on the predicted rewards can help in building reliable reinforcement learning (RL) systems. Aleatoric uncertainty results from the irreducible environment stochasticity leading to inherently risky states and actions. Epistemic uncertainty results from the limited information accumulated during learning to make informed decisions. Characterizing aleatoric and epistemic uncertainty can be used to speed up learning in a training environment, improve generalization to similar testing environments, and flag unfamiliar behavior in anomalous testing environments. In this work, we introduce a framework for disentangling aleatoric and epistemic uncertainty in RL. \textbf{(1)} We first define four \emph{desiderata} that capture the desired behavior for aleatoric and epistemic uncertainty estimation in RL at both training and testing time. \textbf{(2)} We then present four RL \emph{models} inspired by supervised learning (i.e., Monte Carlo dropout, ensemble, deep kernel learning models, and evidential networks) to instantiate aleatoric and epistemic uncertainty. Finally, \textbf{(3)} we propose a practical \emph{evaluation} method to evaluate uncertainty estimation in model-free RL based on detection of out-of-distribution environments and generalization to perturbed environments. We present \emph{theoretical} and \emph{experimental} evidence to validate that carefully equipping model-free RL agents with supervised learning uncertainty methods can fulfill our desiderata.
    
\end{abstract}

\section{Introduction}
\label{sec:introduction}

\looseness=-1
An agent is expected to satisfy three important properties for a reliable deployment in real-world applications: \textbf{(i)} The agent should learn \emph{fast} with as few episode failures as possible. \textbf{(ii)} The agent should \emph{maintain high reward} when facing new environments similar to the training environment after deployment. \textbf{(iii)} The agent should \emph{flag anomalous environment states} when it does not know what action to take in an unknown environment. These three practically desirable properties translate into three technical properties in reinforcement learning agents. Indeed, a reinforcement learning agent should achieve high \emph{sample efficiency} at training time \cite{sample-efficient-ac}, high \emph{generalization} performance on test environments similar to the training environment \cite{epistemic-pomdp}, and high \emph{Out-Of-Distribution (OOD) detection} scores on environment unrelated to the training task \cite{ood-detection-survey, ood-automotive-perception}. 

\looseness=-1
In this paper, we argue that \emph{aleatoric} and \emph{epistemic} uncertainty are key concepts to achieve these desired practical and technical properties. The aleatoric uncertainty represents the irreducible and inherent stochasticity of the environment. Thus, an environment region with high aleatoric uncertainty is unlikely to be interesting to explore at training time because it could be uninformative (e.g. a sensor is very noisy) or dangerous (e.g. the environment has an unpredictable behavior). In contrast, the epistemic uncertainty represents the lack of information for accurate prediction. Thus, an environment region with high epistemic uncertainty is potentially promising to explore to build a better understanding of the environment (e.g., a state has unknown transition dynamics because it has never been explored).

\begin{figure*}[t]
    \centering
    \vspace{-8mm}
    \includegraphics[width=.65\linewidth]{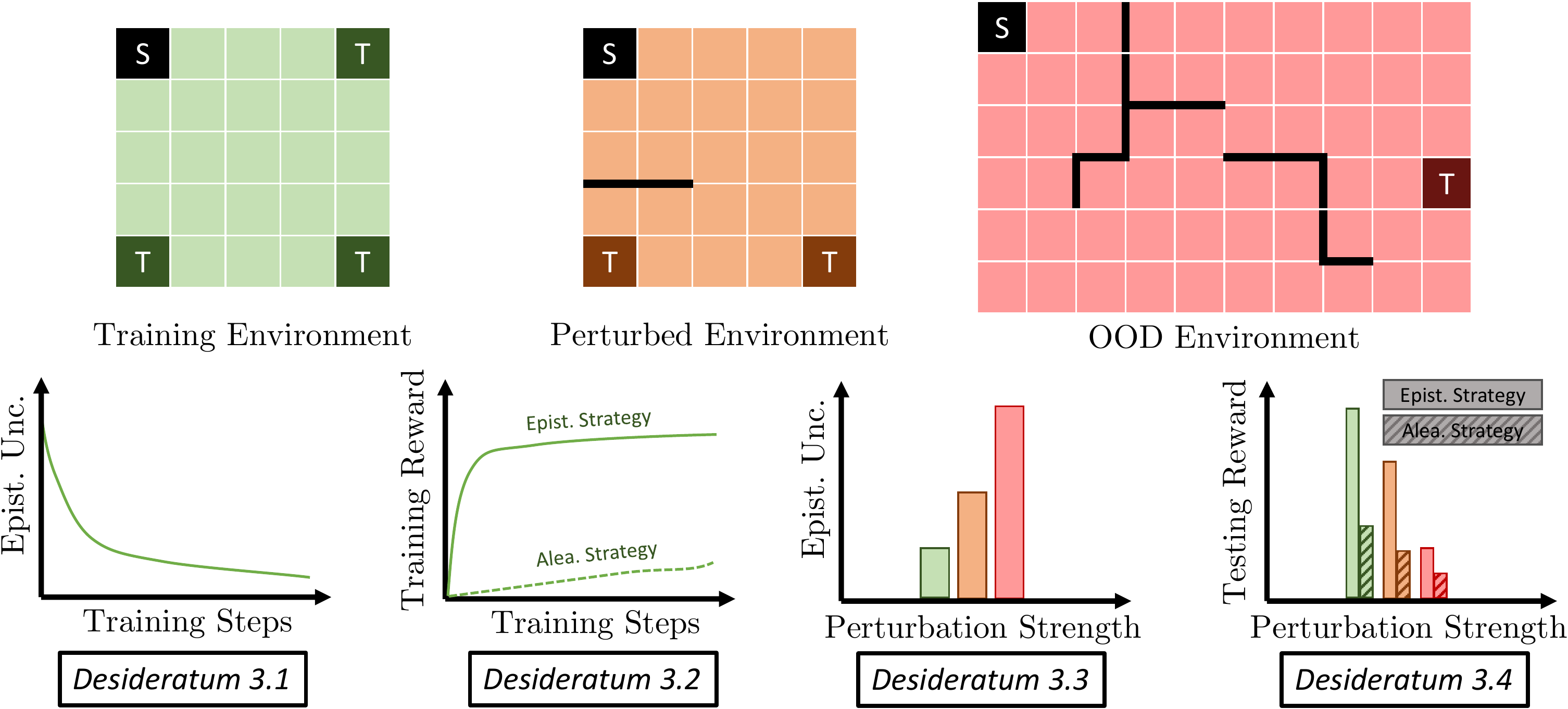}
    \caption{Overview of our proposed desiderata for uncertainty in RL (See sec.~\ref{sec:desiderata}).}
    \vspace{-3mm}
    \label{fig:diagram}
    \vspace{-4mm}
\end{figure*}

\looseness=-1
The core motivation of our work is to disentangle the properties of aleatoric and epistemic uncertainty estimates in RL to build agents with reliable performance in real-world applications. This motivation is similar to supervised learning (SL) where previous works defined \emph{desiderata}, \emph{models}, and \emph{evaluation} methods for aleatoric and epistemic uncertainty \cite{uncertainty-deep-learning, review-uncertainty-dl, dataset-shift, robustness-uncertainty-dirichlet}. Important examples of models using a single or multiple forward passes for uncertainty estimation in SL are MC dropout \cite{dropout}, ensemble \cite{ensembles, hyper-ensembles, batch-ensembles}, deep kernel learning \cite{simple-baseline-uncertainty, due, duq, uceloss}, and evidential networks \cite{postnet, priornet, natpn, evidential-regression}. Further, empirical evaluation of uncertainty estimates in SL focuses \emph{only} on testing time with Out-Of-Distribution (OOD) detection and generalization or detection of shifts \citep{dataset-shift, shifts-dataset}. In contrast to SL, the RL setting is more complex since it cares about the performance of uncertainty estimates at \emph{both} training and testing time.

\looseness=-1
\textbf{Our Contributions.} In this work, we propose a framework for aleatoric and epistemic uncertainty estimation in RL: \textbf{(Desiderata)} We explicitly define four desiderata for uncertainty estimation in RL at both \emph{training} and \emph{testing time} (See fig.~\ref{fig:diagram}). They cover the behavior of \emph{aleatoric} and \emph{epistemic} uncertainty estimates w.r.t. the sample efficiency in the training environment and, w.r.t. the generalization performance in different testing environments. \textbf{(Models)} We carefully combine a diverse set of uncertainty estimation methods in SL (i.e. MC dropout, ensemble, deep kernel learning, and evidential networks) with Deep Q-Networks (DQN) \cite{dqn}, a ubiquitous RL model that is not equipped with uncertainty estimate by default. These combinations require a \emph{minimal} modification to the training procedure of the RL agent. We discuss \emph{theoretical} evidence on the ability of these combinations to fulfill the uncertainty desiderata. \textbf{(Evaluation)} Finally, we also propose a \emph{practical} methodology to evaluate uncertainty in RL based on OOD environments and domain shifts.

\section{Problem Setup}
\label{sec:setup}

\looseness=-1
\textbf{Uncertainty in SL.} The objective of SL is to accurately predict the output \smash{$\y\dataix$} given an input \smash{$\x\dataix$} with index \smash{$\idata$}. It differentiates between two types of uncertainty: the uncertainty on the label prediction \smash{$\y\dataix$} described by the \emph{aleatoric distribution} \smash{$\prob(\y\dataix \mid \expparam\dataix)$} with parameters \smash{$\expparam\dataix$} estimated from the input $\x\dataix$, and the uncertainty on the predicted label distribution parameters $\expparam\dataix$ described by the \emph{epistemic distribution} \smash{$\prior(\expparam\dataix \mid \priorparam\dataix)$} with parameters $\priorparam\dataix$ estimated from the input $\x\dataix$. Intuitively, the variations of the aleatoric distribution will be high when the input \smash{$\x\dataix$} does not provide discriminative information to determine the label \smash{$\y\dataix$}. The variations of the epistemic distribution will be high when the input \smash{$\x\dataix$} does not provide enough information to determine the label distribution \smash{$\prior(\expparam\dataix \mid \priorparam\dataix)$} described by the parameters \smash{$\priorparam\dataix$}. Thus, the aleatoric uncertainty can be measured in practice by computing the entropy, i.e \smash{$u_\text{alea}(\x\dataix)=\entropy(\prob(\y\dataix \condition \expparam\dataix))$}, or the variance, i.e. \smash{$u_\text{alea}(\x\dataix)= \variance(\prob(\y\dataix \condition \expparam\dataix))$}, of the aleatoric distribution, while the epistemic uncertainty can be measured by computing the entropy, i.e. \smash{$u_\text{epist}(\x\dataix)=\entropy(\prior(\expparam\dataix \mid \priorparam\dataix)$}, of the epistemic distribution \cite{priornet, postnet, natpn}. Further, SL also distinguishes between sampling-based which often require multiple forward passes for uncertainty estimation, and sampling-free methods which often require a single forward pass for uncertainty estimation. Sampling-based methods like MC drop-out \cite{dropout} and ensemble \cite{ensembles, hyper-ensembles, batch-ensembles} estimate uncertainty by aggregating statistics (e.g. mean and variance) from different samples which \emph{implicitly} describe the epistemic distribution \smash{$\prior(\expparam\dataix \mid \priorparam\dataix)$}. Sampling-free methods like deep kernel learning models \cite{simple-baseline-uncertainty, due, duq, uceloss} and evidential networks \cite{postnet, priornet, natpn, evidential-regression} estimate uncertainty by \emph{explicitly} parametrizing the epistemic distributions \smash{$\prior(\expparam\dataix) \mid \priorparam\dataix)$} with known distributions such as Normal and Normal Inverse-Gamma (NIG) distributions, thus enabling efficient and closed-form computation of the distribution statistics (incl. mean, variance or entropy).

\looseness=-1
\textbf{Uncertainty in RL.} We consider the task of learning RL policies interacting with an environment with fully observed states at every time step $t$. The environment is described by a Markov Decision Process (MDP) $(S, A, r, T, \rho, \gamma)$ where $S$ is the state space, $A$ is the action space, \smash{$r(\s\datatx, a\datatx)$} is the reward associated to the action \smash{$a\datatx$} and state \smash{$\s\datatx$}, \smash{$T(\s^{(t+1)}|\s\datatx, a\datatx)$} is the transition probability, \smash{$\rho(\s^{(0)})$} is the initial state distribution, and $\gamma$ is the discount factor. Given the current state \smash{$\s\datatx$}, our goal is to learn a policy predicting the action \smash{$a\datatx$} leading to the highest reward \smash{$\y\datatx=r(\s\datatx, a\datatx)$} in addition to the aleatoric uncertainty \smash{$u_\text{alea}(\s\datatx, a\datatx)$} and the epistemic uncertainty \smash{$u_\text{epist}(\s\datatx, a\datatx)$} on the predicted reward. Similar to SL, the aleatoric and epistemic distributions can be instantiated with \smash{$\prob(\y\datatx \mid \expparam\datatx)$} and \smash{$\prior(\expparam\datatx) \mid \priorparam\datatx)$} where the predicted value is the future reward i.e. \smash{$\y\datatx=r(\s\datatx, a\datatx)$}. Intuitively, the variation of the aleatoric distribution will be high when the current state $\s\datatx$ and action $a\datatx$ only contains noisy information to determine the future reward \smash{$\y\datatx=r(\s\datatx, a\datatx)$} while the epistemic uncertainty will be high when the current state \smash{$\s\datatx$} and action \smash{$a\datatx$} does not provide enough information according to the model to determine the reward distribution \smash{$\prior(\expparam\datatx \mid \priorparam\datatx)$} described by the parameters \smash{$\priorparam\datatx$}. Finally, we consider three action selection strategies which is a crucial choice for exploration and generalization in RL: the \emph{epsilon-greedy} strategy \cite{epsilon-greedy} which selects the action with the highest predicted reward with probability $1-\epsilon$ and samples a random action otherwise, the \emph{sampling-aleatoric} strategy which takes the action with the highest predicted reward based on one aleatoric distribution sample i.e. \smash{$a\datatx = \max_{a} \y\datatx$} where \smash{$\y\datatx \sim \prob(\y\datatx | \expparam\datatx)$}, or the \emph{sampling-epistemic} strategy which takes the action with the highest predicted reward based on one epistemic distribution sample i.e. \smash{$a\datatx = \max_{a} \expectation_{\prob(\y\datatx | \expparam\datatx)}[\y\datatx]$} where \smash{$\expparam\datatx \sim \prior(\expparam\datatx | \priorparam\datatx)$}. The sampling-epistemic strategy corresponds to the Thompson sampling strategy\cite{thompson-sampling}.

\section{Desiderata for Uncertainty Quantification in RL}
\label{sec:desiderata}

\looseness=-1
In this section, we \emph{explicitly} define four intuitive and general desiderata that capture the desired behavior for uncertainty estimates in our RL setup. The desiderata cover \emph{aleatoric} and \emph{epistemic} uncertainty at both \emph{training} and \emph{testing} time. The first distinction differentiates between aleatoric and epistemic uncertainty which are commonly used concepts in SL \cite{uncertainty-deep-learning, priornet, natpn}. In contrast, RL mostly focuses on measuring aleatoric uncertainty with risk-sensitive policy or distributional RL \cite{distributional-rl-prespective, distributional-rl, iqn}. The second distinction differentiates between training and testing time relevant to sample efficiency and generalization in RL. In contrast, SL mostly focuses on testing time performance.

\looseness=-1
\textbf{Training Time.} We describe the desired behavior of uncertainty estimates at training time. First, we describe the desired uncertainty behavior when observing more samples of the training environment.
\begin{desiderata}
    \label{ax:training_state}
    An agent training longer on states sampled from one specific environment should become more epistemically confident when predicting actions on states sampled from the same specific environment.
\end{desiderata}
\vspace{-2mm}
Intuitively, an agent observing more samples from the same environment distribution should accumulate more knowledge through time thus being more epistemically certain. In practice, des.~\ref{ax:training_state} expresses that the epistemic uncertainty estimates should reflect the accumulated knowledge, and thus the convergence, of the agent during training. We test des.~\ref{ax:training_state} in the experiments (see sec.~\ref{sec:experiments}) by tracking the epistemic uncertainty at training time. Second, we describe the behavior of the total reward when selecting actions based on uncertainty estimates at training time.
\begin{desiderata}
    \label{ax:training_strategy}
    All else being equal, an agent selecting actions with the sampling-aleatoric strategy at training time should achieve lower sample efficiency than an agent selecting actions with the sampling-epistemic strategy.
\end{desiderata}
Intuitively, an agent exploring states with more (irreducible) aleatoric uncertainty would gain less knowledge about the environment dynamic than an agent exploring states with high epistemic uncertainty where the agent lacks knowledge. However, there is an important trade-off between over- or under-exploring epistemically uncertain actions which could lead to lower sample efficiency. The sampling-epistemic strategy, which corresponds to Thompson sampling \cite{thompson-sampling}, mitigates this exploration-exploitation problem by sampling action w.r.t. the epistemic distribution. Thompson sampling has already \emph{empirically} demonstrated high sample efficiency in deep RL problems \cite{dropout} and provably achieve low regret in many decision-making problems like multi-arms Bandit \cite{thompson-sampling-mab, thompson-sampling-information}. In practice, des.~\ref{ax:training_strategy} suggests that an agent should use the sampling-epistemic strategy for a better exploration-exploitation trade-off. We test des.~\ref{ax:training_strategy} in the experiments (see sec.~\ref{sec:experiments}) by comparing the sample efficiency of the sampling-aleatoric and the sampling-epistemic strategies during training.

\looseness=-1
\textbf{Testing Time.} We describe the desired behavior of uncertainty estimates at testing time. First, we describe the desired uncertainty behavior when observing samples from an environment different from the training environment.
\begin{desiderata}
    \label{ax:testing_state}
    At testing time, epistemic uncertainty should be greater in environments that are very different from the original training environments.
\end{desiderata}
\vspace{-2mm}
The environment difference could be measured with different distances depending on the task or application requirements \cite{domain-shifts-rl}. Intuitively, an agent should be less confident when observing new states at test time that were not used to collect knowledge at training time. In practice, des.~\ref{ax:testing_state} suggests that an agent should be able to use epistemic uncertainty estimates to detect states which are abnormal compared to the states observed during training. We test des.~\ref{ax:testing_state} in the experiments (see sec.~\ref{sec:experiments}) by comparing the epistemic uncertainty of the training environment against the uncertainty in noisy environments at testing time. Noisy environments include environments with completely random states and thus irrelevant to the training task, and environments with different strengths of perturbation on the original states, actions, or transition dynamics. Second, we describe the behavior of the total reward when selecting at testing time actions based on uncertainty aleatoric or epistemic uncertainty estimates.
\begin{desiderata}
    \label{ax:testing_strategy}
    All else being equal, an agent sampling actions from the epistemic uncertainty at training and testing time should generalize better at testing time than an agent sampling actions from the aleatoric uncertainty.
\end{desiderata}
\vspace{-2mm}
Intuitively, an agent exploring more epistemically uncertain states at training time would collect more knowledge about the environment, thus generalizing to more states at testing time. Further, since the environment dynamic is not directly observed, an agent should account for the epistemic certainty on the current state to take actions that generalize better at testing time. In particular, it has been shown that the Bayes-optimal Markovian policy at testing time is stochastic in general due to the partially observed MDP dynamic sometimes called epistemic POMDP \cite{epistemic-pomdp}. In practice, des.~\ref{ax:testing_strategy} suggests that an agent should use the sampling-epistemic strategy for more robust generalization performance. We test des.~\ref{ax:testing_strategy} in the experiments (see sec.~\ref{sec:experiments}) by comparing the reward obtained by the sampling-aleatoric and the sampling-epistemic strategies at testing time. The two latter desiderata \ref{ax:testing_state} and \ref{ax:testing_strategy} express an important trade-off between assigning high uncertainty and generalizing to new test environments. Since an agent cannot generalize to all new environment because of the No Free Lunch Theorem \cite{no-free-lunch-theorem-optimization}, an agent should assign higher uncertainty to environments where it does not generalize. We jointly test des.~\ref{ax:testing_state} and des.~\ref{ax:testing_strategy} in the experiments (see sec.~\ref{sec:experiments}) by tracking the reward and the uncertainty estimates in test environments with different perturbation strengths.

\section{Models for Uncertainty Quantification in RL}
\label{sec:models}

\looseness=-1
Model-free RL agents commonly rely on learning the expected return \smash{$Q^{\policy}(\s^{(t)}, a^{(t)})$} associated with taking action \smash{$a^{(t)}$} in state \smash{$\s^{(t)}$} and then following a policy $\policy$. It is defined by the Bellman equation:
    $Q^{\policy}(\s^{(t)}, a^{(t)}) = r(\s^{(t)}, a^{(t)}) + \gamma \expectation_{T, \policy} [Q^{\policy}(\s^{(t+1)}, a^{(t+1)})]$.
Similarly, the optimal policy \smash{$\policy^*$} achieving the highest expected reward satisfies  the optimal Bellman equation \cite{dynamic-programming}:
\begin{equation}
\label{eq:optimal_bellman_equation}
    Q^{\policy^*}(\s^{(t)}, a^{(t)}) = r(\s^{(t)}, a^{(t)}) + \gamma \expectation_T [\max_{a^{(t+1)}} Q^{\policy^*}(\s^{(t+1)}, a^{(t+1)})]
\end{equation}
However, the exact computation of the optimal $Q$-value is often intractable for large action or state spaces. Therefore, deep RL agents like DQN \cite{dqn}, PPO \cite{ppo} and A2C \cite{a2c} 
aim at approximating the optimal $Q$-value \smash{$Q^{\policy^*}(\s^{(t)}, a^{(t)})$} with a neural network \smash{$f_\theta(\s^{(t)}, a^{(t)})$} with parameter \smash{$\bm{\theta}$}. In particular, DQN enforces eq.~\ref{eq:optimal_bellman_equation} by minimizing the squared temporal difference (TD) error \smash{$\|r(\s^{(t)}, a^{(t)}) + \gamma \max_{a^{(t+1)}} f_{\bm{\theta}'}(\s^{(t+1)}, a^{(t+1)}) - f_{\bm{\theta}}(\s^{(t)}, a^{(t)})\|_2$}, where \smash{$f_{\bm{\theta}}$} is the learned prediction network and \smash{$f_{\bm{\theta}'}$} is the frozen target network regularly updated with the prediction network parameters during training. The TD error minimization is similar to SL regression with a MSE loss between the prediction \smash{$\hat{\y}\datatx = f_{\bm{\theta}}(\s^{(t)}, a^{(t)})$} and the target \smash{$\y\datatx = r(\s^{(t)}, a^{(t)}) + \gamma f_{\bm{\theta}'}(\s^{(t+1)}, a^{(t+1)})$} with the key difference that the exploration strategy select the targets that will be used during training.

\looseness=-1
Model-free Deep RL agents often show important limitations for uncertainty estimation because of their neural network architecture choice. For, instance, while DQN only outputs a single scalar representing the mean $Q$-value with no uncertainty estimates, PPO and A2C policies parameterized with standard ReLU networks would provably produce overconfident predictions for extreme input states \cite{overconfident-relu}. In this work, we focus on equipping the widely used DQN RL agent with reliable uncertainty estimates. To this end, we combine DQN with four SL architectures for uncertainty estimation (incl. MC dropout, ensemble, deep kernel learning, and evidential networks) covering a diverse range of sampling-based and sampling-free methods. These four DQN combinations allow to instantiate both aleatoric and epistemic uncertainty with minimal modifications to the training procedure.  We provide a summary of the uncertainty properties of these models in Tab.~\ref{tab:summary_models}.

\begin{table*}[ht!]
    \vspace{-3mm}
	\caption{Summary of the uncertainty properties of the models.}
	\label{tab:summary_models}
	\vspace{-2mm}
	\centering
	\resizebox{.89\textwidth}{!}{%
\begin{tabular}{@{}lcccc@{}}
\toprule
 & DropOut & Ensemble  & Deep Kernel Learning  & Evidential Networks \\
\midrule
Uncertainty concentration (Des.~\ref{ax:training_state}) & \xmark & \xmark & \xmark & \cmark\\
\midrule
Alea. vs epist. sampling at training time (Des.~\ref{ax:training_strategy})  & \cmark & \cmark & \xmark & \cmark\\
\midrule
OOD detection (Des.~\ref{ax:testing_state}) & \xmark & \xmark & \cmark & \cmark\\
\midrule
Alea. vs epist. sampling at testing time (Des.~\ref{ax:training_strategy}) & \cmark & \cmark & \xmark & \cmark\\
\bottomrule
\end{tabular}}
	\vspace{-3mm}
\end{table*}

\looseness=-1
\textbf{MC Dropout.} DQN is combined with MC Dropout \cite{dropout} in three steps: \textbf{(1)} it samples $K$ independent set of model parameters \smash{$\{\bm{\theta}_k\}_{k=1}^K$} by dropping activations with probability $p$, \textbf{(2)} it performs $K$ forward passes \smash{$\mu_k, \sigma_k = f_{\bm{\theta}_k}(s^{(t)}, a^{(t)})$}, and \textbf{(3)} it aggregates predictions to form the mean prediction \smash{$\mu(s^{(t)}, a^{(t)}) = \frac{1}{K}\sum_{k=1}^K \mu_k$}, the aleatoric uncertainty estimate \smash{$u_\text{alea}(s^{(t)}, a^{(t)}) = \frac{1}{K}\sum_{k=1}^K \sigma_s$}, and the epistemic uncertainty estimate \smash{$u_\text{epist}(s_t, a_t) = \frac{1}{K}\sum_{k=1}^K (\mu_k - \mu(s^{(t)}, a^{(t)}))^2$}. In this case, the aleatoric distribution is Gaussian while the epistemic distribution is implicitly represented by the sampled parameters \smash{$\{\theta_k\}_{k=1}^K$}. Further, the sampling-epistemic strategy is achieved by performing one single forward pass with a single set of sampled model parameters. This is similar to the Thompson sampling strategy used by \citet{dropout}. During training, we train the neural network parameters $\theta$ by using a Gaussian negative log-likelihood loss. The combination of DQN and dropout has been shown to practically improve sample efficiency \cite{dropout}. However, dropout has multiple limitations. First, the dropout uncertainty estimates \emph{provably} do not concentrate with more observed data \cite{randomized-prior-functions}, thus potentially violating des.~\ref{ax:training_state}. Second, there is no guarantee that dropout produce meaningful uncertainty estimates for extreme input states with a finite number of samples $K$, thus potentially violating des.~\ref{ax:testing_state}. Third, dropout might be computationally expensive for large $K$ value since it would require many forward passes for uncertainty estimation.

\looseness=-1
\textbf{Ensemble.} DQN is combined with ensembles \cite{ensembles} in three steps: \textbf{(1)} it trains $K$ independent models with parameters \smash{$\{\theta_k\}_{k=1}^K$}, \textbf{(2)} it performs $K$ forward passes \smash{$\mu_k, \sigma_k = f_{\bm{\theta}_k}(s^{(t)}, a^{(t)})$}, and \textbf{(3)} it aggregates predictions to form the mean prediction \smash{$\mu(\s^{(t)}, a^{(t)}) = \frac{1}{K}\sum_{k=1}^K \mu_k$}, the aleatoric uncertainty estimate \smash{$u_\text{alea}(s^{(t)}, a^{(t)}) = \frac{1}{K}\sum_{k=1}^K \sigma_s$}, and the epistemic uncertainty estimate \smash{$u_\text{epist}(s_t, a_t) = \frac{1}{K}\sum_{k=1}^K (\mu_k - \mu(s^{(t)}, a^{(t)}))^2$}. In this case, the aleatoric distribution is Gaussian while the epistemic distribution is implicitly represented by the parameters of the $K$ networks \smash{$\{\bm{\theta}_k\}_{k=1}^K$}. Further, the sampling-epistemic strategy is achieved by performing one single forward pass with one randomly selected network. This is similar to the Thompson sampling strategy used by \citet{bootstrapped-dqn}. We train the $K$ independent neural network parameters \smash{$\theta_k$} with a Gaussian negative log-likelihood loss. However, ensemble has multiple limitations. First, while the combination of DQN with bootstrapped ensemble and prior functions has been \emph{empirically} shown to improve learning for complex tasks with sparse rewards \cite{bootstrapped-dqn, randomized-prior-functions}, there is no explicit theoretical or empirical evidence that their uncertainty estimates concentrate with more observed data. Second, there is no guarantee that ensembles produce meaningful uncertainty estimates for extreme input states with a finite number of samples $K$, thus potentially violating des.~\ref{ax:testing_state}. Third, ensemble is computationally expensive for large $K$ value since it would require many forward passes and many neural networks. 

\looseness=-1
\textbf{Deep Kernel Learning.} DQN is combined with deep kernel learning \cite{due} in three steps: \textbf{(1)} it predicts one latent representation of each input state i.e. \smash{$\z^{(t)} = f_{\bm{\theta}}(\s^{(t)})$}, and \textbf{(2)} one Gaussian Process per action $a$ defined from a fixed set of $K$ learnable inducing points \smash{$\{\bm{\phi}_{a,k}\}_{k=1}^{K}$} and a predefined positive definite kernel \smash{$\kappa(\cdot, \cdot)$} predicts the mean \smash{$\mu(\s^{(t)}, a)$} and the variance \smash{$\sigma(\s^{(t)}, a)$} of a Gaussian distribution. We train the neural network parameters $\bm{\theta}$ and the inducing points \smash{$\{\bm{\phi}_{a,k}\}_{k=1}^{K}$} jointly with a variational ELBO loss similarly to \cite{due}. In this case, the epistemic distribution is Gaussian \cite{simple-baseline-uncertainty}, i.e. \smash{$u_\text{epist}(s_t, a_t) = \entropy(\DNormal(\mu(\s^{(t)}, a^{(t)}), \sigma(\s^{(t)}, a^{(t)})))$}. Indeed, we show \emph{theoretically} that epistemic uncertainty increases far from training data (see app.~\ref{app:proofs}). Thus, the combination of DQN and deep kernel learning does not suffer from arbitrary uncertainty estimates for extreme input states contrary to ReLU networks \cite{overconfident-relu}. However, one of the limitation of deep kernel learning is that it does not disentangle \emph{aleatoric} and \emph{epistemic} uncertainty. This is similar to deep kernel learning methods in SL \cite{simple-baseline-uncertainty, due, duq}.

\looseness=-1
\textbf{Evidential Networks.} The combination of DQN and the posterior networks \cite{natpn, postnet}  which belong to the class of evidential networks consists in three steps: \textbf{(1)} an encoder $ f_{\bm{\theta}}$ predicts one latent representation of each input state i.e. \smash{$\z^{(t)} = f_{\bm{\theta}}(\s^{(t)})$}, \textbf{(2)} one normalizing flow density estimator \smash{$\prob(. \condition \bm{\omega}_{a})$} and one linear decoder \smash{$g_{\bm{\psi}_{a}}$} per action $a$ predict a Normal Inverse-Gamma distribution \smash{$\prior(\priorparam(\s^{(t)}, a), \evidence(\s^{(t)}, a))$} with parameters \smash{$\priorparam(\s^{(t)}, a) = g_{\bm{\psi}_{a}}(\z^{(t)}, a)$} and \smash{$ \evidence(\s^{(t)}, a) \propto \prob(\z\dataix \condition \bm{\omega}_{a})$}, and \textbf{(3)} it computes the posterior parameters $\priorparam^\text{post}(\s^{(t)}, a) = \frac{\evidence^\text{prior}\priorparam^\text{prior} + \evidence(\s^{(t)}, a)) \priorparam(\s^{(t)}, a)}{\evidence^\text{prior} + \evidence(\s^{(t)}, a))}, \evidence^\text{post}(\s^{(t)}, a)) = \evidence^\text{prior} + \evidence(\s^{(t)}, a)$ where the prior parameters are chosen to enforce high entropy for the prior distribution e.g. \smash{${\priorparam^\text{prior}=(0, 100)^T}, \evidence^\text{prior}=1$} \cite{natpn}. In this case, the epistemic distribution is a Normal Inverse-Gamma distribution and the aleatoric distribution is a Normal distribution \cite{natpn}. We train the neural network parameters $\bm{\theta}$ and $\bm{\psi}$ and the normalizing flow parameters $\bm{\omega}$ jointly with the MSE loss. The entropy of the conjugate prior distribution represents the epistemic uncertainty , i.e. \smash{$u_\text{epist}(s\datatx, a\datatx) = \entropy(\DNIG(\priorparam(\s\datatx, a\datatx), \evidence(\s\datatx, a\datatx)))$}. The entropy of the likelihood distribution represents the aleatoric uncertainty, i.e. \smash{$u_\text{alea}(s\datatx, a\datatx) = \entropy(\DNormal(\mu(\s\datatx, a\datatx), \sigma(\s\datatx, a\datatx)))$}. Indeed, it has been showed \emph{theoretically} that epistemic uncertainty increases far from training data (see app.~\ref{app:proofs}) \cite{natpn}. Thus, the combination of DQN and posterior networks does not suffer from arbitrary uncertainty estimates for extreme input states contrary to ReLU networks \cite{overconfident-relu}.

\section{Evaluation of Uncertainty Quantification in RL}
\label{sec:experiments}

\looseness=-1
In this section, we provide an extensive evaluation of uncertainty estimation for model-free RL. It compares four uncertainty estimation methods for model-free RL in three environments. First, we evaluate the uncertainty predictions at \emph{training time} to assess the uncertainty concentration (des.~\ref{ax:training_state}) and the sample efficiency of the uncertainty-guided exploration-exploitation strategy (des.~\ref{ax:training_strategy}). Second, we evaluate the uncertainty estimates at \emph{testing time} to assess the OOD detection performances (des.~\ref{ax:testing_state}) and the generalization performances of the uncertainty-guided decision strategy (des.~\ref{ax:testing_strategy}). In particular, we evaluate the trade-off between the generalization performance and the detection performance in new perturbed test environments.

\looseness=-1
\textbf{Models.} We consider the four uncertainty models MC Dropout \textbf{(DropOut)}, \textbf{Ensemble}, deep kernel learning \textbf{(DKL)} and the evidential model based on Posterior Networks \textbf{(PostNet)} combined with the DQN RL policy (see sec.~\ref{sec:models}). In this work, we focus on DQN \cite{dqn} since it is a widely used model-free RL agent which does not provide any uncertainty estimates by default. All models use the same encoder architecture and DQN hyper-parameters. We performed a grid search over all hyper-parameters. We compute the mean and standard error of the mean over 5 seeds. Further details are given in app.~\ref{app:models-details}.

\looseness=-1
\textbf{Environments.} We used three training environments \textbf{CartPole} \cite{cartpole}, \textbf{Acrobot} \cite{acrobot1, acrobot2} and \textbf{LunarLander} \cite{lunarlander1, lunarlander2} from the Open AI gym environments \cite{gym}. \citet{assessing-generalization-rl} also used similar environment to assess generalization in RL. We focus on these environments since they turn out to be already challenging settings for the uncertainty methods and the sampling strategies. Some methods and strategies are indeed already unable to achieve high performance for sample efficiency, generalization, and OOD detection. We provide further details on the environments in app.~\ref{app:environments-details}. \textit{\underline{OOD environments:}} The states, actions, and transition dynamics of the OOD environments should not be relevant to the original training environment task, thus being a reasonable failure mode. To this end, the input state is composed of Gaussian noise at every time step independently of the previous actions. \textit{\underline{Perturbed environments:}} These environments are perturbed versions of the original training environment with different perturbation strengths. We separately perturb the \emph{state} space, the \emph{action} space, and the \emph{transition} dynamics with different strengths of Gaussian or uniform noises. These perturbations follow the MDP structure of the environment as proposed by the formal framework for domain shifts presented in \cite{domain-shifts-rl}. We did not consider perturbation on the initial state only, which would be a weaker version of the state perturbations, and perturbations on the reward function which would not affect the model at testing time. Further details are given in app.~\ref{app:environments-details}.

\looseness=-1
\textbf{Training Time.} First, we compare the sample efficiency and the uncertainty predictions of the four uncertainty methods using the epsilon-greedy exploration-exploitation strategy at training time. We normalize the epistemic uncertainty in $[0, 1]$ with min-max normalization to compute the relative epistemic uncertainty. It allows us to easily compare the trend of the epistemic uncertainty of all models. We show the key results for Cartpole in fig.~\ref{fig:model-training-testing-performance-cartpole-main} and the detailed results for CartPole, Acrobot and LunarLander in fig.~\ref{fig:model-training-performance-cartpole}, \ref{fig:model-training-performance-acrobot}, \ref{fig:model-training-performance-lunarlander} in app.~\ref{app:additional-experiments}. We observe that all methods achieve similar sample efficiency. Ensemble with epsilon-greedy strategy struggles to maintain high reward on CartPole. This can be intuitively explained by the under-exploration of the epsilon-greedy strategy as also observed by \citet{randomized-prior-functions, dropout}. Further, we observe that only the epistemic uncertainty estimates of the combination of DQN and PostNet decreases during training. Thus, PostNet \emph{empirically} validates des.~\ref{ax:training_state}. In contrast, the epistemic uncertainty estimates of other methods increase or do not converge. This corroborates with the findings of \cite{randomized-prior-functions} which \emph{theoretically} shows that the uncertainty estimates of dropout and ensemble might not converge even on simple tasks.

\begin{figure}
    \centering
    \vspace{-5mm}
        \begin{subfigure}{.5\textwidth}
        \includegraphics[width=\textwidth]{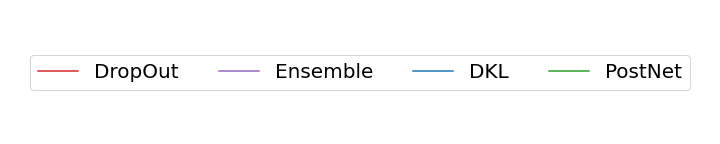}
    \end{subfigure}
    \vspace{-5mm}
    
    \begin{subfigure}{.245\textwidth}
        \includegraphics[width=\textwidth]{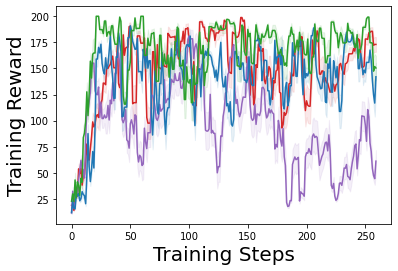}
        \vspace{-5mm}
        \caption{}
         \label{fig:model-training-reward-cartpole}
   \end{subfigure}
    \begin{subfigure}{.245\textwidth}
        \includegraphics[width=\textwidth]{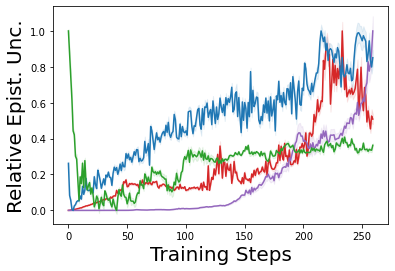}
        \vspace{-5mm}
        \caption{}
        \label{fig:model-training-uncertainty-cartpole}
    \end{subfigure}
    \begin{subfigure}{.245\textwidth}
        \includegraphics[width=\textwidth]{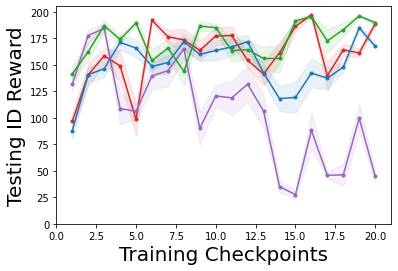}
        \vspace{-5mm}
        \caption{}
        \label{fig:model-testing-reward-cartpole}
    \end{subfigure}
    \begin{subfigure}{.245\textwidth}
        \includegraphics[width=\textwidth]{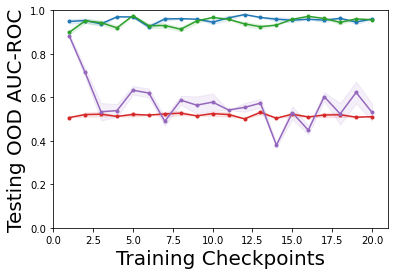}
        \vspace{-5mm}
        \caption{}
        \label{fig:model-testing-ood-cartpole}
    \end{subfigure}
    \vspace{-3mm}
    \caption{Comparison of the training performance (\ref{fig:model-training-reward-cartpole},  \ref{fig:model-training-uncertainty-cartpole}) and testing performance (\ref{fig:model-testing-reward-cartpole},  \ref{fig:model-testing-ood-cartpole}) of the four uncertainty methods using epsilon-greedy strategies on CartPole. Ideally, an uncertainty aware-model in RL should achieve high reward with few training samples at training and testing time, a decreasing epistemic uncertainty at training time and high OOD detection scores at testing time.}
    \label{fig:model-training-testing-performance-cartpole-main}
    \vspace{-5mm}
\end{figure}

\looseness=-1
Second, we compare the sample and episode efficiency of the \emph{sampling-epistemic} and the \emph{sampling-aleatoric} strategies for each model during training. We show the results for Acrobot in fig.~\ref{fig:strategy-training-performance-acrobot-main}, and additional results for CartPole and LunarLander in fig.~\ref{fig:strategy-training-performance-cartpole}, \ref{fig:strategy-training-performance-lunarlander} in app.~\ref{app:additional-experiments}. We observe that all models achieve high rewards by using the sampling-epistemic strategy. In particular, we observed that Ensemble with epistemic sampling achieves more stable rewards than with epsilon-greedy which aligns with observations in \cite{bootstrapped-dqn}. Further, we observe that all models instantiating both aleatoric and epistemic uncertainty achieve significantly better sample efficiency with sampling-epistemic than sampling-aleatoric. Contrary to the sampling-epistemic strategy, the sampling-aleatoric strategy intuitively fails at visiting new under-explored states/actions, thus achieving low and unstable rewards. Hence, Drop-Out, Ensemble and PostNet \emph{empirically} validate des.~\ref{ax:training_strategy}. Thus, disentangling aleatoric and epistemic uncertainty can speed learning in a training environment. Further, the sampling-epistemic strategy requires fewer finished episodes on CartPole and LunarLander. This represents a more reliable training for these two environments since each finished episode translates into a failure and a restart of the systems (see app.~\ref{app:additional-experiments}).

\begin{figure}
    \centering
    \vspace{-3mm}
    \begin{subfigure}{.45\textwidth}
        \includegraphics[width=\textwidth]{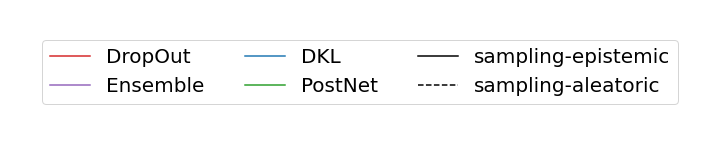}
    \end{subfigure}
    \vspace{-3mm}
    
    \begin{subfigure}{.245\textwidth}
        \includegraphics[width=\textwidth]{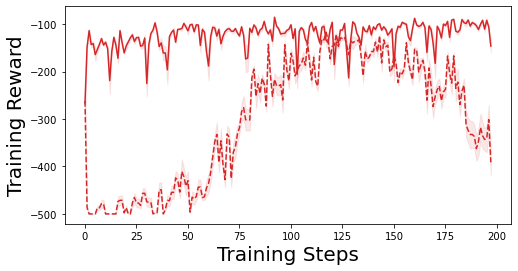}
    \end{subfigure}
    \begin{subfigure}{.245\textwidth}
        \includegraphics[width=\textwidth]{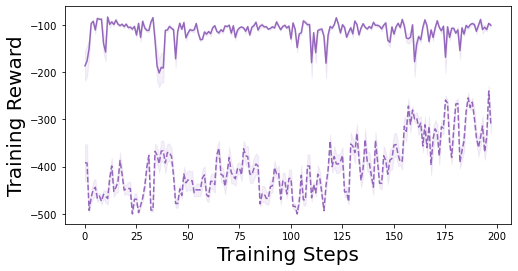}
    \end{subfigure}
    \begin{subfigure}{.245\textwidth}
        \includegraphics[width=\textwidth]{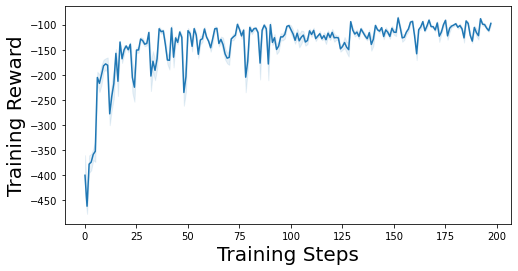}
    \end{subfigure}
    \begin{subfigure}{.245\textwidth}
        \includegraphics[width=\textwidth]{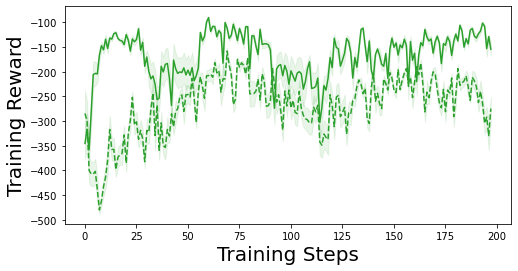}
    \end{subfigure}
    
    \vspace{-3mm}
    \caption{Comparison of the training performance on Acrobot. The four uncertainty methods use the sampling-aleatoric or the sampling-epistemic at training time. Ideally, an uncertainty aware-model should high reward with few samples.}
    \label{fig:strategy-training-performance-acrobot-main}
    \vspace{-5mm}
\end{figure}


\looseness=-1
\textbf{Testing Time.} In this section, we save $20$ models during training to evaluate their performance at testing time. The testing performance can be viewed as the model performance after deployment. First, we evaluate the testing in-distribution (ID) reward in the training environment and the out-of-distribution (OOD) detection performance against the OOD environment composed of fully noisy states. All the methods used the same epsilon-greedy strategy at training time and the action lead to the highest predicted expected return at testing time. The OOD detection performance is measured by comparing the predicted epistemic uncertainty of the states/actions of $10$ episodes with the area under the receiver operating characteristic curve (AUC-ROC). We show the results for CartPole in fig.~\ref{fig:model-training-testing-performance-cartpole-main}, and additional results for Acrobot and LunarLander in fig.~\ref{fig:model-testing-performance-acrobot} and fig.~\ref{fig:model-testing-performance-lunarlander} in app.~\ref{app:additional-experiments}. We observe that DKL and PostNet achieve very high OOD detection scores compared to DropOut and Ensemble. These \emph{empirical} results align with the \emph{theoretical} results stating that DKL and PostNet should assign high uncertainty to states very different from states observed during training. Thus, DKL and PostNet validate des.~\ref{ax:testing_state}. In particular, DKL and PostNet can reliably equip DQN with epistemic uncertainty estimates which can be used to flag anomalous OOD states. In contrast, DropOut and Ensemble achieve poor OOD detection scores. This aligns with \citet{randomized-prior-functions, natpn} which shows on multiple experiments that the uncertainty estimates assigned to OOD inputs by DropOut and Ensemble are not significantly smaller than the uncertainty estimates assigned to inputs close to training data.

\looseness=-1
Second, we compare the testing in-distribution (ID) reward and the out-of-distribution (OOD) detection performance when models use the \emph{sampling-aleatoric} and \emph{sampling-epistemic} strategies at \emph{both} training and testing time. We show the results for the testing reward and the OOD detection scores on the LunarLander in fig.~\ref{fig:strategy-testing-performance-lunarlander}, and additional results on the CartPole and the Acrobot environments in fig.~\ref{fig:strategy-testing-performance-cartpole} and fig.~\ref{fig:strategy-testing-performance-acrobot} in app.~\ref{app:additional-experiments}. We observe that the sampling-epistemic strategy achieves significantly better rewards than the sampling-aleatoric for almost any checkpointed models during training. Thus, all models \emph{empirically} satisfy des.~\ref{ax:testing_state}. These empirical results underline the need to disentangle both aleatoric and epistemic uncertainty for high reward performance at testing time.

\begin{figure}
    \centering
        \vspace{-6mm}
    \begin{subfigure}{.45\textwidth}
        \includegraphics[width=\textwidth]{resources/sampling-legend.png}
    \end{subfigure}
    \vspace{-3mm}
    
    \begin{subfigure}{.245\textwidth}
        \includegraphics[width=\textwidth]{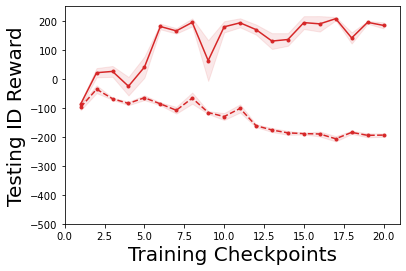}
    \end{subfigure}
    \begin{subfigure}{.245\textwidth}
        \includegraphics[width=\textwidth]{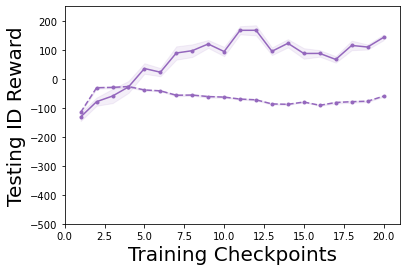}
    \end{subfigure}
    \begin{subfigure}{.245\textwidth}
        \includegraphics[width=\textwidth]{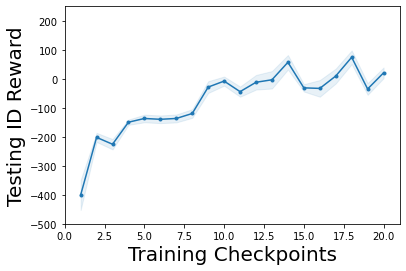}
    \end{subfigure}
    \begin{subfigure}{.245\textwidth}
        \includegraphics[width=\textwidth]{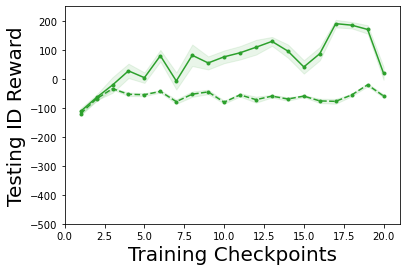}
    \end{subfigure}
    
    \begin{subfigure}{.245\textwidth}
        \includegraphics[width=\textwidth]{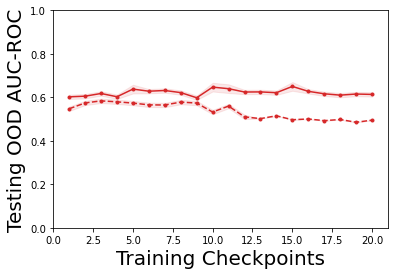}
    \end{subfigure}
    \begin{subfigure}{.245\textwidth}
        \includegraphics[width=\textwidth]{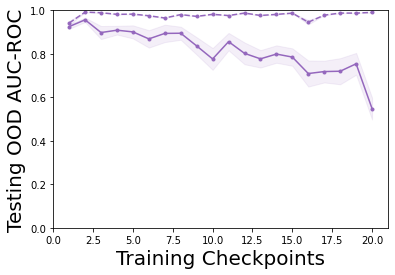}
    \end{subfigure}
    \begin{subfigure}{.245\textwidth}
        \includegraphics[width=\textwidth]{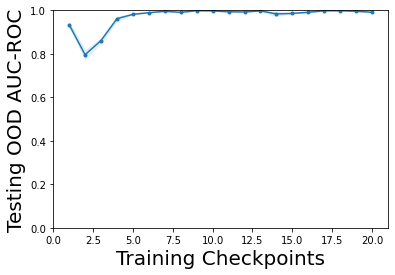}
    \end{subfigure}
    \begin{subfigure}{.245\textwidth}
        \includegraphics[width=\textwidth]{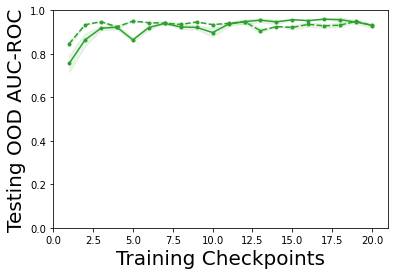}
    \end{subfigure}
        \vspace{-3mm}
    \caption{Comparison of the testing reward and OOD performance on LunarLander. The four uncertainty methods use the sampling-aleatoric or sampling-epistemic strategies at both training and testing time. Ideally, an uncertainty aware-model should achieve high testing reward and high OOD AUC-ROC detection score.}
    \label{fig:strategy-testing-performance-lunarlander}
        \vspace{-6mm}
\end{figure}

\looseness=-1
Third, we compare the \emph{sampling-epistemic} and the \emph{sampling-aleatoric} strategies for each model at testing time. All models use the same epsilon-greedy strategy at training time. We show the key results for the testing reward and the testing epistemic uncertainty on Cartpole with perturbed states in fig.~\ref{fig:strategy-state-shift-testing-performance-cartpole}, and detailed results with other perturbations and environments in app.~\ref{app:additional-experiments}. We observe that stronger state and action perturbations deteriorate the reward performance of all models. This is reasonable since the input state or the output actions become more different from the training environment with stronger perturbations. Further, while the models were trained using the same epsilon-greedy strategy, we observe that the sampling-epistemic strategy generalizes significantly better to all types of perturbed environments than the sampling-aleatoric strategy. In particular, all models achieve high rewards with epistemic sampling on environments with perturbed transitions. Intuitively, sampling-aleatoric select actions with more inherent risk, while the sampling-epistemic select actions accounting for the knowledge accumulated by the agent in the training environment. The generalization capacity of the sampling-epistemic strategy aligns with \cite{epistemic-pomdp} which recast the problem of generalization in RL as solving an epistemic POMDP. Thus, differentiating between aleatoric and epistemic uncertainty can improve generalization. Finally, we observed that DKL and PostNet consistently assign higher epistemic uncertainty to environments with perturbed states which aligns with their theoretical guarantees on extreme input states. The most challenging perturbations are perturbed actions since none of the models provide guarantees for this perturbation type. Overall, DKL and PostNet reliably assign higher epistemic uncertainty to most of the perturbation types. Therefore, DKL and PostNet performs a good trade-off between generalization and detection of new perturbed environments.

\begin{figure}
    \centering
        \vspace{-6mm}
    \begin{subfigure}{.45\textwidth}
        \includegraphics[width=\textwidth]{resources/sampling-legend.png}
    \end{subfigure}
    \vspace{-3mm}
    
    \begin{subfigure}{.245\textwidth}
        \includegraphics[width=\textwidth]{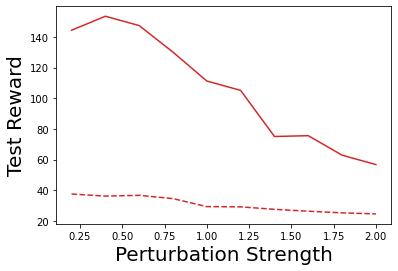}
    \end{subfigure}
    \begin{subfigure}{.245\textwidth}
        \includegraphics[width=\textwidth]{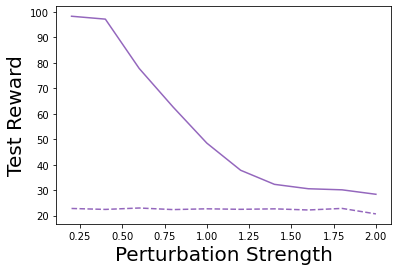}
    \end{subfigure}
    \begin{subfigure}{.245\textwidth}
        \includegraphics[width=\textwidth]{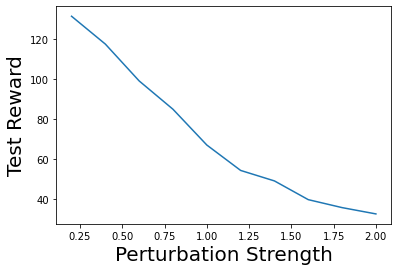}
    \end{subfigure}
    \begin{subfigure}{.245\textwidth}
        \includegraphics[width=\textwidth]{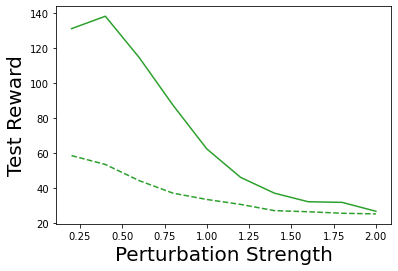}
    \end{subfigure}
    
    \begin{subfigure}{.245\textwidth}
        \includegraphics[width=\textwidth]{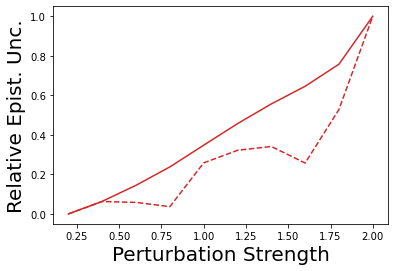}
    \end{subfigure}
    \begin{subfigure}{.245\textwidth}
        \includegraphics[width=\textwidth]{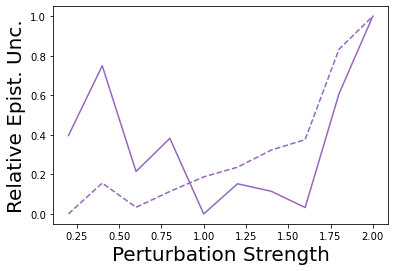}
    \end{subfigure}
    \begin{subfigure}{.245\textwidth}
        \includegraphics[width=\textwidth]{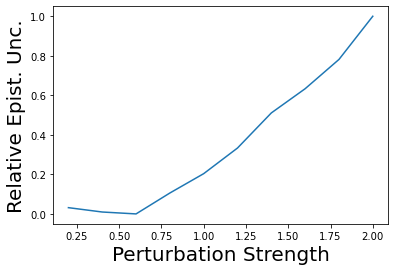}
    \end{subfigure}
    \begin{subfigure}{.245\textwidth}
        \includegraphics[width=\textwidth]{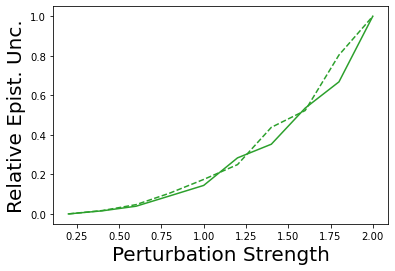}
    \end{subfigure}
        \vspace{-2mm}
    \caption{Comparison of the testing performance and the epistemic uncertainty predictions on CartPole with perturbed states. The four uncertainty methods use the epsilon-greedy strategy at training time and the sampling-aleatoric or sampling-epistemic strategy at testing time. Ideally, an uncertainty-aware model should maintain high reward while assigning higher epistemic uncertainty on more severe perturbations.}
    \label{fig:strategy-state-shift-testing-performance-cartpole}
        \vspace{-6mm}
\end{figure}
\section{Related Work}

In this section, we cover the related work for aleatoric and epistemic uncertainty estimation for RL. To this end, we review implicit \emph{desiderata} for aleatoric and epistemic uncertainty, RL \emph{methods} using uncertainty estimates and the existing \emph{evaluation} methodology to validate the quality of the uncertainty estimates in practice. We refer the reader to the survey \cite{review-uncertainty-dl} for an exhaustive overview on uncertainty estimation. 
\underline{\textit{Desiderata:}} The notion of \emph{risk} is well studied in RL and closely connected to the notion of uncertainty. Risk-sensitive RL usually aims at reducing the number of failures at training time for safer RL \cite{risk-sensitive-rl, risk-constrained-rl-percentile, risk-sensitive-mdp, safe-rl-survey}. In particular, \cite{risk-uncertainty-deep-rl, rl-risk-sample-trade-off, epistemic-risk} discuss the trade-off between risk-sensitivity and sample efficiency at training time. Further, \cite{epistemic-pomdp} aim at improving generalization at testing time within the Bayesian RL framework. In SL, the predicted uncertainty is expected to increase further from training data \citep{provable-uncertainty, natpn, bayesian-a-bit, graph-postnet}. None of these previous works give \emph{explicit} desiderata for both \emph{aleatoric} and \emph{epistemic} \emph{uncertainty} in RL at both \emph{training} and \emph{testing} \emph{time}. 
\underline{\textit{Models:}} The related work for uncertainty-aware model in RL is rich as shown in surveys on distributional and Bayesian  RL \cite{bayesian-rl, distributional-rl}. Distributional RL \cite{distributional-rl, nonparametric-return-distribution, distributional-rl-prespective, iqn} aims at learning the distribution of return which generally captures the aleatoric uncertainty \cite{information-directed-exploration-deep-rl}. Bayesian RL methods includes sampling-based methods models such as on dropout \cite{dropout, uncertainty-rl-collision-avoidance} and ensembles \cite{bootstrapped-dqn, randomized-prior-functions, safe-rl-model-uncertainty, rl-active-learning, epistemic-pomdp}. These methods are often combined with bootstrapping during training. In particular, \cite{risk-uncertainty-deep-rl} proposed to decompose aleatoric and epistemic uncertainty to the cost of multiple trained networks and \cite{decomposition-uncertatinty-bayesian-dl} decompose aleatoric and epistemic uncertainty with latent variables for model-based RL. Bayesian RL also includes Gaussian processes \citep{gp-rl, rl-gp} and more specifically the deep kernel learning method \citep{deep-rl-deep-kernel-leanring} which requires storing uncertainty estimates in the experience replay buffer during training. Unlike RL, SL includes many uncertainty methods using deep kernel learning \citep{due, duq, simple-baseline-uncertainty} and evidential network \cite{postnet, natpn, graph-postnet, priornet, regression-priornet, evidential-regression, robustness-uncertainty-dirichlet}. In contrast, we look at \emph{both} sampled-based and sampled-free uncertainty methods for \emph{aleatoric} and \emph{epistemic} uncertainty estimation with \emph{minimal} modification to the training procedure of the RL agent, thus ensuring easy adaptation of new uncertainty quantification techniques from SL to RL. 
\underline{\textit{Evaluation:}} \cite{bsuite-rl, testbed-rl} proposed to evaluate uncertainty in RL by focusing on joint predictive distributions instead of marginal distributions. Many works \cite{sample-efficient-ac, sample-efficient-rl-stochastic-ensemble, q-prop-sample-efficient, rl-fast-slow} used sample efficiency as evaluation method. Further, previous works proposed generalization benchmarks for RL \cite{generalization-rl-survey, assessing-generalization-rl, qyantifying-generalization-rl, procgen}. Finally, \cite{ood-dynamic-benchmark, benchmark-ood-detection-rl} have recently proposed benchmarks for OOD detection relevant to RL. In contrast, we propose a simple evaluation method which \emph{jointly} look at \emph{multiple} tasks relevant to real-world applications of uncertainty in RL. It covers epistemic uncertainty tracking and sample efficiency at training time, and generalization and OOD detection at testing time. In particular, we evaluate the trade-off between OOD generalization \cite{ood-generalization-survey} and OOD detection \cite{ood-detection-survey}.
\section{Limitations and Broader Impact}
\label{sec:limitations}

\looseness=-1
\underline{\textit{Desiderata:}.} Our desiderata, similar to \cite{graph-postnet, desiderata-ml-lifecycle, overview-interpretable-ml}, are designed to be application and model agnostic. In practice, the desiderata should be instantiated with formal definitions and could be customized depending on the application. \underline{\textit{Models:}} To validate the key contributions, similar to \cite{distributional-rl-prespective, iqn, bootstrapped-dqn}, we restrict our experiments to DQNs. However, the four uncertainty methods essentially modify the encoder architecture, it is possible to adapt them to other model-free RL methods such as PPO \cite{ppo} and A2C \cite{a2c}. \underline{\textit{Evaluation:}} Our approach focuses on a simple and task-diverse evaluation methodology for uncertainty estimation. Contrary to \cite{procgen}, we do not focus on scaling RL methods to more complex tasks in this paper. \underline{\textit{Broader impact:}} Our framework discusses the benefit of using uncertainty estimation to create robust and safe RL methods which corroborate with the Assessment List for Trustworthy AI \cite{trustworthy-ai}. Although, there is always a risk that this framework does not fully capture the real-world complexity, thus encouraging practitioners to proactively validate their models in the real-world.

\section{Conclusion}
\label{sec:conclusion}

\looseness=-1
We introduce a new framework to characterize aleatoric and epistemic uncertainty estimation in RL. It includes four explicit desiderata, four RL models inspired from SL and a practical evaluation methodology. The desiderata characterize the behavior of uncertainty estimates at both training and testing time. The models combine DQN with sampling-based and sampling-free uncertainty methods in SL without modifications of the RL agents training. We give theoretical and empirical evidence that these methods can fulfil the uncertainty desiderata. The evaluation method assesses the quality of uncertainty estimates on sample efficiency, generalization and OOD detection tasks.
\section*{Acknowledgments}

The authors would like to thank Daniel Z\"ugner for the helpful discussion and comments. The authors of this work take the full responsibilities for its content. 

\printbibliography

\newpage

\newpage


\appendix
\newpage
\onecolumn
\appendix

\section*{Appendix}


\section{Proofs}
\label{app:proofs}

In this section, we show that deep kernel learning \cite{due} and evidential models based on Posterior Networks \cite{postnet, natpn}  are guaranteed to assign high epistemic uncertainty for inputs far from inputs observed during training under technical assumptions. In particular, the combination of DQN with deep kernel learning or evidential networks presented in sec.~\ref{sec:models} are guaranteed to assign high epistemic uncertainty for extreme input states. We assume that the encoder should use ReLU activations, which is common in deep learning, and that the rows of the linear transformations are independent, which is realistic for trained networks with no constant output \citep{overconfident-relu}.

\begin{lemma}
\label{lem:relu-regions}
\cite{understanding-nn-relu} Let $\{Q_l\}_l^{R}$ be the set of linear regions associated to the piecewise ReLU network $f_{\phi}(\x)$. For any $\x \in \real^\inputdim$, there exists $\delta^* \in \real^{+}$ and $l^*\in {1,..., R}$ such that $\delta \cdot \x \in Q_{l^*}$ for all $\delta > \delta^*$.
\end{lemma}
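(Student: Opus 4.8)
The plan is to prove the statement by induction on the depth of the piecewise-linear network $f_{\phi}$, showing that along the ray $\delta \mapsto \delta\cdot\x$ the on/off activation state of every ReLU unit stabilizes once $\delta$ is large enough. First I would recall the standard structure underlying the regions $\{Q_l\}_{l=1}^{R}$: each region is exactly a maximal subset of $\real^{\inputdim}$ on which all units keep a fixed activation state, each such region is a convex polyhedron, and $f_{\phi}$ restricts to an affine map there. Consequently it suffices to show that the global activation pattern (the vector of per-neuron states) becomes constant along the ray for all $\delta$ exceeding some finite threshold $\delta^*$; the region containing $\delta\cdot\x$ for those $\delta$ is then a single fixed $Q_{l^*}$, which is precisely the claim.

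For the base case, at the first hidden layer the $i$-th pre-activation evaluated at $\delta\cdot\x$ equals $\delta\,(W^{(1)}\x)_i + b^{(1)}_i$, which is affine in the scalar $\delta$. As $\delta \to \infty$ its sign is eventually pinned down by the sign of $(W^{(1)}\x)_i$ when that coefficient is nonzero, and equals the constant sign of $b^{(1)}_i$ when the coefficient vanishes. In either case there is a finite $\delta$ beyond which the unit no longer flips, and taking the maximum over the finitely many first-layer units gives a threshold $\delta^*_1$ past which the first-layer pattern is frozen.

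For the inductive step, I would observe that once the patterns of layers $1,\dots,k-1$ are frozen for $\delta > \delta^*_{k-1}$, the post-activation output of layer $k-1$ is itself an affine function of $\delta$ along the ray (a fixed linear combination of frozen, piecewise-identity units). Hence the layer-$k$ pre-activations are again affine in $\delta$, and the same sign-stabilization argument produces a threshold $\delta^*_k \ge \delta^*_{k-1}$ past which layer $k$ is frozen as well. Iterating over all layers and setting $\delta^* = \delta^*_L$ freezes the entire pattern, so $\delta\cdot\x$ lies in one index $l^*$ for all $\delta > \delta^*$.

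The hard part will be the degenerate case in which a pre-activation is identically zero along the ray (both its $\delta$-coefficient and its constant term vanish), which leaves a unit exactly on a region boundary for every $\delta$. I would dispatch this by fixing the usual tie-breaking convention at the ReLU kink, treating a zero pre-activation as one consistent state, so the point still lies in a well-defined closed region and the eventually-constant-pattern conclusion is unaffected; the genericity and row-independence assumptions stated just before the lemma are exactly what rule such ties out in the nondegenerate regime. A final bookkeeping point is to confirm that the finitely many thresholds $\delta^*_1 \le \cdots \le \delta^*_L$ combine into a single finite $\delta^*$, which is immediate from the finiteness of the number of neurons.
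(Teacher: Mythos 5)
Your induction is correct, but it is worth noting that the paper itself never proves this lemma: it is imported verbatim from \cite{understanding-nn-relu}, and the proof there is a much shorter convexity-plus-finiteness argument. Namely, each linear region $Q_l$ is a convex polytope, so its intersection with the ray $\{\delta \cdot \x : \delta > 0\}$ is an interval in $\delta$; the finitely many such intervals cover an unbounded set of $\delta$ values, hence at least one of them is unbounded, which produces $\delta^*$ and $l^*$ in one stroke, with no case analysis at the ReLU kinks. Your layer-wise freezing argument --- first-layer pre-activations are affine in $\delta$, so their signs stabilize; once a layer's pattern freezes the next layer's pre-activations are again affine in $\delta$; take the maximum of the finitely many thresholds --- is a genuinely different and more constructive route: it yields an explicit $\delta^*$ as a maximum over neurons, and as a by-product it establishes that $f_{\phi}(\delta \cdot \x)$ is affine in $\delta$ on the tail, i.e.\ exactly the representation $f_{\phi}(\delta \cdot \x) = \delta \cdot (V^{(l)}\x) + a^{(l)}$ that the paper's proof of lem.~\ref{lem:asymptotic-latent-norm} then invokes. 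So the citation's argument buys brevity and immunity to boundary ties, while yours buys the quantitative threshold and the affine-tail fact that downstream results actually use. One small correction: the row-independence assumption on $V^{(l)}$ plays no role in the present lemma and does not ``rule out'' the degenerate identically-zero pre-activations you worry about --- your fixed tie-breaking convention alone handles those --- rather, that assumption is used only in lem.~\ref{lem:asymptotic-latent-norm}, to guarantee $V^{(l)}\x \neq 0$ and hence $\| f_{\phi}(\delta \cdot \x) \| \rightarrow \infty$.
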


\begin{lemma}
\label{lem:asymptotic-latent-norm}
Let a (deep) encoder $f_{\phi}$ with piecewise ReLU activations. Let $f_{\phi}(\x)= V^{(l)}\x + a^{(l)}$ be the piecewise affine representation of the ReLU network $f_{\phi}$ on the finite number of affine regions $Q^{(l)}$ \citep{understanding-nn-relu}. Suppose that $V^{(l)}$ have independent rows, then for almost any $\x$ we have $|| f_{\phi}(\delta \cdot \x) || \underset{\delta \rightarrow \infty}{\rightarrow} \infty$. i.e the norm of the latent representations $\z_\delta=f_{\phi}(\delta \cdot \x)$ associated to the input $\delta \cdot \x$ goes to infinity.
\end{lemma}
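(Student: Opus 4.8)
The plan is to use Lemma~\ref{lem:relu-regions} to reduce the asymptotic behavior of $f_{\phi}(\delta \cdot \x)$ to the action of a single affine map, and then to show that this map expands almost every direction without bound. First I would fix an arbitrary $\x \in \real^{\inputdim}$ and apply Lemma~\ref{lem:relu-regions} to obtain a threshold $\delta^* \in \real^{+}$ and an index $l^* \in \{1, \dots, R\}$ such that $\delta \cdot \x \in Q_{l^*}$ for every $\delta > \delta^*$. Since $f_{\phi}$ coincides with its affine representation on this region, for all $\delta > \delta^*$ we have $f_{\phi}(\delta \cdot \x) = V^{(l^*)} (\delta \cdot \x) + a^{(l^*)} = \delta\, V^{(l^*)} \x + a^{(l^*)}$, so the network is genuinely affine along the ray $\{\delta \cdot \x : \delta > \delta^*\}$.

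Next I would extract a lower bound on the norm via the reverse triangle inequality,
\[
    \| f_{\phi}(\delta \cdot \x) \| \;\geq\; \delta\, \| V^{(l^*)} \x \| - \| a^{(l^*)} \|,
\]
whose right-hand side diverges to $+\infty$ as $\delta \rightarrow \infty$ exactly when $V^{(l^*)} \x \neq 0$. This reduces the lemma to proving that the exceptional set $B = \{\x : V^{(l^*(\x))} \x = 0\}$, where $l^*(\x)$ denotes the asymptotic region selected by Lemma~\ref{lem:relu-regions}, is Lebesgue-null.

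To control $B$, I would use the finiteness of the number of regions $R$ to write $B \subseteq \bigcup_{l=1}^{R} \ker V^{(l)}$, since each $\x \in B$ satisfies $V^{(l)} \x = 0$ for its own region $l = l^*(\x)$. The independent-rows hypothesis guarantees that every $V^{(l)}$ has at least one nonzero row $v$ (a linearly independent family cannot contain the zero vector), hence $\ker V^{(l)} \subseteq \{\x : \langle v, \x \rangle = 0\}$ is contained in a hyperplane and carries measure zero. A finite union of null sets is null, so $B$ is null and the divergence $\| f_{\phi}(\delta \cdot \x) \| \rightarrow \infty$ holds for almost every $\x$, which is the claim.

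The main obstacle, and the reason the statement holds only for \emph{almost} every $\x$ rather than every $\x$, is the bookkeeping around the $\x$-dependence of the asymptotic region $l^*(\x)$: the affine expansion and the norm lower bound are valid only past the $\x$-dependent threshold $\delta^*$, and the ``bad'' directions where $V^{(l^*)} \x = 0$ must be collected across all regions at once. Here the finiteness of $R$ inherited from the piecewise-affine structure is essential, as it is what keeps the aggregated exceptional set measure zero; without a uniform bound on the number of regions, the union of hyperplanes could in principle fail to be null.
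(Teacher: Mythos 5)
Your proof is correct and follows the same skeleton as the paper's: invoke Lemma~\ref{lem:relu-regions} to obtain eventual affinity of $f_{\phi}$ along the ray $\{\delta \cdot \x\}$, then conclude divergence of the norm whenever $V^{(l^*)}\x \neq 0$. Where you genuinely differ---and in fact improve on the paper---is in how the ``for almost any $\x$'' qualifier is justified. The paper's proof excludes only $\x = 0$, asserting that $V^{(l)}\x \neq 0$ ``since $\x \neq 0$ and $V^{(l)}$ has independent rows''; but independent rows give full \emph{row} rank, not a trivial kernel, so whenever the latent dimension $\latentdim$ is smaller than the input dimension $\inputdim$ (the usual situation for an encoder) each $V^{(l)}$ has a kernel of dimension $\inputdim - \latentdim > 0$, and the paper's pointwise claim fails on that subspace. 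Your bookkeeping---bounding the exceptional set by $\bigcup_{l=1}^{R} \ker V^{(l)}$, observing that each kernel lies inside a hyperplane because an independent family of rows cannot contain the zero row, and using the finiteness of $R$ to conclude the union is Lebesgue-null---is exactly what the ``almost any $\x$'' in the statement requires, and it makes rigorous a step the paper leaves imprecise. Your explicit lower bound $\| f_{\phi}(\delta \cdot \x)\| \geq \delta \|V^{(l^*)}\x\| - \|a^{(l^*)}\|$ via the reverse triangle inequality also makes the divergence quantitative (linear in $\delta$), which the paper's proof states only qualitatively.
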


\begin{proof}
We prove now lem.~\ref{lem:asymptotic-latent-norm}. Let $\x \in \real^\inputdim $ be a non-zero input and $f_{\phi}$ be a ReLU network. Lem.~\ref{lem:relu-regions} implies that there exists $\delta^* \in \real^{+}$ and $l \in \{1,..., R\}$ such that $\delta \cdot \x \in Q^{(l)}$ for all $\delta > \delta^*$. Thus, $\z_{\delta} = f_{\phi}(\delta \cdot \x) = \delta \cdot (V^{(l)} \x) + a^{(l)}$ for all $\delta > \delta^*$. Note that for $\delta\in [\delta^*, +\infty]$,  $\z_{\delta}$ follows an affine half line $S_{\x} = \{\z \condition \z = \delta \cdot (V^{(l)} \x) + a^{(l)}, \delta > \delta^* \}$ in the latent space. Further, note that $V^{(l)}\x \neq 0$ since $\x \neq 0$ and $V^{(l)}$ has independent rows. Therefore, we have $|| \z_\delta|| \underset{\delta \rightarrow \infty}{\rightarrow} + \infty$
\end{proof}

\begin{theorem}
\label{thm:dkl}
Let a Deep Kernel Learning model parametrized with a (deep) encoder $f_{\phi}$ with piecewise ReLU activations, a set of $K$ inducing points $\{\phi_{k}\}_{k=1}^{K}$ and a RBF, Matern or Rational Quadratic kernel $\kappa(\cdot, \cdot)$ \cite{expressing-structure-kernels, gp-for-ml}. Let $f_{\phi}(\x)= V^{(l)}\x + a^{(l)}$ be the piecewise affine representation of the ReLU network $f_{\phi}$ on the finite number of affine regions $Q^{(l)}$ \citep{understanding-nn-relu}. Suppose that $V^{(l)}$ have independent rows, then for almost any $\x$ we have \smash{$\sigma(f_{\phi}(\delta \cdot \x)) \underset{\delta \rightarrow \infty}{\rightarrow} c$ where $ c = \kappa(0, 0)$}.
\end{theorem}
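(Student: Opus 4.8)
The plan is to combine Lemma~\ref{lem:asymptotic-latent-norm}, which guarantees that the latent representation $\z_\delta = f_\phi(\delta \cdot \x)$ diverges in norm as $\delta \to \infty$, with the explicit form of the predictive variance of an inducing-point Gaussian process and the decay properties of stationary kernels. First I would write the predictive variance that the deep kernel learning model outputs at a latent point $\z$ as
\begin{equation*}
\sigma(\z) = \kappa(\z, \z) - \bm{k}_\z^\top K_{uu}^{-1} \bm{k}_\z + \bm{k}_\z^\top K_{uu}^{-1} S K_{uu}^{-1} \bm{k}_\z,
\end{equation*}
where $\bm{k}_\z = (\kappa(\z, \phi_k))_{k=1}^K$ is the cross-covariance between $\z$ and the fixed inducing points, $K_{uu} = (\kappa(\phi_j, \phi_k))_{j,k}$ is the inducing-point Gram matrix, and $S$ is the fixed variational covariance. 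The crucial observation is that $K_{uu}^{-1}$, $S$, and the inducing points do not depend on $\delta$; only $\z$, and hence $\bm{k}_\z$ and $\kappa(\z,\z)$, change with $\delta$.

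Next I would exploit that RBF, Matern, and Rational Quadratic kernels are stationary, i.e. $\kappa(\z, \z') = \psi(\|\z - \z'\|)$ for a function $\psi$ with $\psi(r) \to 0$ as $r \to \infty$. This yields two facts. First, $\kappa(\z, \z) = \psi(0) = \kappa(0,0) = c$ is constant in $\z$, so the leading term of $\sigma(\z_\delta)$ equals $c$ for every $\delta$. Second, since the inducing points $\phi_k$ are fixed and $\|\z_\delta\| \to \infty$ by Lemma~\ref{lem:asymptotic-latent-norm}, the reverse triangle inequality gives $\|\z_\delta - \phi_k\| \geq \|\z_\delta\| - \|\phi_k\| \to \infty$ for each $k$, so every entry $\kappa(\z_\delta, \phi_k) = \psi(\|\z_\delta - \phi_k\|) \to 0$ and thus $\bm{k}_{\z_\delta} \to \bm{0}$.

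Finally I would conclude: the two correction terms are quadratic forms in $\bm{k}_\z$ with fixed matrices, so each is bounded in absolute value by $\|M\|\,\|\bm{k}_\z\|^2$ for a suitable constant matrix $M$, and therefore vanishes as $\bm{k}_{\z_\delta} \to \bm{0}$. Hence $\sigma(f_\phi(\delta \cdot \x)) = c + o(1) \to c = \kappa(0,0)$, establishing the claim.

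The main obstacle I anticipate is being precise about which predictive-variance expression the model actually uses, since the exact inducing-point or variational formula can differ (for instance by an additive observation-noise term or mean-field corrections) across implementations. The argument is insensitive to these details, however, because every such formula is a fixed quadratic form in $\bm{k}_\z$ added to the stationary term $\kappa(\z,\z)$, and the diverging-norm argument forces $\bm{k}_\z \to \bm{0}$ regardless. A secondary point is the \emph{almost any} $\x$ qualifier, which is inherited verbatim from Lemma~\ref{lem:asymptotic-latent-norm} (excluding the measure-zero set where $V^{(l)}\x = 0$), so it requires no additional argument.
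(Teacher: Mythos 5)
Your proof is correct and follows essentially the same route as the paper's: invoke Lemma~\ref{lem:asymptotic-latent-norm} to get $\|\z_\delta\| \to \infty$, use the reverse triangle inequality and kernel stationarity to drive the cross-covariances $\kappa(\z_\delta, \bm{\phi}_k)$ to zero, and conclude that the correction term in the predictive variance vanishes, leaving $\kappa(0,0)$. The only difference is cosmetic: you work with the full variational predictive-variance formula (including the $S$ term) and explicitly bound the quadratic forms, whereas the paper uses the simpler form $\sigma = c - \bm{\kappa}\bm{C}\bm{\kappa}$ and leaves that step implicit, so your write-up is if anything slightly more careful and more robust to implementation details.
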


\begin{proof}
We prove now thm.~\ref{thm:dkl}. Lem.~\ref{lem:relu-regions} says that $||\z_\delta|| \underset{\delta \rightarrow \infty}{\rightarrow} + \infty$ where $\z_\delta=f_{\phi}(\delta \cdot \x)$. It implies that $||\z_\delta - \bm{\phi}_k|| \underset{\delta \rightarrow \infty}{\rightarrow} \infty$ for all inducing point $\bm{\phi}_k$. Thus, we obtain $\kappa(\z_\delta, \bm{\phi}_k) \underset{\delta \rightarrow \infty}{\rightarrow} 0$ where $\kappa(\cdot, \cdot)$ is the RBF, Matern or Rational Quadratic kernel \cite{expressing-structure-kernels, gp-for-ml}. Since the variance of the predictive Gaussian distribution associated with the Gaussian process is $\sigma(f_{\phi}(\delta \cdot \x)) = c - \bm{\kappa} \bm{C} \bm{\kappa}$ where $c = \kappa(f_{\phi}(\delta \cdot \x), f_{\phi}(\delta \cdot \x)) = \kappa(0, 0)$, $\bm{\kappa}_k = \kappa(\z_\delta, \bm{\phi}_{k})$ and $\bm{\kappa}_{k, k'} = \kappa(\bm{\phi}_{k}, \bm{\phi}_{k'})$. This gives the final result $\sigma(f_{\phi}(\delta \cdot \x)) \underset{\delta \rightarrow \infty}{\rightarrow} c$ where $ c = \kappa(0, 0)$.
\end{proof}

Thm.~\ref{thm:dkl} implies that deep kernel learning on a latent space parametrized with a neural network is guaranteed to predict high uncertainty corresponding to the prior uncertainty far from training data. This includes the uncertainty predicted by the GP associated to each action $a$ in the combination of DQN and deep kernel learning presented in sec.~\ref{sec:models}. The uncertainty prediction $u_\text{epist}(s_t, a_t) = \entropy(\DNormal(\mu(\s^{(t)}, a^{(t)}), \sigma(\s^{(t)}, a^{(t)})))$ becomes high for input states $s\datatx$ extremely different from the training environment i.e. $||s\datatx|| \rightarrow \infty$.

\begin{theorem}
\label{thm:natpn}
\cite{natpn} Let a Natural Posterior Network model parametrized with a (deep) encoder $f_{\phi}$ with piecewise ReLU activations, a decoder $g_{\psi}$ and the density $\prob(\z \condition \bm{\omega})$. Let $f_{\phi}(\x)= V^{(l)}\x + a^{(l)}$ be the piecewise affine representation of the ReLU network $f_{\phi}$ on the finite number of affine regions $Q^{(l)}$ \citep{understanding-nn-relu}. Suppose that $V^{(l)}$ have independent rows and the density function $\prob(\z \condition \bm{\omega})$ has bounded derivatives, then for almost any $\x$ we have \smash{$\prob(f_{\phi}(\delta \cdot \x) \condition \bm{\omega}) \underset{\delta \rightarrow \infty}{\rightarrow} 0$}. i.e the evidence becomes small far from training data.
\end{theorem}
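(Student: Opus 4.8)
The plan is to reduce the statement to a purely analytic fact about the learned density and then invoke the latent-norm behavior already established in Lemma~\ref{lem:asymptotic-latent-norm}. First I would recall that, for almost any nonzero $\x$, Lemma~\ref{lem:asymptotic-latent-norm} gives $\|\z_\delta\| \to \infty$ as $\delta \to \infty$, where $\z_\delta = f_{\phi}(\delta \cdot \x)$ traces an affine half-line in the latent space $\real^{\latentdim}$. It therefore suffices to show that the flow density satisfies $\prob(\z \condition \bm{\omega}) \to 0$ whenever $\|\z\| \to \infty$; composing this with the divergence of $\|\z_\delta\|$ yields $\prob(\z_\delta \condition \bm{\omega}) \to 0$ and hence the stated decay of the evidence.

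The core of the argument is the elementary claim that a nonnegative, integrable, Lipschitz function on $\real^{\latentdim}$ must vanish at infinity. Here integrability is free, since $\prob(\cdot \condition \bm{\omega})$ is a normalizing-flow density and integrates to $1$, while the bounded-derivatives hypothesis makes it globally $L$-Lipschitz for some constant $L$. I would prove the claim by contradiction: if the density does not tend to $0$ at infinity, there is $\epsilon > 0$ and a sequence $\z_n$ with $\|\z_n\| \to \infty$ and $\prob(\z_n \condition \bm{\omega}) \geq \epsilon$. By the Lipschitz bound, $\prob(\cdot \condition \bm{\omega}) \geq \epsilon/2$ on every ball $B(\z_n, r)$ of radius $r = \epsilon/(2L)$. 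Since the norms diverge, I can thin the sequence so that consecutive norms differ by more than $2r$; the reverse triangle inequality then makes the balls $B(\z_{n_i}, r)$ pairwise disjoint. Summing the density over these disjoint balls gives $\int \prob(\z \condition \bm{\omega})\, d\z \geq \sum_i \mathrm{vol}(B_r)\,\epsilon/2 = \infty$, contradicting $\int \prob = 1$.

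The main obstacle, and the only place where the hypotheses genuinely enter, is converting the pointwise lower bound $\prob(\z_n \condition \bm{\omega}) \geq \epsilon$ into a fixed-volume region carrying a fixed amount of mass, which is exactly what the global Lipschitz property (from bounded derivatives) buys, and then guaranteeing that infinitely many such regions can be made disjoint. The latter is where $\|\z_n\| \to \infty$ is indispensable: on a fixed compact set a bounded-gradient density can of course remain large, so the escape to infinity supplied by Lemma~\ref{lem:asymptotic-latent-norm} is precisely what forces the disjoint-ball packing and the resulting divergence of the integral. Once this analytic claim is established, the theorem follows immediately, and the same reasoning explains why the NatPN evidence $\evidence(\s\datatx, a\datatx) \propto \prob(\z_\delta \condition \bm{\omega})$ collapses to its prior value for extreme input states.
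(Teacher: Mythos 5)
Your proof is correct and takes essentially the same route as the paper, which defers the details to \cite{natpn} but sketches exactly this argument: Lemma~\ref{lem:asymptotic-latent-norm} sends the latent representation to infinity along an affine half-line, after which one invokes the fact that a smooth (bounded-derivative), integrable density must vanish at infinity. The only difference is that you make that second step self-contained via the standard Lipschitz disjoint-ball packing argument --- with the correct negation, the radius $r = \epsilon/(2L)$, and the thinning of the sequence so the balls are pairwise disjoint --- whereas the paper leaves this analytic fact to the citation.
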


The proof of thm.~\ref{thm:natpn} is given in \cite{natpn}, and relies also on lem.~\ref{lem:asymptotic-latent-norm} and the fact that a smooth density estimator should converge to $0$ far from training data. Intuitively, it implies that the epistemic associated to each possible action $a$ by the combination of DQN and posterior network becomes high for input states $s\datatx$ extremely different from the training environment i.e. $||s\datatx|| \rightarrow \infty$. In particular, prior parameter takes over in the posterior update (i.e. $\evidence^\text{post}(\s^{(t)}, a)) \rightarrow \evidence^\text{prior}$, $\priorparam^\text{post}(\s^{(t)}, a) \rightarrow \priorparam^\text{prior}$)

\section{Model Details}
\label{app:models-details}

\looseness=-1
We train all models on a single GPU (NVIDIA GTX 1080 Ti or NVIDIA GTX 2080 Ti, 11 GB memory). All models use the same core architecture. They use a $2$ layers MLP with 128 hidden units for the CartPole environment, a $2$ layers MLP with 64 hidden units for the Acrobot environment and a $3$ layers MLP with 128 hidden units for the LunarLander environment. All models are trained using $5$ random seeds with the Adam optimizer \cite{adam-optimizer}. For fair comparison, we use the same hyperparameters for the DQN architecture in all uncertainty models: the target network parameters are completely updated (i.e. $\tau=1.$) every $10$ training iterations. The epsilon-greedy strategy start with $\epsilon=1.$ and decay till $\epsilon=0.01$ after $1000$ iteration steps. The discount factor is set to $0.99$. Further, we use a batch size of $16$, a replay size of $1000$ and a maximum number of training iterations of $13000$ for Cartpole, a batch size of $64$, a replay size of $10000$ and a maximum number of training iterations of $120000$ for Acrobot, and a batch size of $128$, a replay size of $10000$ and a maximum number of training iterations of $300000$ for LunarLander. For each type of uncertainty model, we performed a grid search for the learning rate in the range $[10^{-1}, 10^{-4}]$. 

\looseness=-1
Each uncertainty method has also its own hyperparameters. We show an hyperparameter study in app.~\ref{app:hyper-parameter-study} for the main hyper-parameters of each uncertainty method. For the MC dropout model, we make a grid-search over the number of samples $n \in [10, 20, 40, 80]$ and the drop probability $p \in [.1, .2, .3, .4, .5]$. In the main experiments, we use $n=80$ and $p=.2$. For the ensemble model, we make a grid-search over the number of networks $n \in [10, 20, 40, 80]$. In the main experiments, we use $n=80$. For the deep kernel learning model, we make a grid-search over the number of inducing points $n \in [10, 20, 40, 80]$, the latent dimension $H \in [16, 32, 64]$, the kernel type in RBF, RQ and Matern-$\frac{3}{2}$ Kernel, and an ELBO regularization factor in $\lambda \in [., 1.]$ . In the main experiments, we use $n=80$ inducing points, a latent dimension of $H=64$, the RQ kernel, and a regularization factor of $\lambda=.1$. Further, we observed that adding a batch normalization layer right after the encoder $f_{\bm{\theta}}$ was stabilizing the training similarly to \citet{postnet}. For the evidential network model based on posterior networks, we make a grid-search over the flow depth $d \in [8, 16, 32]$, the latent dimension $H \in [8, 16, 32]$. In the main experiments, we use a radial flow with depth $d=8$ and a latent dimension of $H=16$. Further, we observed that adding a batch normalization layer right after the encoder $f_{\bm{\theta}}$ was stabilizing the training similarly to \citet{postnet}.

\looseness=-1
We will provide the github repository with the code on the project page 
\url{https://www.cs.cit.tum.de/daml/aleatoric-epistemic-uncertainty-rl/}.
To conduct the experiments, we used Pytorch \cite{pytorch} with BSD license, Pytorch Lightning \citep{pytorch-lightning} with Apache 2.0 license and Weight\&Biases \cite{wandb}. Further, we also use GPytorch for to implement the deep kernel model \cite{gpytorch}.

\section{Environment Details}
\label{app:environments-details}

\looseness=-1
We use OpenAI gym environments \cite{gym} with MIT license. We design the OOD environments such that they should not be relevant to the original training environment task, and thus being a reasonable failure mode. Further, we design a continuum of perturbed environments going from tasks very similar to the training environment to the tasks very different from the original environment. We distinguish between perturbations on the \emph{state} space, the \emph{action} space, and the \emph{transition} dynamics to follow the MDP structure of the original environment. In contrast, \cite{assessing-generalization-rl, benchmark-ood-detection-rl} mostly focus on perturbations on the environment parameters. We will provide the github repository with the code for the OOD and perturbed environment on the project page 
\url{https://www.cs.cit.tum.de/daml/aleatoric-epistemic-uncertainty-rl/}.

\looseness=-1
\textbf{Cartpole \citep{cartpole}} In this environment, the goal of the agent is to maintain a pole on a cart straight up. This environment has a discrete action space with $2$ possible actions corresponding to apply the a force to the left or the right of the cart. This environment has a continuous state space with dimension $4$ corresponds to. The episode ends when the pole is more than 15 degrees from vertical, or the cart moves more than 2.4 units from the center. The reward is $+1$ at every time step that the pole stays up. The maximum length of an episode is 200 steps. For the OOD environment, the input states are drawn from a Gaussian distribution with unit variance i.e. $\s^{(\tdata), \text{pert}} \sim \DNormal(0, 1)$. For the perturbed environments with perturbation strength $\epsilon$, the action space is perturbed by randomly adding Gaussian noise to the scale of the force applied to the cart (i.e. $f^{\text{pert}} = (1. + x)f$ where $f \in \{-10, 10\}$ is default action force and $x \sim \DNormal(0, \epsilon)$ is the perturbation), the state space is perturbed by adding Gaussian noise to the observation scale (i.e. $\s^{(\tdata), \text{pert}} = (1. + x)\s\datatx$ where $x \sim \DNormal(0, \epsilon)$ is the perturbation), and the transition dynamic is perturbed by adding a uniform noise centered around the true dynamic parameters (i.e. $\nu^{\text{pert}} = (1 + x) \nu$ where $x \sim U(-\epsilon, \epsilon)$ where $\nu$ is the original environment parameters) such as the gravity, pole length ...etc.

\looseness=-1
\textbf{Acrobot \citep{acrobot1, acrobot2}} In this environment, the agent control a robot arm with two links and its goal is to move the end of the lower link up to a given height. This environment has a discrete action space with $3$ possible actions corresponding to apply a positive torque, a negative torque or nothing. This environment has a continuous state space with dimension $6$ corresponding to the $4$ joint angles and the $2$ angular velocities. The episode ends when the lower link of the robot arm is above a given height. The reward is $-1$ at every time step that the pole does not reach the expected height. The maximum length of an episode is 500 steps at most. For the OOD environment, the input states are drawn from a Gaussian distribution with unit variance (i.e. $\s^{(\tdata), \text{pert}} \sim \DNormal(0, 1)$). For the perturbed environments with perturbation strength $\epsilon$, the action space is perturbed by randomly sampling actions with probability $p=\frac{\epsilon}{2}$, the state space is perturbed by adding Gaussian noise to the observation scale (i.e. $\s^{(\tdata), \text{pert}} = (1. + x)\s\datatx$ where $x \sim \DNormal(0, \epsilon)$), and the transition dynamic is perturbed by adding a uniform noise centered around the true dynamic parameters (i.e. $\nu^{\text{pert}} = (1 + x) \nu$ where $x \sim U(-\epsilon, \epsilon)$ where $\nu$ is the original environment parameters) such as the lengths of the links, the masses of the links.

\looseness=-1
\textbf{LunarLander \citep{lunarlander1, lunarlander2}} In this environment, the agent control a space ship and its goal is to land it on the surface of the moon. This environment has a discrete action space with $4$ possible actions corresponding to apply a torque to the left, to the right, downward or nothing. This environment has a continuous state space with dimension $8$ corresponding to the space ship coordinates. The reward is correlated with fast landing in the correct area without crashes. The episode ends when the spaceship is landed or crashed. For the OOD environment, the input states are drawn from a Gaussian distribution with unit variance (i.e. $\s^{(\tdata), \text{pert}} \sim \DNormal(0, 1)$). For the perturbed environments with perturbation strength $\epsilon$, the action space is perturbed by randomly sampling actions with probability $p=\frac{\epsilon}{2}$, the state space is perturbed by adding Gaussian noise to the observation scale (i.e. $\s^{(\tdata), \text{pert}} = (1. + x)\s\datatx$ where $x \sim \DNormal(0, \epsilon)$), and the transition dynamic is perturbed by adding a uniform noise centered around the true dynamic parameters (i.e. $\nu^{\text{pert}} = (1 + x) \nu$ where $x \sim U(-\epsilon, \epsilon)$ where $\nu$ is the original environment parameters) such as the lengths of the links, the masses of the links.

\section{Metric Details - Training Time}
\label{app:training-time-metric-details}

\looseness=-1
We track the current reward, the epistemic uncertainty and the aleatoric uncertainty at every training step. The epistemic and aleatoric uncertainty are defined by the variance or the entropy of the epistemic and the aleatoric distributions (see sec.~\ref{sec:models}).  The two uncertainty types are then normalized between $[0, 1]$ with min-max normalization to compute the relative epistemic and aleatoric uncertainty on the plots. The normalization enable an easier comparison of the trend of the uncertainty estimates across methods. For all these experiments, we compute the mean and the standard error of the mean across $5$ seeds for all results.

\section{Metric Details - Testing Time}
\label{app:testing-time-metric-details}

\looseness=-1
We save $20$ model checkpoints at regular interval during the whole training. We evaluate then the $20$ checkpointed models at testing time. First, we compute the in-distribution (ID) reward average over $10$ episodes on the original training environment. Second, we compute the OOD detection scores by comparing the epistemic uncertainty of the ID and the OOD environment over $10$ episodes each with the area under the receiver operating characteristic curve (AUC-ROC) and the area under the precision-recall curve (AUC-PR). Higher scores indicate better OOD detection performances. Third, we compute the averaged reward and epistemic uncertainty on the perturbed environment over $10$ episodes. For all these experiments, we compute the mean and the standard error of the mean across $5$ seeds for all results. Further, we also sampled $5$ random perturbations for each perturbation strength.

\section{Additional Experiments}
\label{app:additional-experiments}

\subsection{Training Time}

We show additional results on CartPole, Acrobot and LunarLander in fig.~\ref{fig:model-training-performance-cartpole}, fig.~\ref{fig:model-training-performance-acrobot} and fig.~\ref{fig:model-training-performance-lunarlander} to compare the performance of the uncertainty estimates of the four uncertainty methods at training time. The epistemic uncertainty estimates of PostNet decrease during training. Thus, PostNet empirically validate des.~\ref{ax:training_state}. Further, Ensemble and PostNet require a low number of finished episodes on CartPole and LuncarLander. This translates for these two envionments into a safer learning with a lower number of restart of the systems.

\begin{figure}
    \centering
        \begin{subfigure}{.5\textwidth}
        \includegraphics[width=\textwidth]{resources/legend.png}
    \end{subfigure}
    \vspace{-5mm}
    
    \begin{subfigure}{.245\textwidth}
        \includegraphics[width=\textwidth]{resources/cartpole-training_total_reward-training-model.png}
    \end{subfigure}
    \begin{subfigure}{.245\textwidth}
        \includegraphics[width=\textwidth]{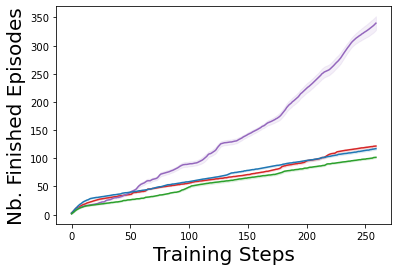}  
    \end{subfigure}
    \begin{subfigure}{.245\textwidth}
        \includegraphics[width=\textwidth]{resources/cartpole-training_epistemic_uncertainty-training-model.png}
    \end{subfigure}
    \begin{subfigure}{.245\textwidth}
        \includegraphics[width=\textwidth]{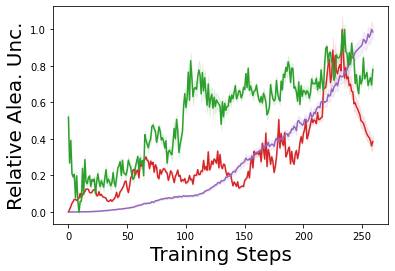}  
    \end{subfigure}
    \caption{Comparison of the training performance of the four uncertainty methods using epsilon-greedy strategies on CartPole. Ideally, a uncertainty aware-model should achieve high reward with few samples and episodes and with a decreasing epistemic uncertainty.}
    \label{fig:model-training-performance-cartpole}
\end{figure}
\begin{figure}
    \centering
        \begin{subfigure}{.5\textwidth}
        \includegraphics[width=\textwidth]{resources/legend.png}
    \end{subfigure}
    \vspace{-5mm}
    
    \begin{subfigure}{.245\textwidth}
        \includegraphics[width=\textwidth]{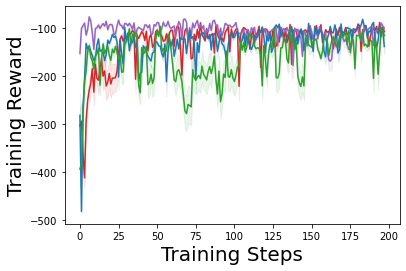}
    \end{subfigure}
    \begin{subfigure}{.245\textwidth}
        \includegraphics[width=\textwidth]{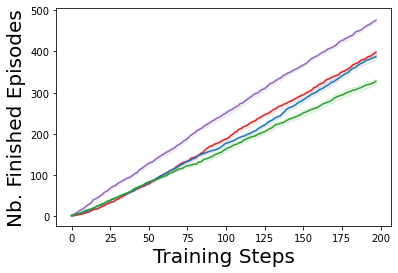}  
    \end{subfigure}
    \begin{subfigure}{.245\textwidth}
        \includegraphics[width=\textwidth]{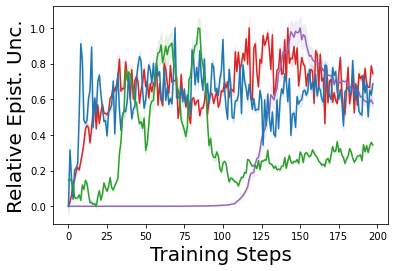}
    \end{subfigure}
    \begin{subfigure}{.245\textwidth}
        \includegraphics[width=\textwidth]{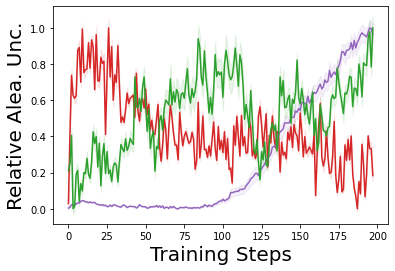}  
    \end{subfigure}
    \caption{Comparison of the training performance of the four uncertainty methods using epsilon-greedy strategies on Acrobot. Ideally, a uncertainty aware-model should achieve high reward with few samples and with a decreasing epistemic uncertainty.}
    \label{fig:model-training-performance-acrobot}
\end{figure}
\begin{figure}
    \centering
        \begin{subfigure}{.5\textwidth}
        \includegraphics[width=\textwidth]{resources/legend.png}
    \end{subfigure}
    \vspace{-5mm}
    
    \begin{subfigure}{.245\textwidth}
        \includegraphics[width=\textwidth]{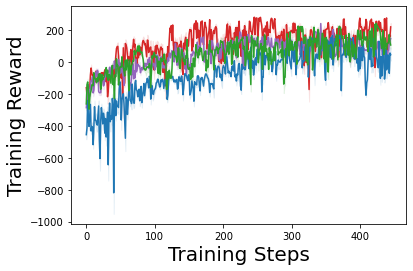}
    \end{subfigure}
    \begin{subfigure}{.245\textwidth}
        \includegraphics[width=\textwidth]{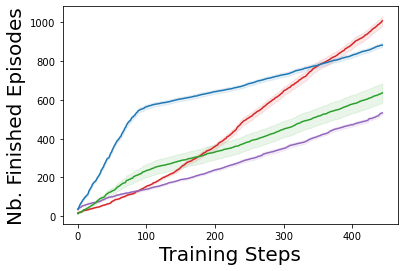}  
    \end{subfigure}
    \begin{subfigure}{.245\textwidth}
        \includegraphics[width=\textwidth]{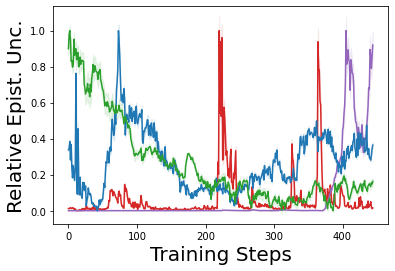}
    \end{subfigure}
    \begin{subfigure}{.245\textwidth}
        \includegraphics[width=\textwidth]{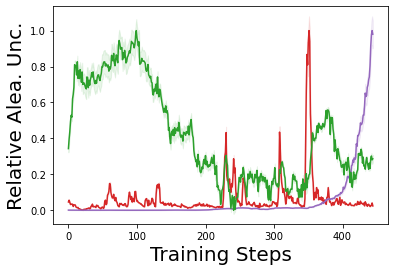}  
    \end{subfigure}
    \caption{Comparison of the training performance of the four uncertainty methods using epsilon-greedy strategies on LunarLander. Ideally, a uncertainty aware-model should achieve high reward  with few samples and episodes and with a decreasing epistemic uncertainty.}
    \label{fig:model-training-performance-lunarlander}
\end{figure}

We show additional results in fig.~\ref{fig:strategy-training-performance-cartpole}, fig.~\ref{fig:strategy-training-performance-acrobot} and fig.~\ref{fig:strategy-training-performance-lunarlander} to compare the performance of the sampling-epistemic and the sampling-aleatoric strategies at training time. The sampling-epistemic strategy consistently achieve a better sample efficiency. Thus, Ensemble, DropOut and PostNet empirically satisfy des.~\ref{ax:training_strategy}. Hence, disentangling aleatoric and epistemic uncertainty can speed learning in a training environment.

\begin{figure}
    \centering
    \begin{subfigure}{.45\textwidth}
        \includegraphics[width=\textwidth]{resources/sampling-legend.png}
    \end{subfigure}
    \vspace{-3mm}
    
    \begin{subfigure}{.245\textwidth}
        \includegraphics[width=\textwidth]{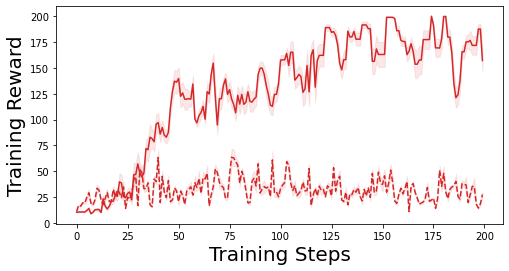}
    \end{subfigure}
    \begin{subfigure}{.245\textwidth}
        \includegraphics[width=\textwidth]{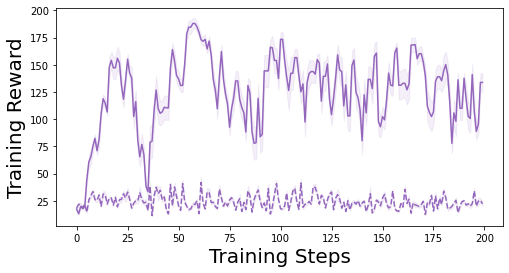}
    \end{subfigure}
    \begin{subfigure}{.245\textwidth}
        \includegraphics[width=\textwidth]{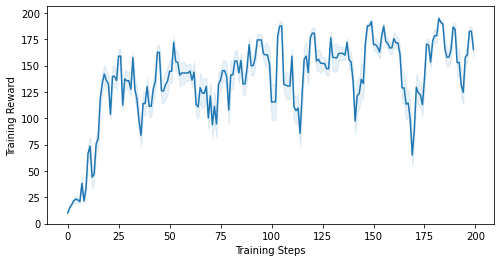}
    \end{subfigure}
    \begin{subfigure}{.245\textwidth}
        \includegraphics[width=\textwidth]{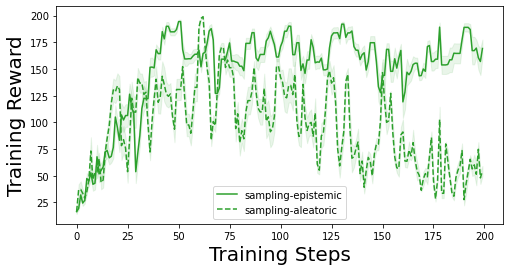}
    \end{subfigure}
    
    \begin{subfigure}{.245\textwidth}
        \includegraphics[width=\textwidth]{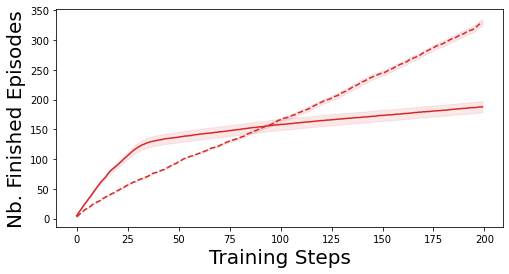}
    \end{subfigure}
    \begin{subfigure}{.245\textwidth}
        \includegraphics[width=\textwidth]{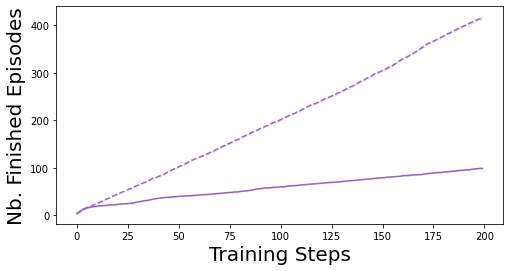}
    \end{subfigure}
    \begin{subfigure}{.245\textwidth}
        \includegraphics[width=\textwidth]{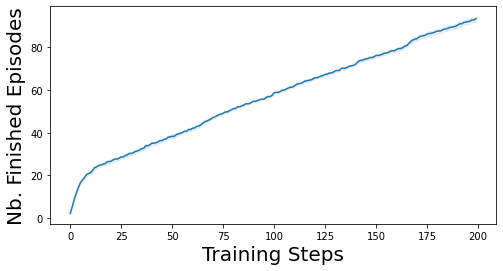}
    \end{subfigure}
    \begin{subfigure}{.245\textwidth}
        \includegraphics[width=\textwidth]{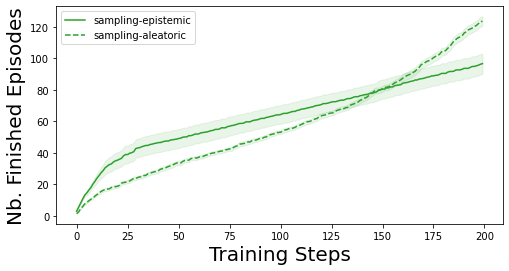}
    \end{subfigure}
    \caption{Comparison of the training performance on Cartpole. The four uncertainty methods use the sampling-aleatoric or the sampling-epistemic at training time. Ideally, an uncertainty aware-model should high reward with few samples.}
    \label{fig:strategy-training-performance-cartpole}
\end{figure}
\begin{figure}
    \centering
    \begin{subfigure}{.45\textwidth}
        \includegraphics[width=\textwidth]{resources/sampling-legend.png}
    \end{subfigure}
    \vspace{-3mm}
    
    \begin{subfigure}{.245\textwidth}
        \includegraphics[width=\textwidth]{resources/acrobot-training_total_reward-dropout-training-strategy.png}
    \end{subfigure}
    \begin{subfigure}{.245\textwidth}
        \includegraphics[width=\textwidth]{resources/acrobot-training_total_reward-ensemble-training-strategy.png}
    \end{subfigure}
    \begin{subfigure}{.245\textwidth}
        \includegraphics[width=\textwidth]{resources/acrobot-training_total_reward-dkl-training-strategy.png}
    \end{subfigure}
    \begin{subfigure}{.245\textwidth}
        \includegraphics[width=\textwidth]{resources/acrobot-training_total_reward-postnet-training-strategy.png}
    \end{subfigure}
    
    \begin{subfigure}{.245\textwidth}
        \includegraphics[width=\textwidth]{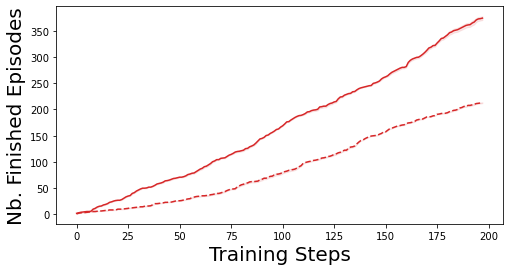}
    \end{subfigure}
    \begin{subfigure}{.245\textwidth}
        \includegraphics[width=\textwidth]{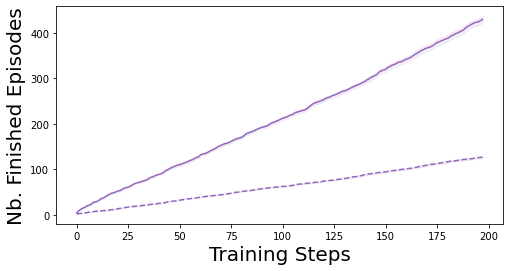}
    \end{subfigure}
    \begin{subfigure}{.245\textwidth}
        \includegraphics[width=\textwidth]{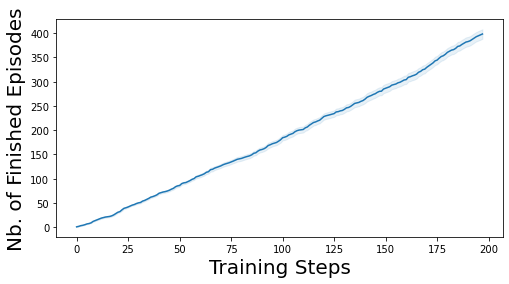}
    \end{subfigure}
    \begin{subfigure}{.245\textwidth}
        \includegraphics[width=\textwidth]{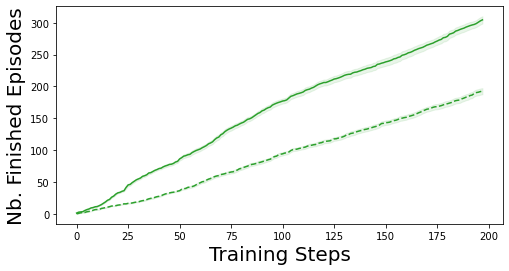}
    \end{subfigure}
    \caption{Comparison of the training performance on Acrobot. The four uncertainty methods use the sampling-aleatoric or the sampling-epistemic at training time. Ideally, an uncertainty aware-model should high reward with few samples.}
    \label{fig:strategy-training-performance-acrobot}
\end{figure}
\begin{figure}
    \centering
    \begin{subfigure}{.45\textwidth}
        \includegraphics[width=\textwidth]{resources/sampling-legend.png}
    \end{subfigure}
    \vspace{-3mm}
    
    \begin{subfigure}{.245\textwidth}
        \includegraphics[width=\textwidth]{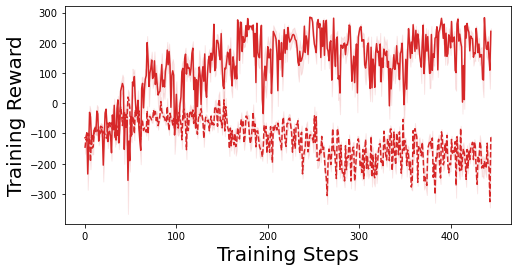}
    \end{subfigure}
    \begin{subfigure}{.245\textwidth}
        \includegraphics[width=\textwidth]{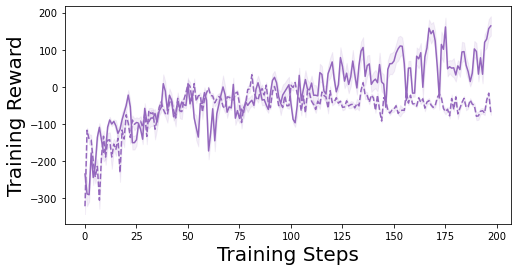}
    \end{subfigure}
    \begin{subfigure}{.245\textwidth}
        \includegraphics[width=\textwidth]{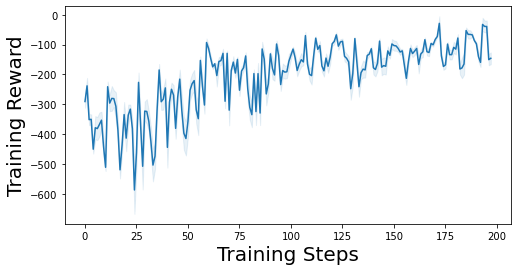}
    \end{subfigure}
    \begin{subfigure}{.245\textwidth}
        \includegraphics[width=\textwidth]{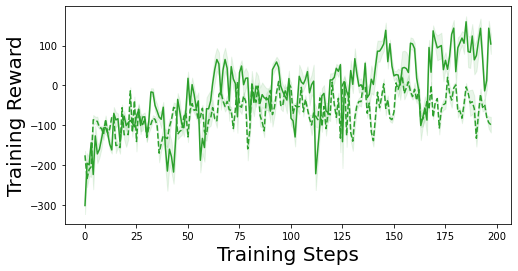}
    \end{subfigure}
    
    \begin{subfigure}{.245\textwidth}
        \includegraphics[width=\textwidth]{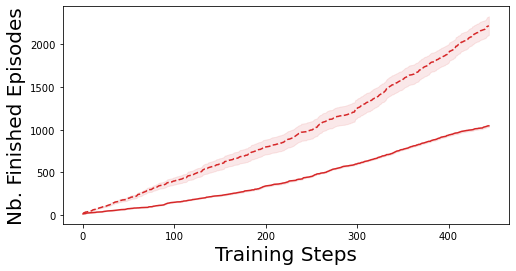}
    \end{subfigure}
    \begin{subfigure}{.245\textwidth}
        \includegraphics[width=\textwidth]{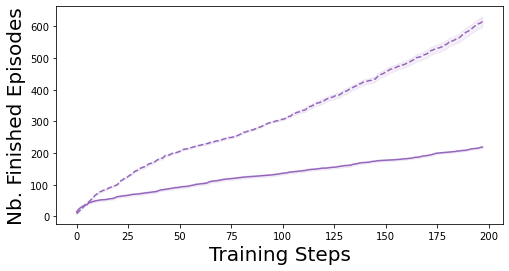}
    \end{subfigure}
    \begin{subfigure}{.245\textwidth}
        \includegraphics[width=\textwidth]{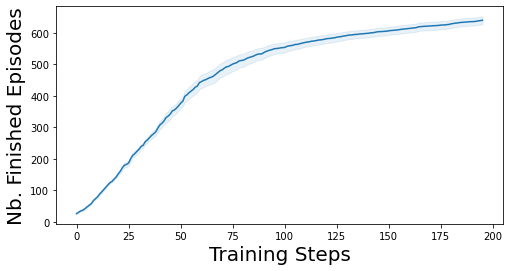}
    \end{subfigure}
    \begin{subfigure}{.245\textwidth}
        \includegraphics[width=\textwidth]{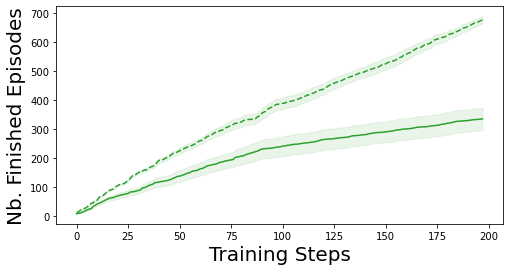}
    \end{subfigure}
    \caption{Comparison of the training performance on LunarLander. The four uncertainty methods use the sampling-aleatoric or the sampling-epistemic at training time. Ideally, an uncertainty aware-model should high reward with few samples.}
    \label{fig:strategy-training-performance-lunarlander}
\end{figure}

\subsection{Testing Time}

We show additional results in fig.~\ref{fig:model-testing-performance-cartpole}, fig.~\ref{fig:model-testing-performance-acrobot} and fig.~\ref{fig:model-testing-performance-lunarlander} to compare the generalization and OOD detection performance of the uncertainty estimates of the four uncertainty methods at testing time. The models use the sampling-epistemic or the sampling-aleatoric strategy at both training and testing time. Further, we show other additional results for OOD detection by using the area under the precision-recall (AUC-PR) scores instead of the area under the receiver operating characteristic curve (AUC-ROC) in fig.~\ref{fig:strategy-testing-ood-auc-pr-performance-cartpole}, fig.~\ref{fig:strategy-testing-ood-auc-pr-performance-acrobot}, fig.~\ref{fig:strategy-testing-ood-auc-pr-performance-lunarlander}. We observe that DKL and PostNet achieve  very high OOD detection scores in most settings  compared to DropOut and Ensemble. These \emph{empirical} results align with the \emph{theoretical} results stating that DKL and PostNet should assign high uncertainty to states very different from states observed during training. Thus, DKL and PostNet validate des.~\ref{ax:testing_state}. In particular, DKL and PostNet can reliably equip DQN with epistemic uncertainty estimates which can be used to flag anomalous OOD states.

\begin{figure}
    \centering
    \vspace{-7mm}
        \begin{subfigure}{.5\textwidth}
        \includegraphics[width=\textwidth]{resources/legend.png}
    \end{subfigure}
    \vspace{-5mm}
    
    \begin{subfigure}{.4\textwidth}
        \includegraphics[width=\textwidth]{resources/CartPole-v0-mean_reward_-testing-model.png}  
    \end{subfigure}
    \begin{subfigure}{.4\textwidth}
        \includegraphics[width=\textwidth]{resources/CartPoleOOD-v0-AUC-ROC-epistemic_-testing-model.png}
    \end{subfigure}
        \vspace{-3mm}
    \caption{Comparison of the testing performance of the four uncertainty methods using epsilon-greedy strategies at training and testing time on CartPole. Ideally, an uncertainty aware-model should achieve high reward and high OOD detection scores.}
    \label{fig:model-testing-performance-cartpole}
    \vspace{-4mm}
\end{figure}
\begin{figure}
    \centering
        \begin{subfigure}{.5\textwidth}
        \includegraphics[width=\textwidth]{resources/legend.png}
    \end{subfigure}
    \vspace{-5mm}
    
    \begin{subfigure}{.4\textwidth}
        \includegraphics[width=\textwidth]{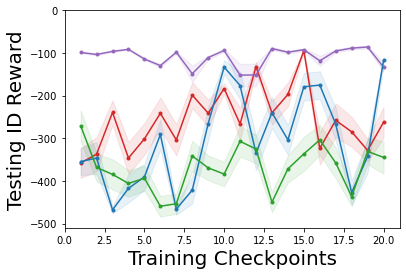}  
    \end{subfigure}
    \begin{subfigure}{.4\textwidth}
        \includegraphics[width=\textwidth]{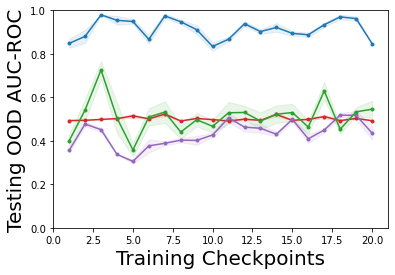}
    \end{subfigure}
    \caption{Comparison of the testing performance of the four uncertainty methods using epsilon-greedy strategies at training and testing time on Acrobot. Ideally, an uncertainty aware-model should achieve high reward and high OOD detection scores.}
    \label{fig:model-testing-performance-acrobot}
\end{figure}
\begin{figure}
    \centering
        \begin{subfigure}{.5\textwidth}
        \includegraphics[width=\textwidth]{resources/legend.png}
    \end{subfigure}
    \vspace{-5mm}
    
    \begin{subfigure}{.4\textwidth}
        \includegraphics[width=\textwidth]{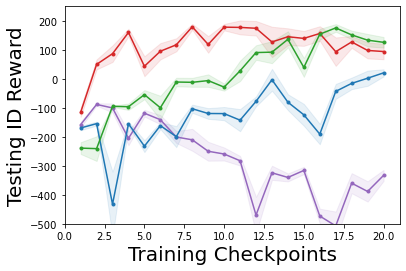}  
    \end{subfigure}
    \begin{subfigure}{.4\textwidth}
        \includegraphics[width=\textwidth]{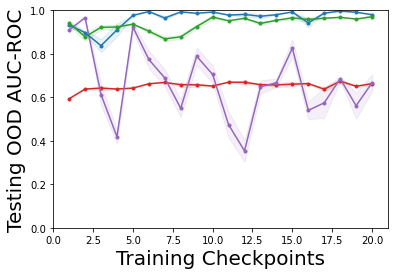}
    \end{subfigure}
    \caption{Comparison of the testing performance of the four uncertainty methods using epsilon-greedy strategies at training and testing time on LunarLander. Ideally, an uncertainty aware-model should achieve high reward and high OOD detection scores.}
    \label{fig:model-testing-performance-lunarlander}
\end{figure}

We show additional results in fig.~\ref{fig:strategy-testing-performance-cartpole}, fig.~\ref{fig:strategy-testing-performance-acrobot} and fig.~\ref{fig:strategy-testing-performance-lunarlander} to compare the performance of the sampling-epistemic and sampling-aleatoric strategies for each uncertainty model. All models use the same epsilon-greedy strategy at training time. We observe that the sampling-epistemic strategy is consistently better than sampling-aleatoric at testing time. The higher generalization capacity of the sampling-epistemic strategy aligns with \cite{epistemic-pomdp} which recasts the problem of generalization in RL as solving an epistemic POMDP. These empirical results underline the need to disentangle both aleatoric and epistemic uncertainty for high reward performance at testing time.

\begin{figure}
    \centering
    \begin{subfigure}{.45\textwidth}
        \includegraphics[width=\textwidth]{resources/sampling-legend.png}
    \end{subfigure}
    \vspace{-3mm}
    
    \begin{subfigure}{.245\textwidth}
        \includegraphics[width=\textwidth]{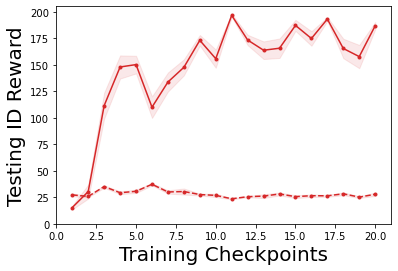}
    \end{subfigure}
    \begin{subfigure}{.245\textwidth}
        \includegraphics[width=\textwidth]{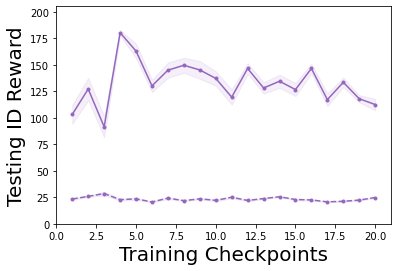}
    \end{subfigure}
    \begin{subfigure}{.245\textwidth}
        \includegraphics[width=\textwidth]{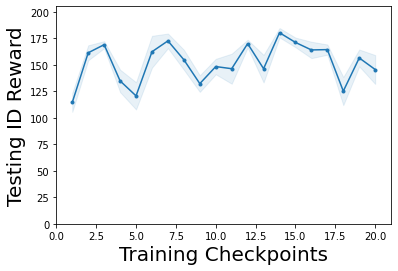}
    \end{subfigure}
    \begin{subfigure}{.245\textwidth}
        \includegraphics[width=\textwidth]{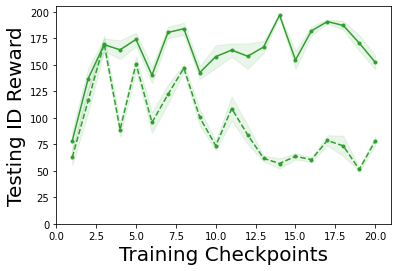}
    \end{subfigure}
    
    \begin{subfigure}{.245\textwidth}
        \includegraphics[width=\textwidth]{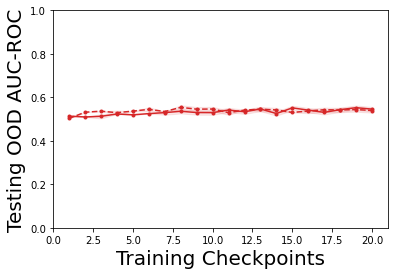}
    \end{subfigure}
    \begin{subfigure}{.245\textwidth}
        \includegraphics[width=\textwidth]{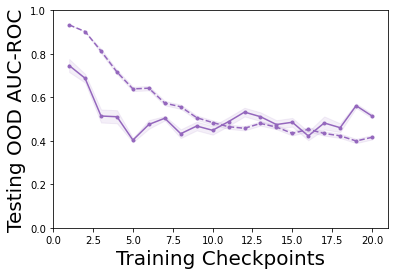}
    \end{subfigure}
    \begin{subfigure}{.245\textwidth}
        \includegraphics[width=\textwidth]{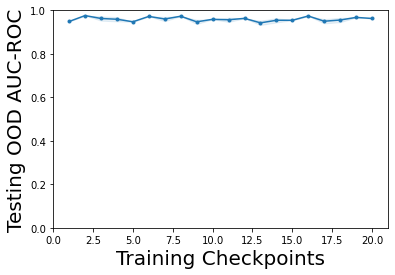}
    \end{subfigure}
    \begin{subfigure}{.245\textwidth}
        \includegraphics[width=\textwidth]{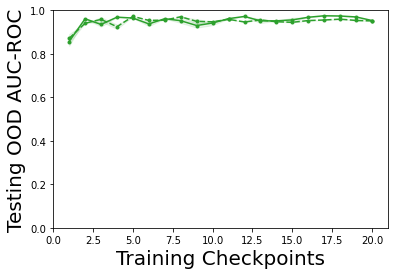}
    \end{subfigure}
    \caption{Comparison of the testing reward and OOD performance on CartPole. The four uncertainty methods use the sampling-aleatoric or sampling-epistemic strategies at both training and testing time. Ideally, an uncertainty aware-model should achieve high testing reward and high OOD AUC-ROC detection score.}
    \label{fig:strategy-testing-performance-cartpole}
\end{figure}
\begin{figure}
    \centering
    \begin{subfigure}{.45\textwidth}
        \includegraphics[width=\textwidth]{resources/sampling-legend.png}
    \end{subfigure}
    \vspace{-3mm}
    
    \begin{subfigure}{.245\textwidth}
        \includegraphics[width=\textwidth]{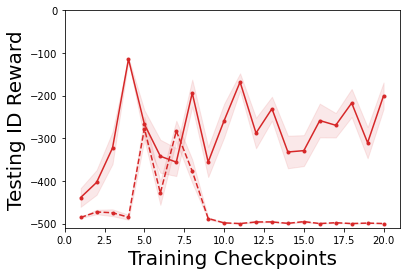}
    \end{subfigure}
    \begin{subfigure}{.245\textwidth}
        \includegraphics[width=\textwidth]{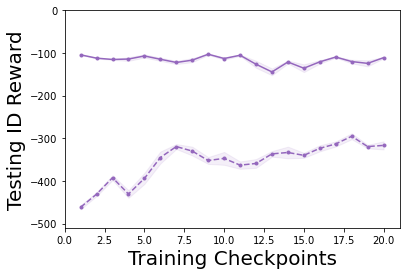}
    \end{subfigure}
    \begin{subfigure}{.245\textwidth}
        \includegraphics[width=\textwidth]{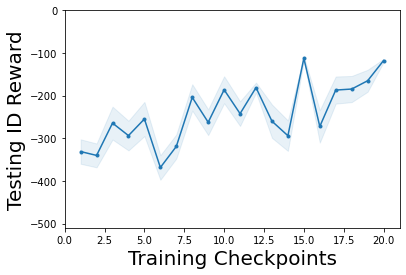}
    \end{subfigure}
    \begin{subfigure}{.245\textwidth}
        \includegraphics[width=\textwidth]{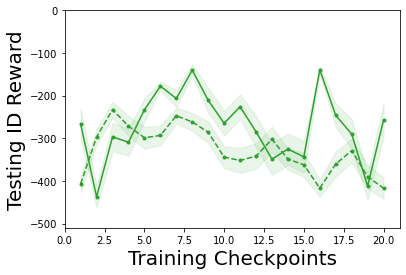}
    \end{subfigure}
    
    \begin{subfigure}{.245\textwidth}
        \includegraphics[width=\textwidth]{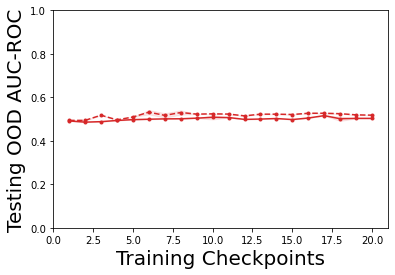}
    \end{subfigure}
    \begin{subfigure}{.245\textwidth}
        \includegraphics[width=\textwidth]{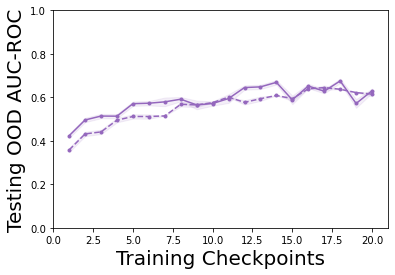}
    \end{subfigure}
    \begin{subfigure}{.245\textwidth}
        \includegraphics[width=\textwidth]{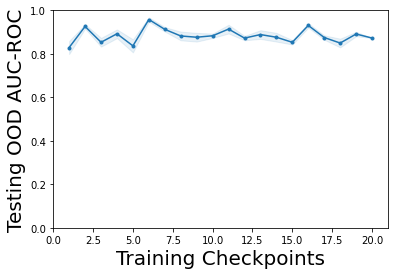}
    \end{subfigure}
    \begin{subfigure}{.245\textwidth}
        \includegraphics[width=\textwidth]{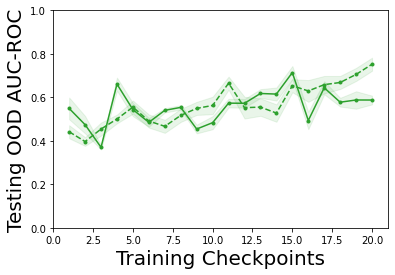}
    \end{subfigure}
    \caption{Comparison of the testing reward and OOD performance on Acrobot. The four uncertainty methods use the sampling-aleatoric or sampling-epistemic strategies at both training and testing time. Ideally, an uncertainty aware-model should achieve high testing reward and high OOD AUC-ROC detection score.}
    \label{fig:strategy-testing-performance-acrobot}
\end{figure}

We show additional results in fig.~\ref{fig:strategy-state-shift-testing-performance-cartpole}, fig.~\ref{fig:strategy-state-shift-testing-performance-acrobot}, fig.~\ref{fig:strategy-state-shift-testing-performance-lunarlander}, fig.~\ref{fig:strategy-action-shift-testing-performance-cartpole}, fig.~\ref{fig:strategy-action-shift-testing-performance-acrobot}, fig.~\ref{fig:strategy-action-shift-testing-performance-lunarlander}, fig.~\ref{fig:strategy-transition-shift-testing-performance-cartpole}, fig.~\ref{fig:strategy-transition-shift-testing-performance-acrobot}, fig.~\ref{fig:strategy-transition-shift-testing-performance-lunarlander} to compare the generalization and uncertainty performances of the sampling-epistemic and sampling-aleatoric strategies of each method on perturbed environments with state, action and transition dynamic perturbations. All methods achieve lower reward on environment with stronger perturbations. This is expected since a model cannot generalize to all new environments. The sampling-epistemic strategy achieves significantly better that the sampling-aleatoric strategy. The generalization capacity of the sampling-epistemic strategy aligns again with \cite{epistemic-pomdp}. Thus, differentiating between aleatoric and epistemic uncertainty can improve generalization. Finally, only DKL and PostNet reliably assign higher epistemic uncertainty to most of the perturbation types. Therefore, DKL and PostNet have a good trade-off between generalization and detection of new perturbed environments.

\textbf{Video:} For a better visualization, we provide supplementary videos on the project page \url{https://www.cs.cit.tum.de/daml/aleatoric-epistemic-uncertainty-rl/}. The videos show the landing performance, the reward performance, and the relative epistemic uncertainty prediction of the PostNet model in the original LunarLander environments and two environments with perturbed states with perturbation strengths equal to $0.5$ and $2.0$. On the original environment, we observe that the space ship lands correctly with lower epistemic uncertainty after landing. On the perturbed environment with strength $0.5$, we observe that the space ship avoids crashing but assigns higher epistemic uncertainty when moving further from the landing zone. Finally, on the perturbed environment with strength $2.0$, we observe that the space ship assigns significantly higher epistemic uncertainty especially when approaching the floor before the crash.

\begin{figure}
    \centering
    \begin{subfigure}{.45\textwidth}
        \includegraphics[width=\textwidth]{resources/sampling-legend.png}
    \end{subfigure}
    \vspace{-3mm}
    
    \begin{subfigure}{.245\textwidth}
        \includegraphics[width=\textwidth]{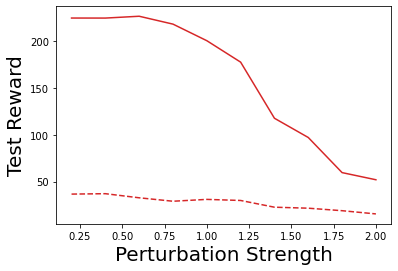}
    \end{subfigure}
    \begin{subfigure}{.245\textwidth}
        \includegraphics[width=\textwidth]{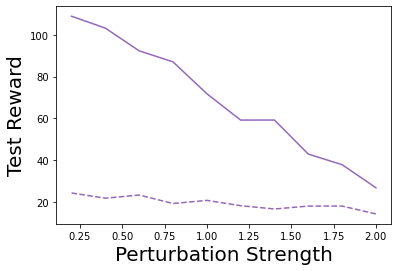}
    \end{subfigure}
    \begin{subfigure}{.245\textwidth}
        \includegraphics[width=\textwidth]{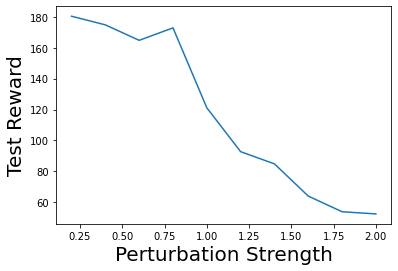}
    \end{subfigure}
    \begin{subfigure}{.245\textwidth}
        \includegraphics[width=\textwidth]{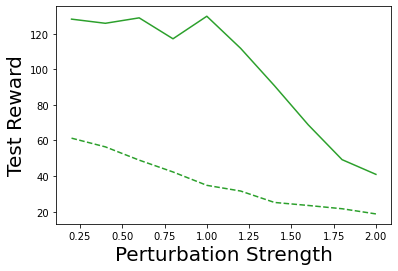}
    \end{subfigure}
    
    \begin{subfigure}{.245\textwidth}
        \includegraphics[width=\textwidth]{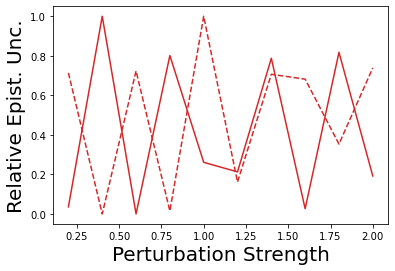}
    \end{subfigure}
    \begin{subfigure}{.245\textwidth}
        \includegraphics[width=\textwidth]{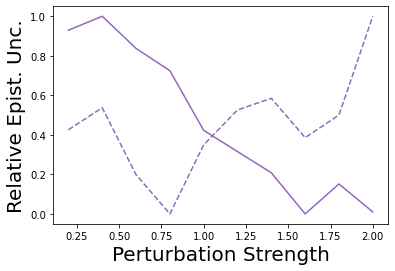}
    \end{subfigure}
    \begin{subfigure}{.245\textwidth}
        \includegraphics[width=\textwidth]{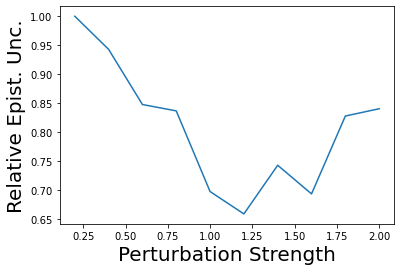}
    \end{subfigure}
    \begin{subfigure}{.245\textwidth}
        \includegraphics[width=\textwidth]{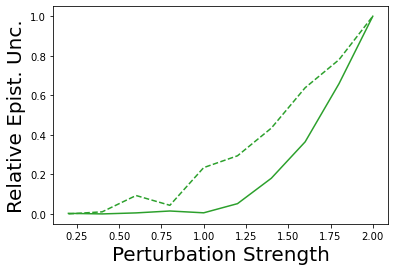}
    \end{subfigure}
    \caption{Comparison of the testing performance and the epistemic uncertainty predictions on CartPole with perturbed actions. The four uncertainty methods use the epsilon-greedy strategy at training time and the sampling-aleatoric or sampling-epistemic strategy at testing time. Ideally, an uncertainty-aware model should maintain high reward while assigning higher epistemic uncertainty on more severe perturbations.}
    \label{fig:strategy-action-shift-testing-performance-cartpole}
\end{figure}
\begin{figure}
    \centering
        \vspace{-3mm}
    \begin{subfigure}{.45\textwidth}
        \includegraphics[width=\textwidth]{resources/sampling-legend.png}
    \end{subfigure}
    \vspace{-3mm}
    
    \begin{subfigure}{.245\textwidth}
        \includegraphics[width=\textwidth]{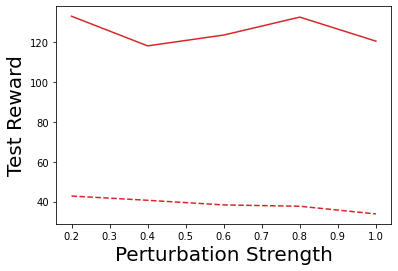}
    \end{subfigure}
    \begin{subfigure}{.245\textwidth}
        \includegraphics[width=\textwidth]{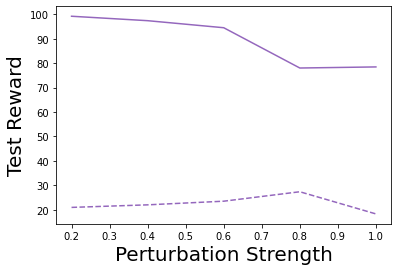}
    \end{subfigure}
    \begin{subfigure}{.245\textwidth}
        \includegraphics[width=\textwidth]{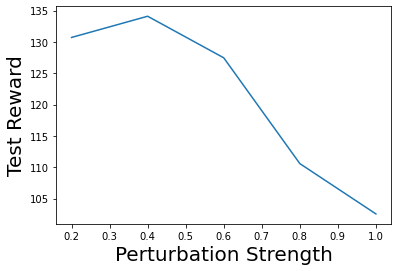}
    \end{subfigure}
    \begin{subfigure}{.245\textwidth}
        \includegraphics[width=\textwidth]{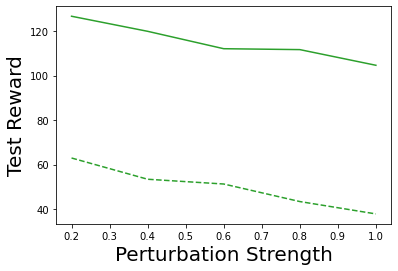}
    \end{subfigure}
    
    \begin{subfigure}{.245\textwidth}
        \includegraphics[width=\textwidth]{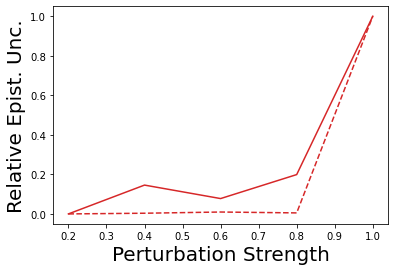}
    \end{subfigure}
    \begin{subfigure}{.245\textwidth}
        \includegraphics[width=\textwidth]{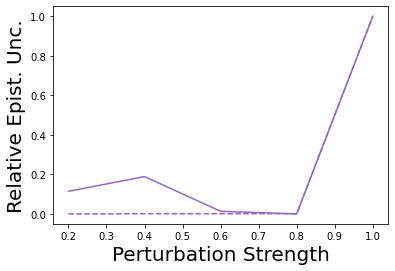}
    \end{subfigure}
    \begin{subfigure}{.245\textwidth}
        \includegraphics[width=\textwidth]{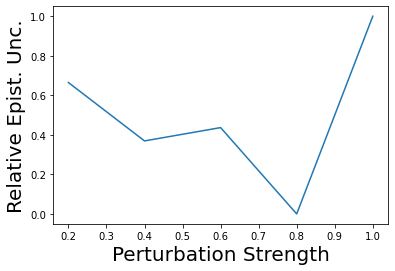}
    \end{subfigure}
    \begin{subfigure}{.245\textwidth}
        \includegraphics[width=\textwidth]{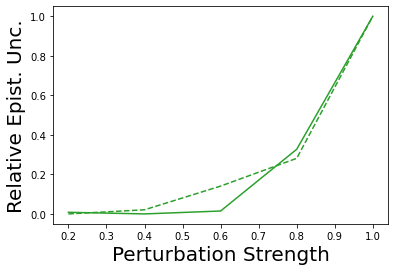}
    \end{subfigure}
        \vspace{-2mm}
    \caption{Comparison of the testing performance and the epistemic uncertainty predictions on CartPole with perturbed transitions. The four uncertainty methods use the epsilon-greedy strategy at training time and the sampling-aleatoric or sampling-epistemic strategy at testing time. Ideally, an uncertainty-aware model should maintain high reward while assigning higher epistemic uncertainty on more severe perturbations.}
    \label{fig:strategy-transition-shift-testing-performance-cartpole}
        \vspace{-3mm}
\end{figure}
\begin{figure}
    \centering
        \vspace{-6mm}
    \begin{subfigure}{.45\textwidth}
        \includegraphics[width=\textwidth]{resources/sampling-legend.png}
    \end{subfigure}
    \vspace{-3mm}
    
    \begin{subfigure}{.245\textwidth}
        \includegraphics[width=\textwidth]{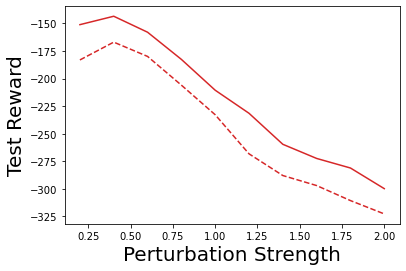}
    \end{subfigure}
    \begin{subfigure}{.245\textwidth}
        \includegraphics[width=\textwidth]{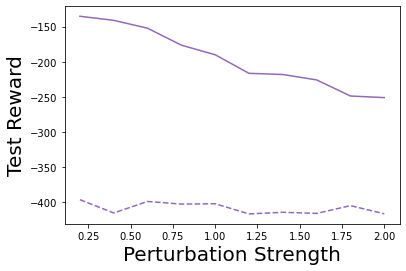}
    \end{subfigure}
    \begin{subfigure}{.245\textwidth}
        \includegraphics[width=\textwidth]{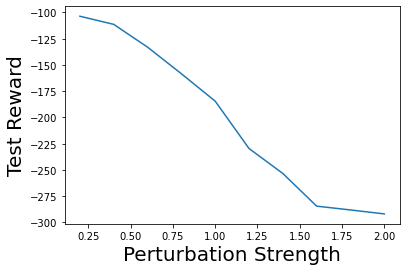}
    \end{subfigure}
    \begin{subfigure}{.245\textwidth}
        \includegraphics[width=\textwidth]{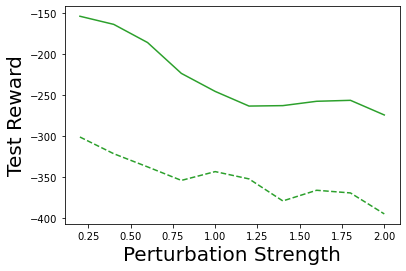}
    \end{subfigure}
    
    \begin{subfigure}{.245\textwidth}
        \includegraphics[width=\textwidth]{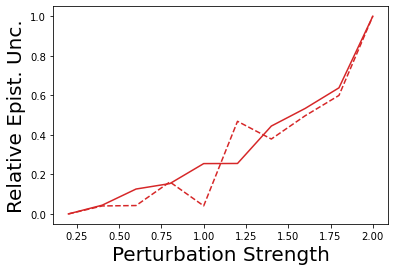}
    \end{subfigure}
    \begin{subfigure}{.245\textwidth}
        \includegraphics[width=\textwidth]{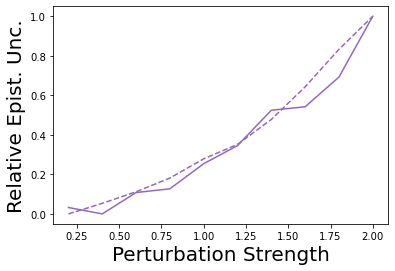}
    \end{subfigure}
    \begin{subfigure}{.245\textwidth}
        \includegraphics[width=\textwidth]{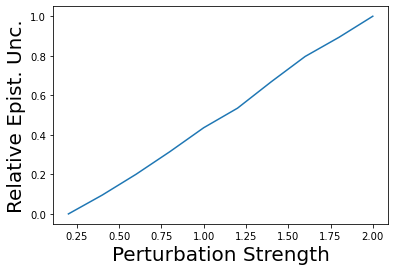}
    \end{subfigure}
    \begin{subfigure}{.245\textwidth}
        \includegraphics[width=\textwidth]{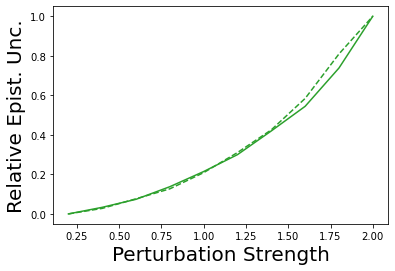}
    \end{subfigure}
        \vspace{-2mm}
    \caption{Comparison of the testing performance and the epistemic uncertainty predictions on Acrobot with perturbed states. The four uncertainty methods use the epsilon-greedy strategy at training time and the sampling-aleatoric or sampling-epistemic strategy at testing time. Ideally, an uncertainty-aware model should maintain high reward while assigning higher epistemic uncertainty on more severe perturbations.}
    \label{fig:strategy-state-shift-testing-performance-acrobot}
        \vspace{-6mm}
\end{figure}
\begin{figure}
    \centering
    \begin{subfigure}{.45\textwidth}
        \includegraphics[width=\textwidth]{resources/sampling-legend.png}
    \end{subfigure}
    \vspace{-3mm}
    
    \begin{subfigure}{.245\textwidth}
        \includegraphics[width=\textwidth]{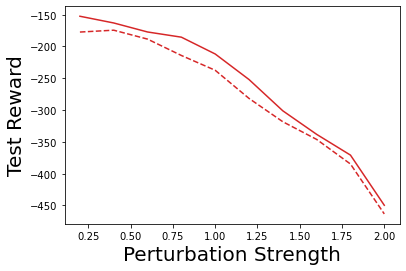}
    \end{subfigure}
    \begin{subfigure}{.245\textwidth}
        \includegraphics[width=\textwidth]{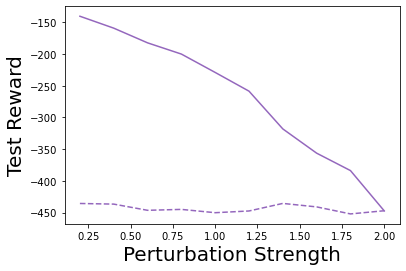}
    \end{subfigure}
    \begin{subfigure}{.245\textwidth}
        \includegraphics[width=\textwidth]{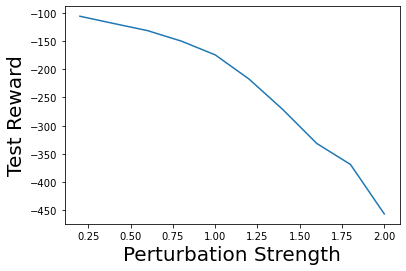}
    \end{subfigure}
    \begin{subfigure}{.245\textwidth}
        \includegraphics[width=\textwidth]{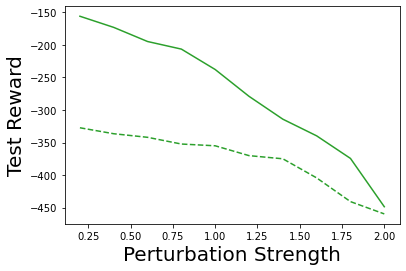}
    \end{subfigure}
    
    \begin{subfigure}{.245\textwidth}
        \includegraphics[width=\textwidth]{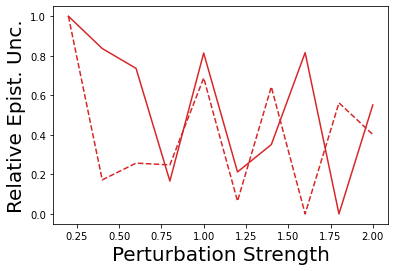}
    \end{subfigure}
    \begin{subfigure}{.245\textwidth}
        \includegraphics[width=\textwidth]{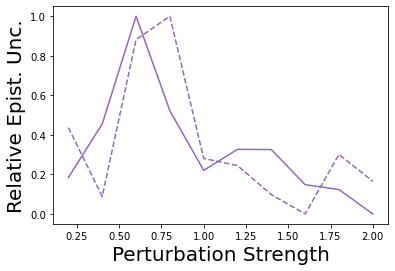}
    \end{subfigure}
    \begin{subfigure}{.245\textwidth}
        \includegraphics[width=\textwidth]{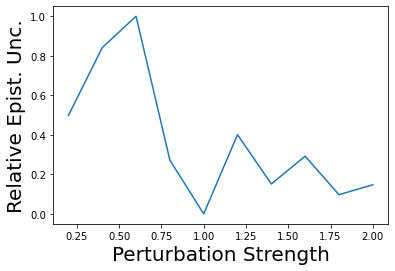}
    \end{subfigure}
    \begin{subfigure}{.245\textwidth}
        \includegraphics[width=\textwidth]{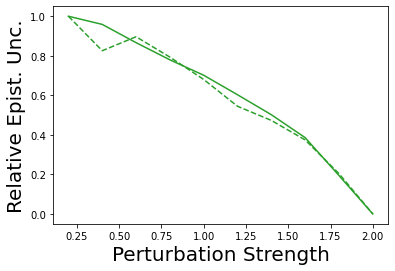}
    \end{subfigure}
    \caption{Comparison of the testing performance and the epistemic uncertainty predictions on Acrobot with perturbed actions. The four uncertainty methods use the epsilon-greedy strategy at training time and the sampling-aleatoric or sampling-epistemic strategy at testing time. Ideally, an uncertainty-aware model should maintain high reward while assigning higher epistemic uncertainty on more severe perturbations.}
    \label{fig:strategy-action-shift-testing-performance-acrobot}
\end{figure}
\begin{figure}
    \centering
        \vspace{-3mm}
    \begin{subfigure}{.45\textwidth}
        \includegraphics[width=\textwidth]{resources/sampling-legend.png}
    \end{subfigure}
    \vspace{-3mm}
    
    \begin{subfigure}{.245\textwidth}
        \includegraphics[width=\textwidth]{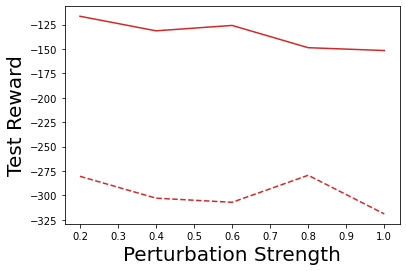}
    \end{subfigure}
    \begin{subfigure}{.245\textwidth}
        \includegraphics[width=\textwidth]{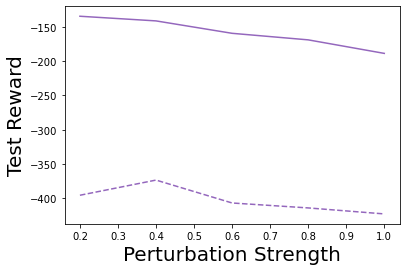}
    \end{subfigure}
    \begin{subfigure}{.245\textwidth}
        \includegraphics[width=\textwidth]{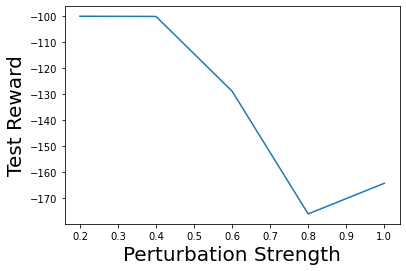}
    \end{subfigure}
    \begin{subfigure}{.245\textwidth}
        \includegraphics[width=\textwidth]{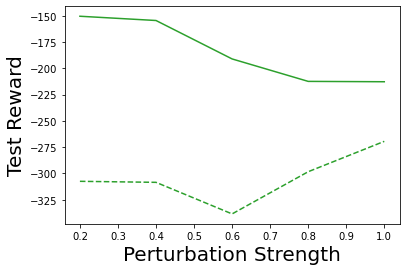}
    \end{subfigure}
    
    \begin{subfigure}{.245\textwidth}
        \includegraphics[width=\textwidth]{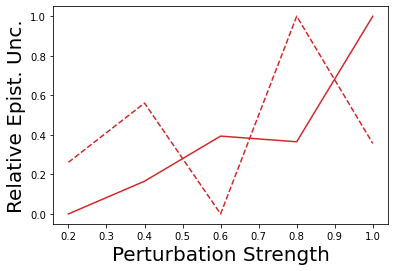}
    \end{subfigure}
    \begin{subfigure}{.245\textwidth}
        \includegraphics[width=\textwidth]{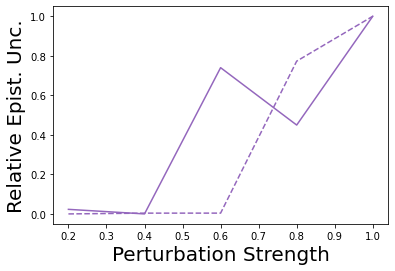}
    \end{subfigure}
    \begin{subfigure}{.245\textwidth}
        \includegraphics[width=\textwidth]{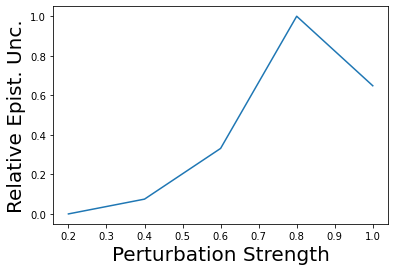}
    \end{subfigure}
    \begin{subfigure}{.245\textwidth}
        \includegraphics[width=\textwidth]{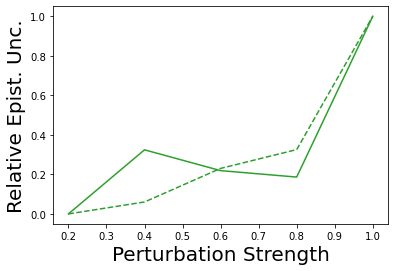}
    \end{subfigure}
        \vspace{-2mm}
    \caption{Comparison of the testing performance and the epistemic uncertainty predictions on Acrobot with perturbed transitions. The four uncertainty methods use the epsilon-greedy strategy at training time and the sampling-aleatoric or sampling-epistemic strategy at testing time. Ideally, an uncertainty-aware model should maintain high reward while assigning higher epistemic uncertainty on more severe perturbations.}
    \label{fig:strategy-transition-shift-testing-performance-acrobot}
        \vspace{-3mm}
\end{figure}
\begin{figure}
    \centering
        \vspace{-6mm}
    \begin{subfigure}{.45\textwidth}
        \includegraphics[width=\textwidth]{resources/sampling-legend.png}
    \end{subfigure}
    \vspace{-3mm}
    
    \begin{subfigure}{.245\textwidth}
        \includegraphics[width=\textwidth]{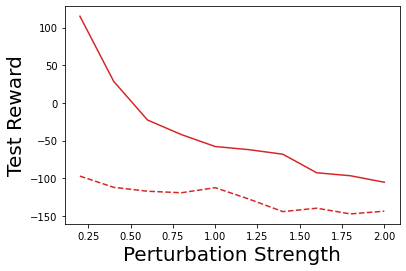}
    \end{subfigure}
    \begin{subfigure}{.245\textwidth}
        \includegraphics[width=\textwidth]{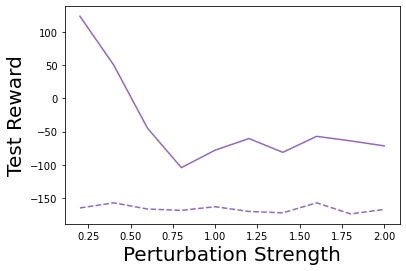}
    \end{subfigure}
    \begin{subfigure}{.245\textwidth}
        \includegraphics[width=\textwidth]{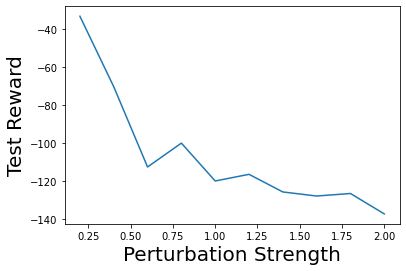}
    \end{subfigure}
    \begin{subfigure}{.245\textwidth}
        \includegraphics[width=\textwidth]{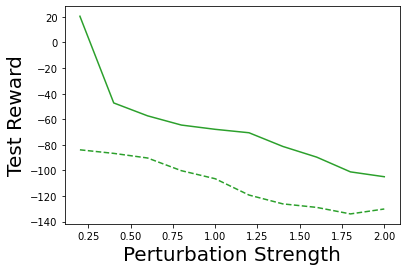}
    \end{subfigure}
    
    \begin{subfigure}{.245\textwidth}
        \includegraphics[width=\textwidth]{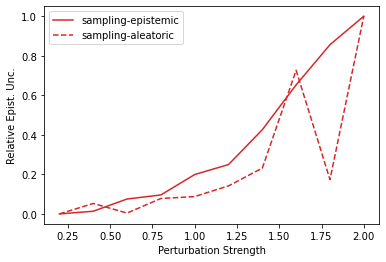}
    \end{subfigure}
    \begin{subfigure}{.245\textwidth}
        \includegraphics[width=\textwidth]{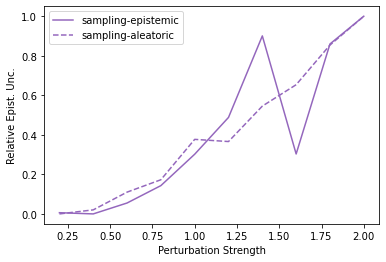}
    \end{subfigure}
    \begin{subfigure}{.245\textwidth}
        \includegraphics[width=\textwidth]{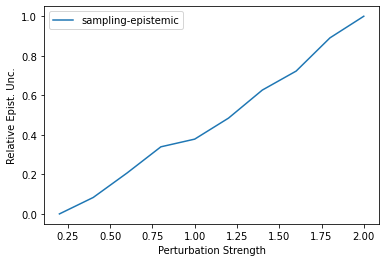}
    \end{subfigure}
    \begin{subfigure}{.245\textwidth}
        \includegraphics[width=\textwidth]{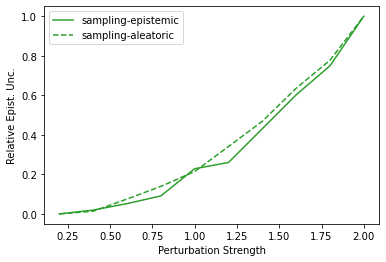}
    \end{subfigure}
        \vspace{-2mm}
    \caption{Comparison of the testing performance and the epistemic uncertainty predictions on LunarLander with perturbed states. The four uncertainty methods use the epsilon-greedy strategy at training time and the sampling-aleatoric or sampling-epistemic strategy at testing time. Ideally, an uncertainty-aware model should maintain high reward while assigning higher epistemic uncertainty on more severe perturbations.}
    \label{fig:strategy-state-shift-testing-performance-lunarlander}
        \vspace{-6mm}
\end{figure}
\begin{figure}
    \centering
    \begin{subfigure}{.45\textwidth}
        \includegraphics[width=\textwidth]{resources/sampling-legend.png}
    \end{subfigure}
    \vspace{-3mm}
    
    \begin{subfigure}{.245\textwidth}
        \includegraphics[width=\textwidth]{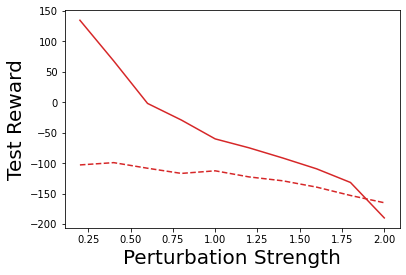}
    \end{subfigure}
    \begin{subfigure}{.245\textwidth}
        \includegraphics[width=\textwidth]{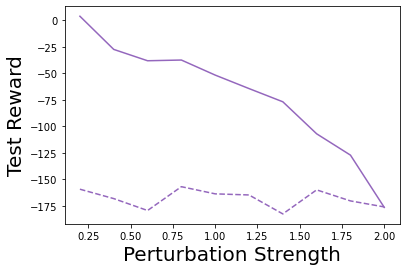}
    \end{subfigure}
    \begin{subfigure}{.245\textwidth}
        \includegraphics[width=\textwidth]{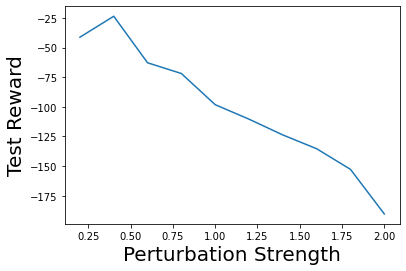}
    \end{subfigure}
    \begin{subfigure}{.245\textwidth}
        \includegraphics[width=\textwidth]{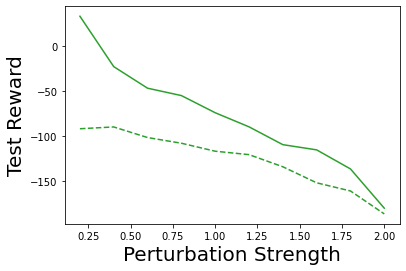}
    \end{subfigure}
    
    \begin{subfigure}{.245\textwidth}
        \includegraphics[width=\textwidth]{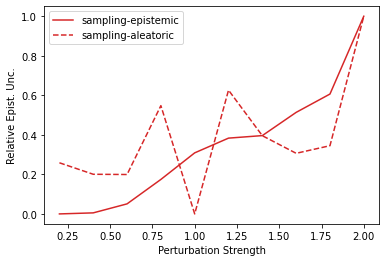}
    \end{subfigure}
    \begin{subfigure}{.245\textwidth}
        \includegraphics[width=\textwidth]{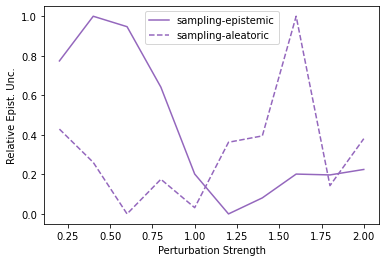}
    \end{subfigure}
    \begin{subfigure}{.245\textwidth}
        \includegraphics[width=\textwidth]{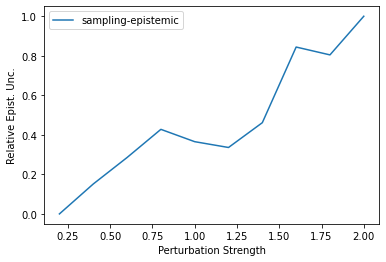}
    \end{subfigure}
    \begin{subfigure}{.245\textwidth}
        \includegraphics[width=\textwidth]{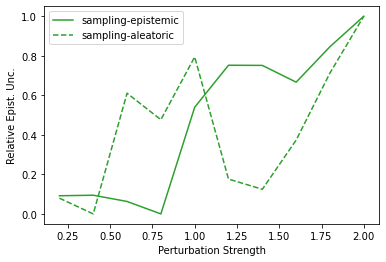}
    \end{subfigure}
    \caption{Comparison of the testing performance and the epistemic uncertainty predictions on LunarLander with perturbed actions. The four uncertainty methods use the epsilon-greedy strategy at training time and the sampling-aleatoric or sampling-epistemic strategy at testing time. Ideally, an uncertainty-aware model should maintain high reward while assigning higher epistemic uncertainty on more severe perturbations.}
    \label{fig:strategy-action-shift-testing-performance-lunarlander}
\end{figure}
\begin{figure}
    \centering
        \vspace{-3mm}
    \begin{subfigure}{.45\textwidth}
        \includegraphics[width=\textwidth]{resources/sampling-legend.png}
    \end{subfigure}
    \vspace{-3mm}
    
    \begin{subfigure}{.245\textwidth}
        \includegraphics[width=\textwidth]{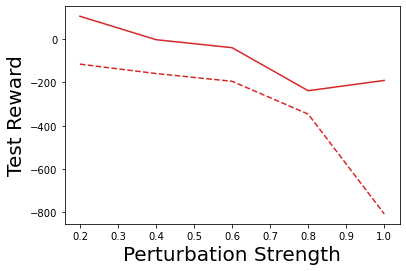}
    \end{subfigure}
    \begin{subfigure}{.245\textwidth}
        \includegraphics[width=\textwidth]{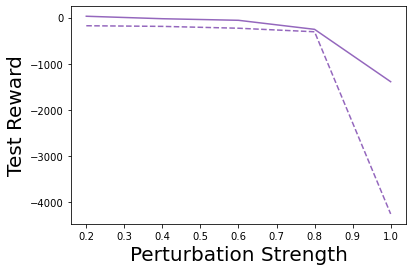}
    \end{subfigure}
    \begin{subfigure}{.245\textwidth}
        \includegraphics[width=\textwidth]{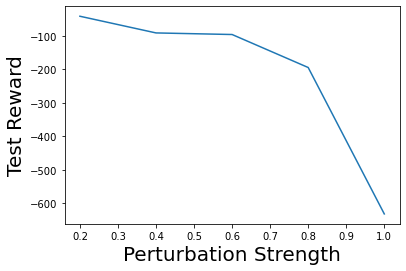}
    \end{subfigure}
    \begin{subfigure}{.245\textwidth}
        \includegraphics[width=\textwidth]{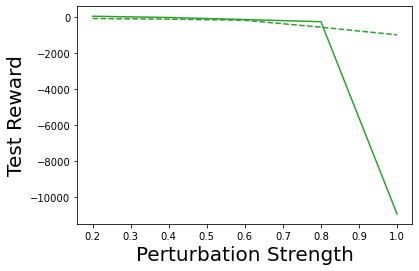}
    \end{subfigure}
    
    \begin{subfigure}{.245\textwidth}
        \includegraphics[width=\textwidth]{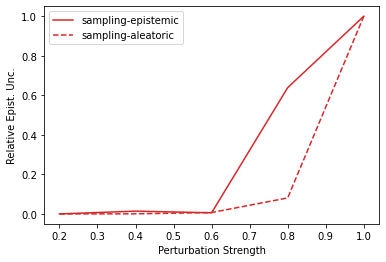}
    \end{subfigure}
    \begin{subfigure}{.245\textwidth}
        \includegraphics[width=\textwidth]{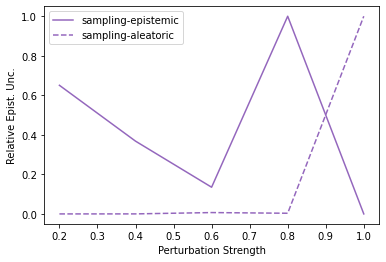}
    \end{subfigure}
    \begin{subfigure}{.245\textwidth}
        \includegraphics[width=\textwidth]{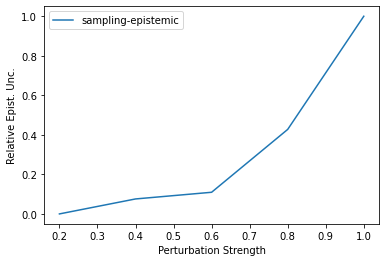}
    \end{subfigure}
    \begin{subfigure}{.245\textwidth}
        \includegraphics[width=\textwidth]{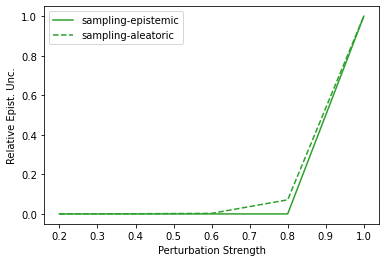}
    \end{subfigure}
        \vspace{-2mm}
    \caption{Comparison of the testing performance and the epistemic uncertainty predictions on LunarLander with perturbed transitions. The four uncertainty methods use the epsilon-greedy strategy at training time and the sampling-aleatoric or sampling-epistemic strategy at testing time. Ideally, an uncertainty-aware model should maintain high reward while assigning higher epistemic uncertainty on more severe perturbations.}
    \label{fig:strategy-transition-shift-testing-performance-lunarlander}
        \vspace{-3mm}
\end{figure}

\begin{figure}
    \centering
        \vspace{-3mm}
    \begin{subfigure}{.45\textwidth}
        \includegraphics[width=\textwidth]{resources/sampling-legend.png}
    \end{subfigure}
    \vspace{-3mm}
    
    \begin{subfigure}{.245\textwidth}
        \includegraphics[width=\textwidth]{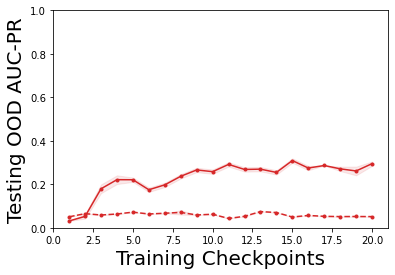}
    \end{subfigure}
    \begin{subfigure}{.245\textwidth}
        \includegraphics[width=\textwidth]{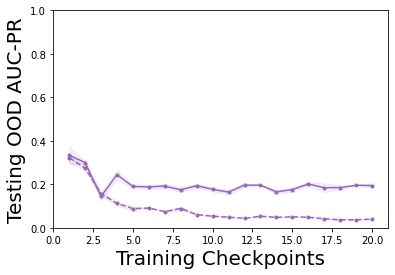}
    \end{subfigure}
    \begin{subfigure}{.245\textwidth}
        \includegraphics[width=\textwidth]{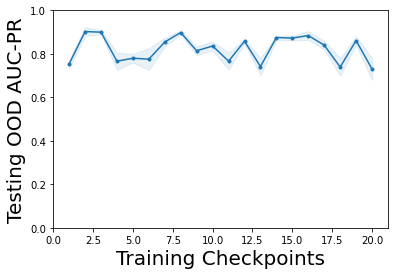}
    \end{subfigure}
    \begin{subfigure}{.245\textwidth}
        \includegraphics[width=\textwidth]{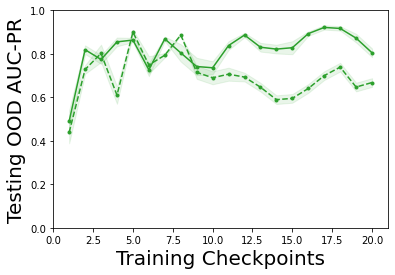}
    \end{subfigure}
        \vspace{-3mm}
    \caption{Comparison of the OOD performance on CartPole. The four uncertainty methods use the sampling-aleatoric or sampling-epistemic strategies at both training and testing time. Ideally, an uncertainty aware-model should achieve high testing reward and high OOD AUC-PR detection score.}
    \label{fig:strategy-testing-ood-auc-pr-performance-cartpole}
        \vspace{-3mm}
\end{figure}
\begin{figure}
    \centering
        \vspace{-3mm}
    \begin{subfigure}{.45\textwidth}
        \includegraphics[width=\textwidth]{resources/sampling-legend.png}
    \end{subfigure}
    \vspace{-3mm}
    
    \begin{subfigure}{.245\textwidth}
        \includegraphics[width=\textwidth]{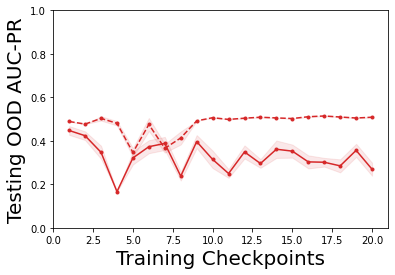}
    \end{subfigure}
    \begin{subfigure}{.245\textwidth}
        \includegraphics[width=\textwidth]{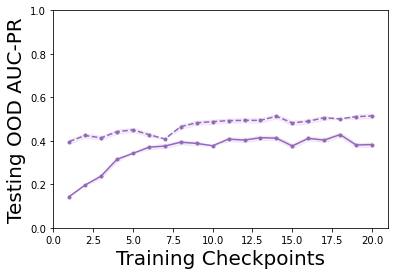}
    \end{subfigure}
    \begin{subfigure}{.245\textwidth}
        \includegraphics[width=\textwidth]{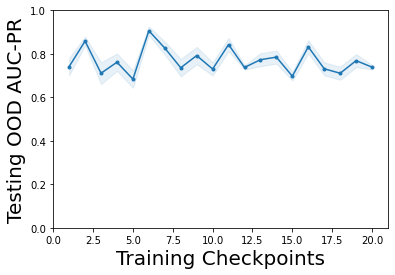}
    \end{subfigure}
    \begin{subfigure}{.245\textwidth}
        \includegraphics[width=\textwidth]{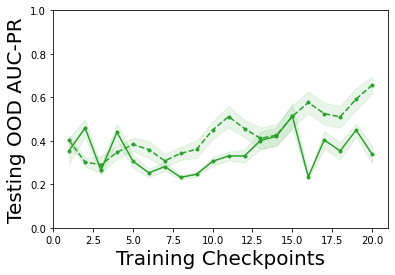}
    \end{subfigure}
        \vspace{-3mm}
    \caption{Comparison of the OOD performance on Acrobot. The four uncertainty methods use the sampling-aleatoric or sampling-epistemic strategies at both training and testing time. Ideally, an uncertainty aware-model should achieve high testing reward and high OOD AUC-PR detection score.}
    \label{fig:strategy-testing-ood-auc-pr-performance-acrobot}
        \vspace{-3mm}
\end{figure}
\begin{figure}
    \centering
        \vspace{-3mm}
    \begin{subfigure}{.45\textwidth}
        \includegraphics[width=\textwidth]{resources/sampling-legend.png}
    \end{subfigure}
    \vspace{-3mm}
    
    \begin{subfigure}{.245\textwidth}
        \includegraphics[width=\textwidth]{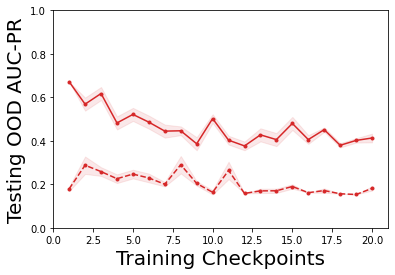}
    \end{subfigure}
    \begin{subfigure}{.245\textwidth}
        \includegraphics[width=\textwidth]{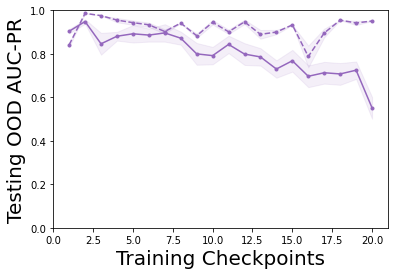}
    \end{subfigure}
    \begin{subfigure}{.245\textwidth}
        \includegraphics[width=\textwidth]{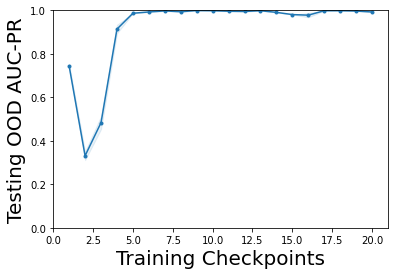}
    \end{subfigure}
    \begin{subfigure}{.245\textwidth}
        \includegraphics[width=\textwidth]{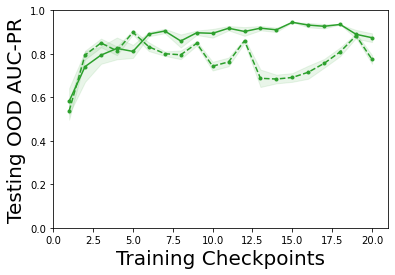}
    \end{subfigure}
        \vspace{-3mm}
    \caption{Comparison of the OOD performance on LunarLander. The four uncertainty methods use the sampling-aleatoric or sampling-epistemic strategies at both training and testing time. Ideally, an uncertainty aware-model should achieve high testing reward and high OOD AUC-PR detection score.}
    \label{fig:strategy-testing-ood-auc-pr-performance-lunarlander}
        \vspace{-3mm}
\end{figure}

\subsection{Comparison with Vanilla DQN}

We show additional results in fig.~\ref{fig:camprison-vanilla-cartpole}, fig.~\ref{fig:camprison-vanilla-acrobot}, fig.~\ref{fig:camprison-vanilla-lunarlander} to compare the sample efficiency and the generalization capacity of the uncertainty models with the vanilla DQN. The vanilla DQN is not eqquiped by default with uncertainty estimates. Therefore, it cannot be used for uncertainty tasks like OOD detection. For the sake of comparison, all models use the epsilon-greedy strategy. We observe that the vanilla DQN achieve significantly lower sample efficiency on CartPole. Further, it achieves less stable generalization performance on LunarLander. In contrast, the four uncertainty methods achieve higher generalization performance especially when using the sampling epistemic strategy (see fig.~\ref{fig:strategy-testing-performance-cartpole}, fig.~\ref{fig:strategy-testing-performance-acrobot} and fig.~\ref{fig:strategy-testing-performance-lunarlander}). These results underline the benefit of predicting and disentangling the aleatoric and the epistemic uncertainty for better sample efficiency and generalization performance.

\begin{figure}
    \centering
    \begin{subfigure}{.7\textwidth}
        \includegraphics[width=\textwidth]{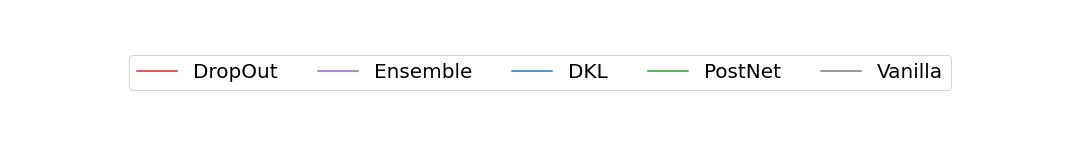}
    \end{subfigure}
    \vspace{-3mm}
    
    \begin{subfigure}{.3\textwidth}
        \includegraphics[width=\textwidth]{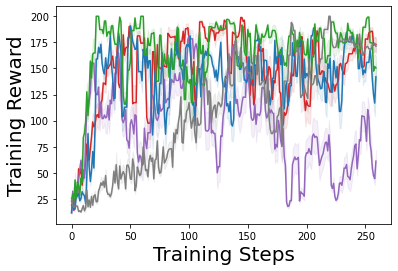}
    \end{subfigure}
    \begin{subfigure}{.3\textwidth}
        \includegraphics[width=\textwidth]{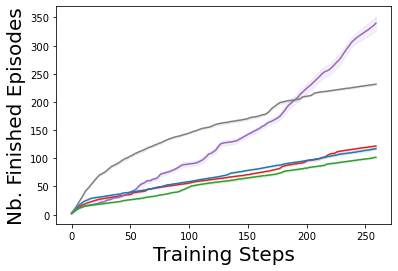}
    \end{subfigure}
    \begin{subfigure}{.3\textwidth}
        \includegraphics[width=\textwidth]{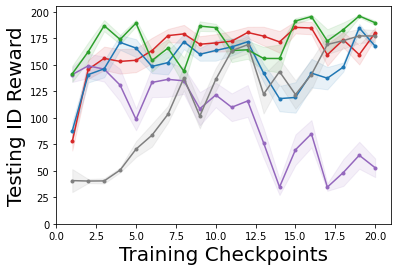}
    \end{subfigure}

        \caption{Comparison of the vanilla DQN with the four uncertainty methods performance on Acrobot. All methods use the epsilon-greedy strategy. The vanilla DQN cannot be evaluated on uncertainty tasks.}
    \label{fig:camprison-vanilla-cartpole}
\end{figure}
\begin{figure}
    \centering
    \begin{subfigure}{.7\textwidth}
        \includegraphics[width=\textwidth]{resources/legend-standard.png}
    \end{subfigure}
    \vspace{-3mm}
    
    \begin{subfigure}{.3\textwidth}
        \includegraphics[width=\textwidth]{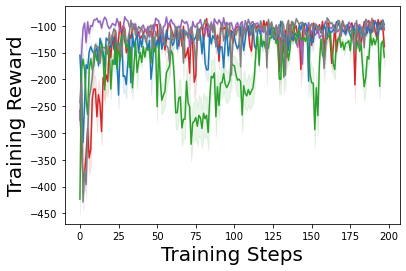}
    \end{subfigure}
    \begin{subfigure}{.3\textwidth}
        \includegraphics[width=\textwidth]{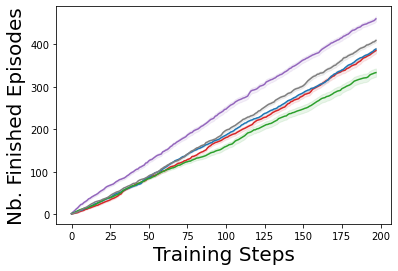}
    \end{subfigure}
    \begin{subfigure}{.3\textwidth}
        \includegraphics[width=\textwidth]{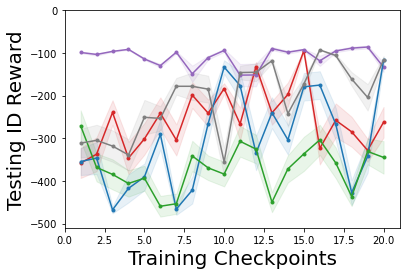}
    \end{subfigure}

        \caption{Comparison of the vanilla DQN with the four uncertainty methods performance on Acrobot. All methods use the epsilon-greedy strategy. The vanilla DQN cannot be evaluated on uncertainty tasks.}
    \label{fig:camprison-vanilla-acrobot}
\end{figure}
\begin{figure}
    \centering
    \begin{subfigure}{.7\textwidth}
        \includegraphics[width=\textwidth]{resources/legend-standard.png}
    \end{subfigure}
    \vspace{-3mm}
    
    \begin{subfigure}{.3\textwidth}
        \includegraphics[width=\textwidth]{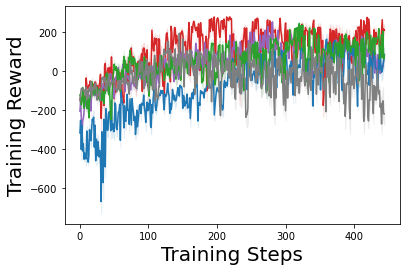}
    \end{subfigure}
    \begin{subfigure}{.3\textwidth}
        \includegraphics[width=\textwidth]{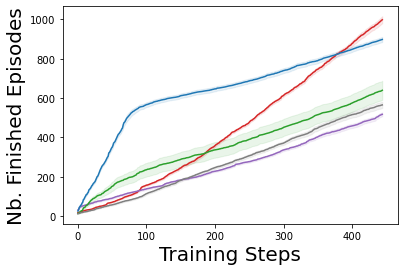}
    \end{subfigure}
    \begin{subfigure}{.3\textwidth}
        \includegraphics[width=\textwidth]{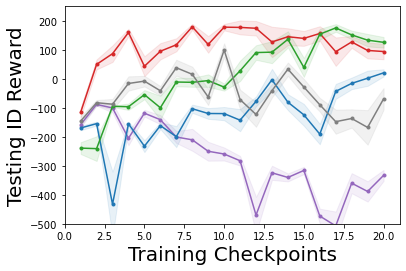}
    \end{subfigure}

        \caption{Comparison of the vanilla DQN with the four uncertainty methods performance on LunarLander. All methods use the epsilon-greedy strategy. The vanilla DQN cannot be evaluated on uncertainty tasks.}
    \label{fig:camprison-vanilla-lunarlander}
\end{figure}

\subsection{Hyperparameter Selection}
\label{app:hyper-parameter-study}

In this section, we present a hyperparameter study for each uncertainty method on the CartPole environment To this end, plot the testing reward and the OOD scores when varying the most important hyper-parameters. we at testing time. We show the hyper-parameter study for dropout when varying the number of samples $n$ and the dropout probability $p$ in fig.~\ref{fig:hyperparameter-dropout-cartpole}. We observe that a higher number of samples achieves a slightly better OOD detection score. Dropout is pretty insensitive to the dropout probability. We show the hyper-parameter study for ensemble when varying the number of networks $n$ in fig.~\ref{fig:hyperparameter-ensemble-cartpole}. While a higher number of networks is supposed to give higher prediction quality \cite{ensembles}, Ensemble looks to give similar results for all number of networks. We show the hyper-parameter study for DKL when varying the number of inducing points $n$, the latent dimension $H$, the kernel type and the batch norm layer in fig.~\ref{fig:hyperparameter-dkl-cartpole}. The batch norm layer appears to improve the results similarly to \cite{postnet}. It facilitates the match between the latent positions output by the encoder and the inducing points. The other hyperparameters consistently show good performances. We show the hyperparameter study for PostNet in fig.~\ref{fig:hyperparameter-postnet-cartpole}. Again, the batch norm layer appears to improve the result stability as observed in \cite{postnet}. It facilitates the match between the latent positions output by the encoder and non-zero density regions learned by the normalizing flows. The other hyperparameters consistently show good performances.

\begin{figure}
\centering
    \begin{subfigure}{.45\textwidth}
        \includegraphics[width=\textwidth]{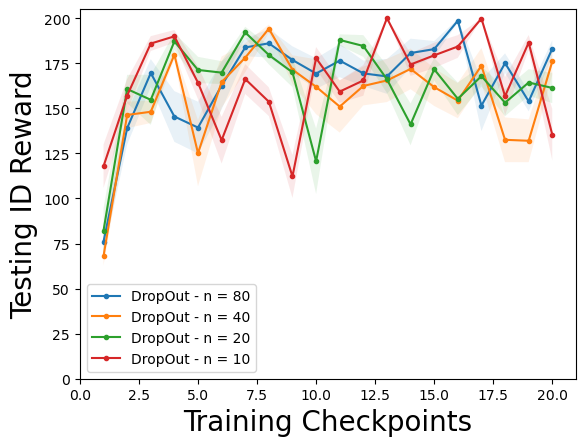}  
    \end{subfigure}
        \begin{subfigure}{.45\textwidth}
        \includegraphics[width=\textwidth]{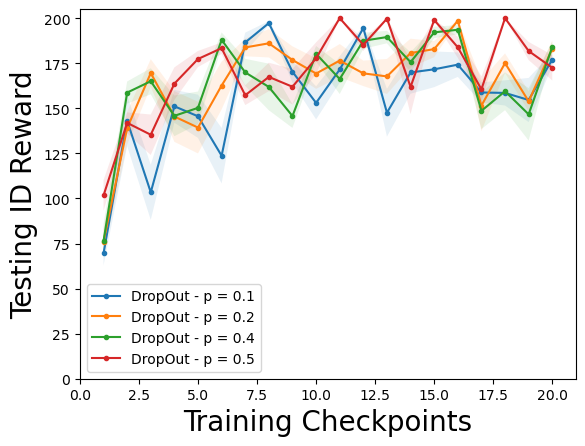}  
    \end{subfigure}
    
    \begin{subfigure}{.45\textwidth}
        \includegraphics[width=\textwidth]{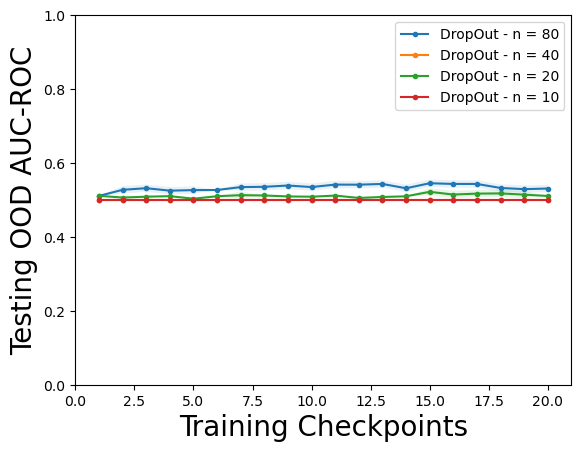}
    \end{subfigure}
        \begin{subfigure}{.45\textwidth}
        \includegraphics[width=\textwidth]{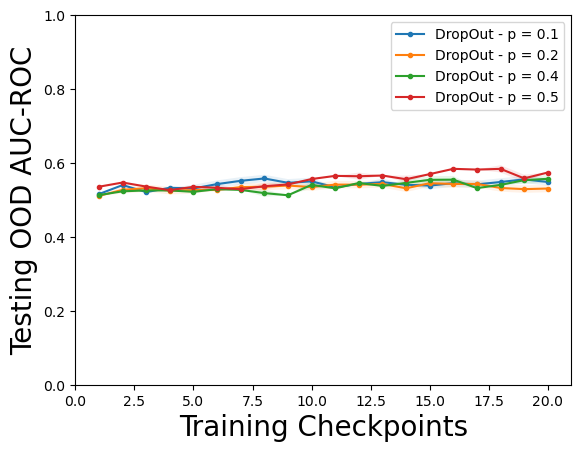}
    \end{subfigure}
        \vspace{-3mm}
    \caption{Hyper-parameter study for DropOut w.r.t. the number of samples $n$ and the dropout probability $p$. Ideally, an uncertainty aware-model should achieve high reward and high OOD detection scores.}
    \label{fig:hyperparameter-dropout-cartpole}
    \vspace{-4mm}
\end{figure}
\begin{figure}
\centering
    \begin{subfigure}{.45\textwidth}
        \includegraphics[width=\textwidth]{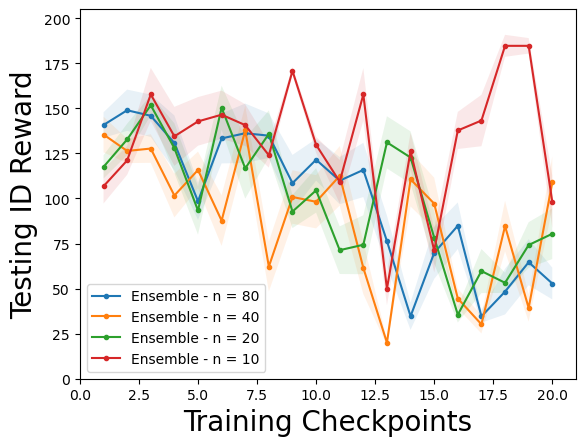}  
    \end{subfigure}
    \begin{subfigure}{.45\textwidth}
        \includegraphics[width=\textwidth]{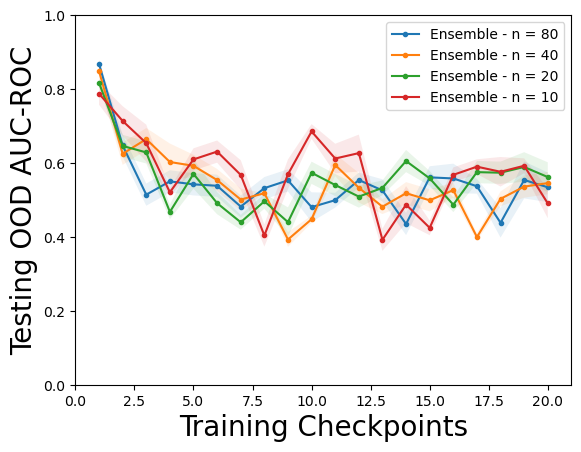}
    \end{subfigure}
        \vspace{-3mm}
    \caption{Hyper-parameter study for Ensemble w.r.t. the number of networks $n$. Ideally, an uncertainty aware-model should achieve high reward and high OOD detection scores.}
    \label{fig:hyperparameter-ensemble-cartpole}
    \vspace{-4mm}
\end{figure}
\begin{figure}
\centering
    \begin{subfigure}{.245\textwidth}
        \includegraphics[width=\textwidth]{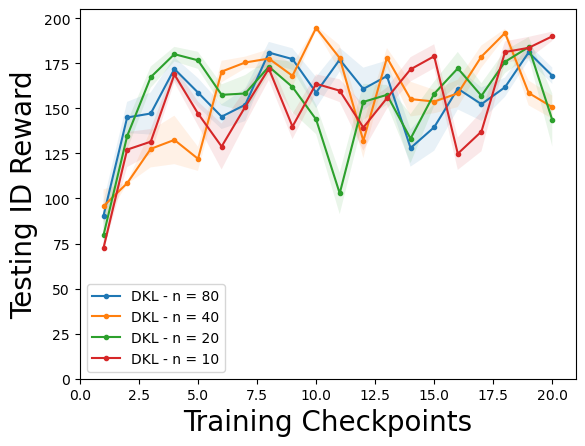}  
    \end{subfigure}
        \begin{subfigure}{.245\textwidth}
        \includegraphics[width=\textwidth]{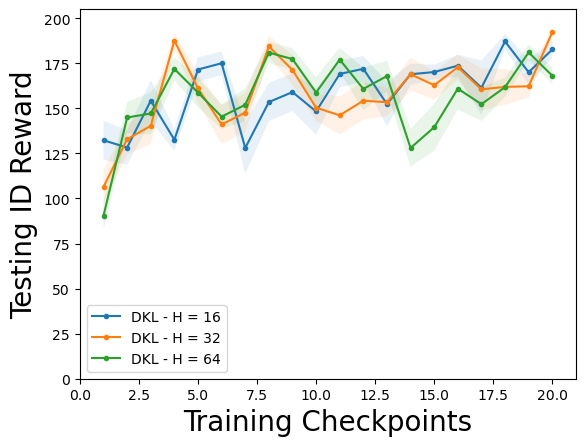}  
    \end{subfigure}
        \begin{subfigure}{.245\textwidth}
        \includegraphics[width=\textwidth]{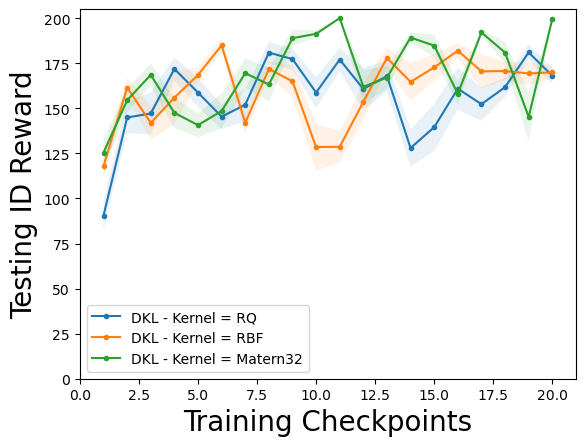}  
    \end{subfigure}
        \begin{subfigure}{.245\textwidth}
        \includegraphics[width=\textwidth]{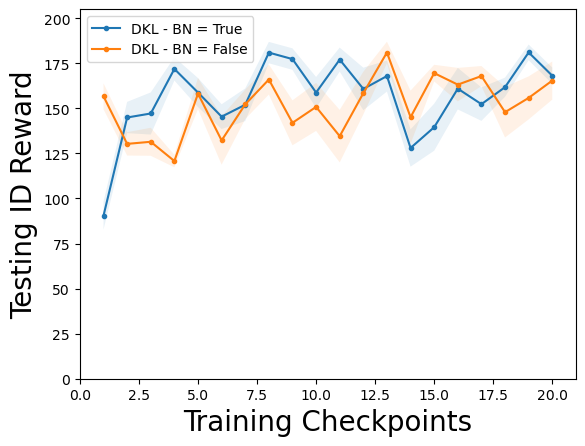}  
    \end{subfigure}
    
    \begin{subfigure}{.245\textwidth}
        \includegraphics[width=\textwidth]{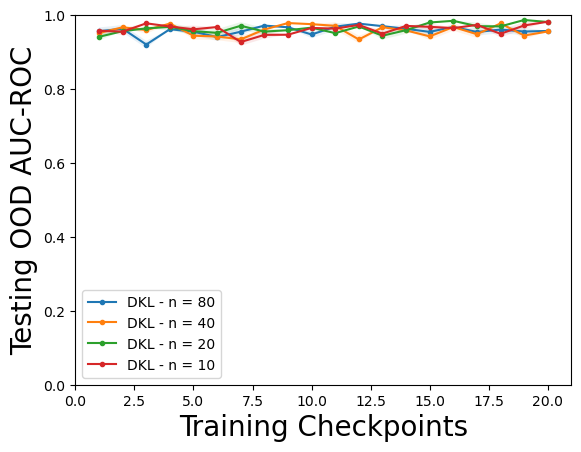}
    \end{subfigure}
        \begin{subfigure}{.245\textwidth}
        \includegraphics[width=\textwidth]{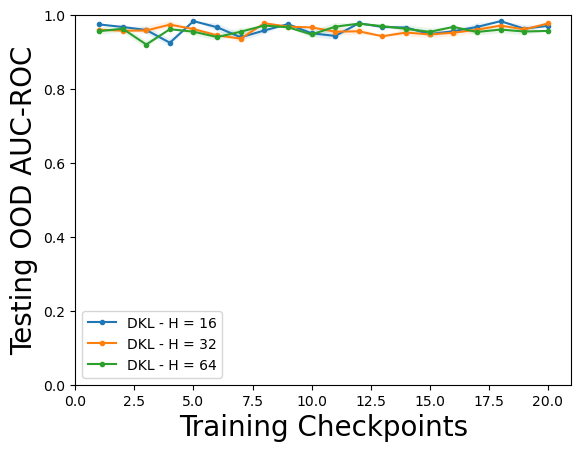}
    \end{subfigure}
    \begin{subfigure}{.245\textwidth}
        \includegraphics[width=\textwidth]{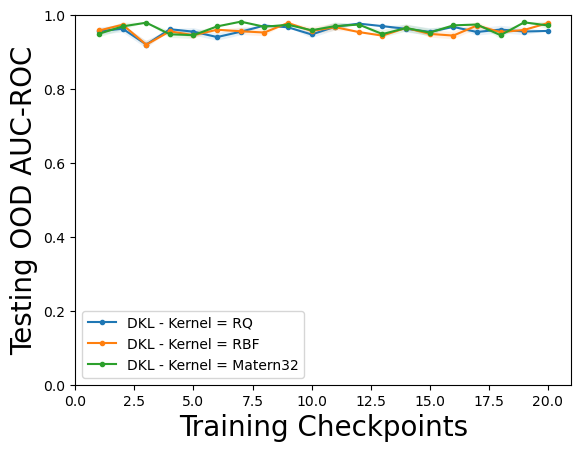}
    \end{subfigure}
        \begin{subfigure}{.245\textwidth}
        \includegraphics[width=\textwidth]{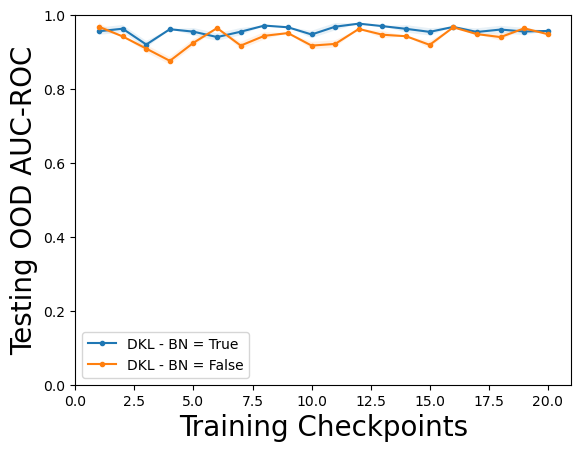}
    \end{subfigure}        
    \vspace{-3mm}
    \caption{Hyper-parameter study for DKL w.r.t. the number of inducing points $n$, the latent dimension $H$, the kernel type and the batch norm layer. Ideally, an uncertainty aware-model should achieve high reward and high OOD detection scores.}
    \label{fig:hyperparameter-dkl-cartpole}
    \vspace{-4mm}
\end{figure}
\begin{figure}
\centering
    \begin{subfigure}{.3\textwidth}
        \includegraphics[width=\textwidth]{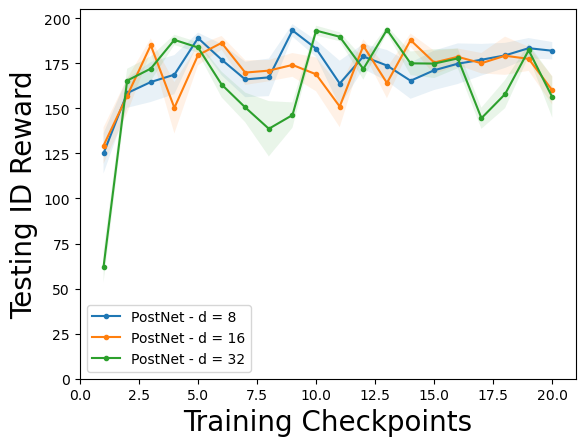}  
    \end{subfigure}
        \begin{subfigure}{.3\textwidth}
        \includegraphics[width=\textwidth]{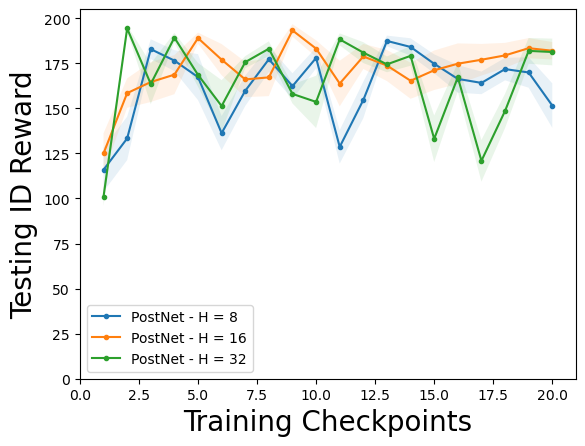}  
    \end{subfigure}
        \begin{subfigure}{.3\textwidth}
        \includegraphics[width=\textwidth]{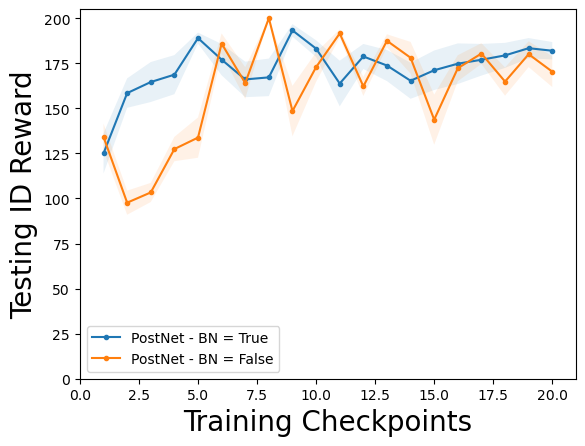}  
    \end{subfigure}
    
    \begin{subfigure}{.3\textwidth}
        \includegraphics[width=\textwidth]{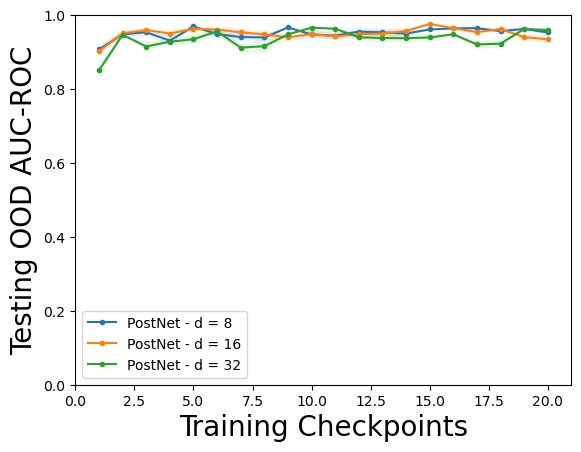}
    \end{subfigure}
    \begin{subfigure}{.3\textwidth}
        \includegraphics[width=\textwidth]{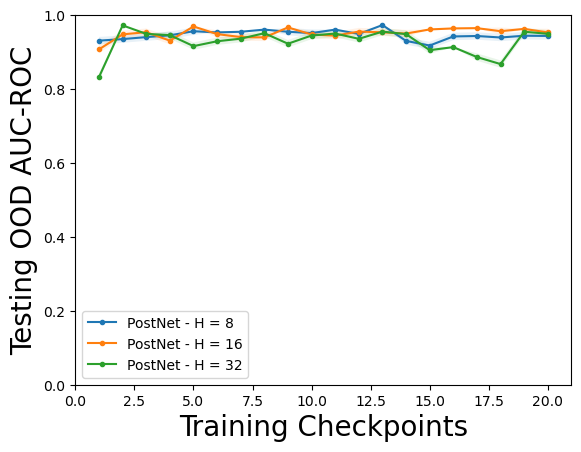}
    \end{subfigure}
        \begin{subfigure}{.3\textwidth}
        \includegraphics[width=\textwidth]{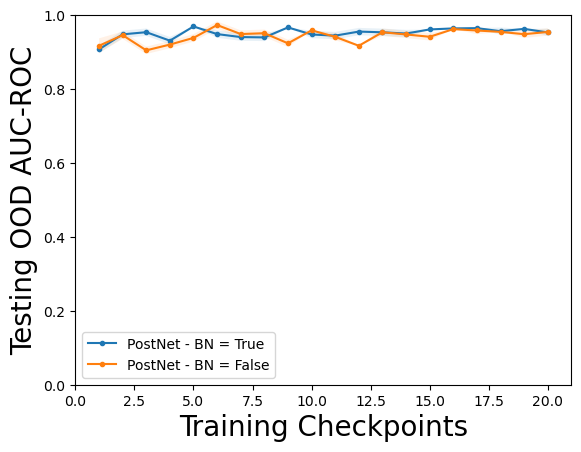}
    \end{subfigure}        
    \vspace{-3mm}
    \caption{Hyper-parameter study for PostNet w.r.t. the flow depth $d$, the latent dimension $H$ and the the batch norm layer. Ideally, an uncertainty aware-model should achieve high reward and high OOD detection scores.}
    \label{fig:hyperparameter-postnet-cartpole}
    \vspace{-4mm}
\end{figure}

\end{document}